\icmltitlerunning{Query Complexity of Adversarial Attacks}
\newcommand{\N}{\mathbb{N}}
\newcommand{\R}{\mathbb{R}}
\newtheorem{theorem}{Theorem}
\newtheorem{lemma}{Lemma}
\newtheorem{observation}{Observation}
\newtheorem{corollary}{Corollary}
\newtheorem{conjecture}{Conjecture}
\newtheorem{remark}{Remark}
\newtheorem{definition}{Definition}
\newcommand{\e}{\epsilon}
\newcommand{\adv}{\mathcal{A}}
\newcounter{mycomment}
\newcommand{\comm}[2]{%
\refstepcounter{mycomment}
{%
    \todo[author = \textbf{#1~\#~\themycomment}, color={red!100!green!35}, fancyline, size = \footnotesize]{%
        #2}%
    }
}
\begin{document}

\twocolumn[
\icmltitle{Query Complexity of Adversarial Attacks}




\begin{icmlauthorlist}
\icmlauthor{Grzegorz Głuch}{to}
\icmlauthor{Rüdiger Urbanke}{to}
\end{icmlauthorlist}

\icmlaffiliation{to}{School of Computer and Communication Sciences, EPFL, Switzerland}

\icmlcorrespondingauthor{Grzegorz Głuch}{grzegorz.gluch@epfl.ch}

\icmlkeywords{Machine Learning, ICML}

\vskip 0.3in
]



\printAffiliationsAndNotice{\icmlEqualContribution} 

\begin{abstract}
There are two main attack models considered in the adversarial robustness literature: black-box and white-box. We consider these threat models as two ends of a fine-grained spectrum, indexed by the number of queries the adversary can ask. Using this point of view we investigate how many queries the adversary needs to make to design an attack that is comparable to the best possible attack in the white-box model. We give a lower bound on that number of queries in terms of entropy of decision boundaries of the classifier. Using this result we analyze two classical learning algorithms on two synthetic tasks for which we prove meaningful security guarantees. The obtained bounds suggest that some learning algorithms are inherently more robust against query-bounded adversaries than others.

\end{abstract}

\section{Introduction}

Modern neural networks achieve high accuracy on tasks such as image classification \citep{howtobecomefamous} or speech recognition \citep{speachrecognition}. However, they are typically susceptible to small, adversarially-chosen perturbations of the inputs \citep{intriguingprop,neuralnetseasilyfooled}: more precisely, given a correctly-classified input $x$, one can typically find a small perturbation $\delta$ such that $x + \delta$ is misclassified by the network while to the human eye this perturbation is not perceptible.

There are two main threat models considered in the literature: black-box and white-box. In the white-box model, on the one hand, the attacker \citep{evassionattackontesttime,bestwhiteboxmadrychallenge} is assumed to have access to a full description of the model. For the case of neural networks that amounts to a knowledge of the architecture and the weights. In the black-box model, on the other hand, the adversary \citep{goodfellowblackbox,zerothblackbox,transferableblackbox,blackboxfirstmadrychallenge,adversarialserviceblackbox} can only observe the input-output behavior of the model. Many defenses have been proposed to date. To mention just some -- adversarial learning \citep{goodfellow2014explaining,madry2018towards,tramer2018ensemble,xiao2018training}, input denoising \citep{dimensionalityreduction,featuresq}, or more recently, randomized smoothing \citep{Lacuyer19,smoothingPearson,smoothingMicrosoft,randompartitionscor}. Unfortunately, most heuristic defenses break in the presence of suitably strong adversaries \citep{bypassing,obfuscatedGradient,breakingDefenses} and provable defenses are often impractical or allow only very small perturbations. Thus a full defense remains elusive. The current literature on this topic is considerable. We refer the reader to \citet{surveyeverything} for an overview of both attacks and defenses and to \citet{blackboxsurvey} for a survey focused on black-box attacks. 

We consider black-box and white-box models as the extreme points of a spectrum parameterized by the number of queries allowed for the adversary. This point of view is related to \citet{limitedqueries} where the authors design a black-box attack with a limited number of queries. Intuitively, the more queries the adversary can make the more knowledge he gains about the classifier. When the number of queries approaches infinity then we transition from a black-box to a white-box model as in this limit the adversary knows the classifying function exactly. Using this point of view we ask:

\begin{center}
\textit{How many queries does the adversary need to make to reliably find adversarial examples?}
\end{center}

By ``reliably'' we mean comparable with the information-theoretic white-box performance. To be more formal, we assume that there is a distribution $\mathcal{D}$ and  a high-accuracy classifier $f$ that maps $\R^d$ to classes $\mathcal{Y}$. The adversary $\adv$ only has black-box access to $f$. Moreover, $\e \in \R_{+}$ is an upper bound on the norm (usually $\ell_p$ norm) of the allowed adversarial perturbation. Assume that $f$ is susceptible to $\e$-bounded adversarial perturbations for an $\eta$-fraction of the underlying distribution $\mathcal{D}$. The quantity $\eta$ is the largest error an adversary, who has access to unbounded computational resources and fully knows $f$, can achieve.
We ask: How many queries to the classifier $f$ does $\adv$ need to make in order to be able to find adversarial examples for say an $\eta/2$-fraction of the distribution $\mathcal{D}$? This question is similar to problems considered in \citet{blackboxcertification}. The difference is that in \citet{blackboxcertification} the authors define the query complexity of the adversary as a function of the number of points for which the adversarial examples are to be found. Moreover, they require the adversary to be perfect, that is to find adversarial examples whenever they exist. This stands in contrast to our approach that only requires the adversary to succeed for say a 1/2 fraction of the adversarial examples. The question we ask is also similar to ideas in \citet{adversarialPAClearning}. In this paper the authors consider a generalization of PAC learning and ask how many queries an algorithm requires in order to learn robustly. Similar questions were also asked in \citet{rademacher} and \cite{robustmoredata}. The difference is that we focus on the query complexity of the attacker and not the defender.

\paragraph{Our contributions.} We introduce a new notion - the \textbf{query complexity} (QC) of adversarial attacks. This notion unifies the two most popular attack models and enables a systematic study of robustness of learning algorithms against query-bounded adversaries. 

Our findings are the following: the higher the entropy of the decision boundaries that are created by the learning algorithm the more secure is the resulting system in our attack model. We first prove a general lower bound on the QC in terms of the entropy of decision boundaries. Then, using this result, we present two scenarios for which we are able to show meaningful lower bounds on the QC. The first one is a simple $2$-dimensional distribution and a nearest neighbor algorithm. For this setting we are able to prove a strong query lower bound of $\Theta(m)$, where $m$ is the number of samples on which the classifier was ``trained''. For the second example we consider the well-known adversarial spheres distribution, introduced in the seminal paper \citet{adversarialSpheres}. For this learning task we argue that quadratic neural networks have a query lower bound of $\Theta(d)$, where $d$ is the dimensionality of the data. We discuss why certain learning algorithms like linear classifiers and also neural networks might be far less secure than KNN against query-bounded adversaries. Finally, we use the lower bound on the QC in terms of entropy to prove, for a broad class of learning algorithms, a security guarantee against query-bounded adversaries that grows with accuracy.  

There exist tasks for which it is easy to find high-accuracy classifiers but finding robust models is infeasible. E.g., in \citet{computationalhardness} the authors describe a situation where it is information-theoretically easy to learn robustly but there is no algorithm in the statistical query model that computes a robust classifier. 
In \citet{computationalhardnessfull} an even stronger result is proven. It is shown that under a standard cryptographic assumption there exist learning tasks for which no algorithm can efficiently learn a robust classifier. Finally, in \citet{robustnessaccuracy} it was shown that robust and accurate classifiers might not even exist. The query-bounded point of view shows a way to address these fundamental difficulties -- even for tasks for which it is impossible to produce a model that is secure against a resource-unbounded adversary, it might be possible to defend against a query-bounded adversary.


\paragraph{Organization of the paper.} In Section~\ref{sec:attackmodel} we formally define the threat model and the query complexity of adversarial attacks. In Section~\ref{sec:easylowerbound} we show that a security guarantee against query-bounded adversaries that grows with accuracy for a rich class of learning algorithms. In Section~\ref{sec:knn} and~\ref{sec:quadraticnetwork} we analyze the query complexity of KNN and Quadratic Neural Network learning algorithms respectively. In Section~\ref{sec:paritions} we present a universal defense against query-bounded adversaries. Finally, in Section~\ref{sec:conclusions} we summarize the results and discuss future directions. We defer most of the proofs to the appendix.

\section{The Query Complexity of adversarial attacks}\label{sec:attackmodel}

We start by formally defining the {\em threat model}. For a {\em data distribution} $\mathcal{D}$ over $\mathbb{R}^d$ and a set $A \subseteq \mathbb{R}^d$ let 
$\mu(A) := \mathbb{P}_{X \sim \mathcal{D}}[X \in A] \text{.}$  For simplicity we consider only {\em separable} binary classification tasks. Such tasks are fully specified by ${\mathcal D}$ as well as a {\em ground truth} $h : \R^d \rightarrow \{-1, 1\}$. 
For a binary classification task with a ground truth $h : 
\mathbb{R}^d \xrightarrow{} \{-1, 1\}$ and a classifier $f : \mathbb{R}^d \xrightarrow{} \{-1, 1\}$ we define the {\em error set} as $E(f) := \{x \in \R^d : f(x) \neq h(x) \} \text{.}$ Note that with this definition it might happen that $E(f) \not\subseteq \text{supp}(\mathcal{D})$. For $x \in \mathbb{R}^d$ and $\epsilon>0$ we write $B_{\e}(x)$ to denote the {\em closed ball} with center $x$ and radius $\e$ and $B_\e$ to denote the {\em closed ball} with center $0$ and radius $\e$. We say that a function $p : \R^d \xrightarrow{} \R^d$ is an {\em $\e$-perturbation} if for all $x \in \R^d$ we have $\| p(x) - x \|_2 \leq \e$. For $n \in \N$ we denote the set $\{1,2,\dots,n\}$ by $[n]$. For $x,y \in \R^d$ we will use $[x,y]$ to denote the closed line segment between $x$ and $y$. For $A,B \subseteq \R^d$ we define $A + B := \{x + y : x \in A, y \in B\}$. We use $m$ to denote the sample size.

\begin{definition}[\textbf{Risk}]
Consider a separable, binary classification task with a ground truth $h : 
\mathbb{R}^d \xrightarrow{} \{-1, 1\}$. For a classifier $f : \mathbb{R}^d \xrightarrow{} \{-1, 1\}$ we define the \textbf{R}isk as  
$R(f) := \mathbb{P}_X [f(X) \neq h(X)] \text{.}$
\end{definition}

\begin{definition}[\textbf{Adversarial risk}] \label{def:adversarialrisk}
Consider a binary classification task with separable classes with a ground truth $h : 
\mathbb{R}^d \xrightarrow{} \{-1, 1\}$. For a classifier $f : \mathbb{R}^d \xrightarrow{} \{-1, 1\}$ and $\epsilon \in \mathbb{R}_{\geq 0}$ we define the \textbf{A}dversarial \textbf{R}isk as:
\[AR(f,\epsilon) := \mathbb{P}_X [ \exists \ \gamma \in B_\epsilon : f(X + \gamma) \neq h(X)] \text{.}\]
an $\e$-perturbation $p$ we define:
\[AR(f,p) :=  \mathbb{P}_X [f(p(X)) \neq h(X)] \text{,}\]
to be the adversarial risk of a specific perturbation function $p$.
\end{definition}


Note: In the literature in the right-hand-side of the adversarial risk definition, the classifier $f(X)$ rather than the ground truth $h(X)$ are used, i.e., $\mathbb{P}_X [ \exists \ \gamma \in B_\epsilon : f(X + \gamma) \neq f(X)]$. Our definition is stricter as it gives an upper bound.
In order to keep the exposition simple, we restrict our discussion to $\ell_2$-bounded adversarial perturbations. Other norms can of course be considered and might be important in practice.

\begin{definition}[\textbf{Query-bounded adversary}] \label{def:queryboundedadversary}
For $\e \in \mathbb{R}_{\geq 0}$ and $f : \R^d \xrightarrow{} \{-1, 1\}$ a $q$-bounded adversary with parameter $\e$ is a deterministic algorithm\footnote{We use {\em algorithm} here since this seems more natural. But we do not limit the attacker computationally nor are we concerned with questions of computability. Hence, {\em function} would be equally correct.} $\adv$ that asks at most $q \in \N$ (potentially adaptive) queries of the form $ f(x) \stackrel{?}{=} 1$ and outputs an $\e$-perturbation $\adv(f) : \R^d \xrightarrow{} \R^d$.
\end{definition}

\begin{definition}[\textbf{Query complexity of adversarial attacks}] \label{def:querycomplexity}
Consider a binary classification task $T$ for separable classes with a ground truth $h : 
\mathbb{R}^d \xrightarrow{} \{-1, 1\}$ and a distribution $\mathcal{D}$. Assume that there is a learning algorithm $\text{ALG}$ for this task that given $S \sim \mathcal{D}^m$ learns a classifier $\text{ALG}(S) : \R^d \xrightarrow{} \{-1, 1\}$. For $\e \in \mathbb{R}_{\geq 0}$ define the Query Complexity of adversarial attacks on $\text{ALG}$ with respect to $(T,m,\e)$ and denote it by $QC(\text{ALG},T,m,\e)$: It is the minimum $q \in \N$ so that there exists a $q$-bounded adversary $\adv$ with parameter $\e$ such that $\mathbb{P}_{S \sim \mathcal{D}^m}$ of the event
\[AR(\text{ALG}(S),\adv(\text{ALG}(S))) \geq \frac{1}{2} AR(\text{ALG}(S),\e) \]
is at least $0.99 \text{.}$
\end{definition}
In words, it is the minimum number of queries that is needed so that there exists an adversary who can achieve an error of half the maximum achievable error (with high probability over the data samples). Note that it follows from Definitions~\ref{def:queryboundedadversary} and \ref{def:querycomplexity} that $\adv$ is computationally unbounded, knows the distribution $\mathcal{D}$ and the ground truth $h$ of the learning task and also knows the learning algorithm $\text{ALG}$. The only restriction that is imposed on $\adv$ is the number of allowed queries. What is important is that $\adv$ does {\em not} know $S$ nor the potential randomness of $\text{ALG}$ (in the generalized setting $\text{ALG}$ can be randomized, see Definition~\ref{def:querycomplexity2}) -- this is what makes the QC non-degenerate. To see this, observe that if $\adv$ knew $S$ and $\text{ALG}$ was deterministic then $\adv$ could achieve $AR(\text{ALG}(S),\adv(\text{ALG}(S))) = AR(\text{ALG}(S),\e)$ without asking any queries. This is because $\adv$ can for every point $x$ check if there exists $\gamma \in B_\epsilon$ such that $\text{ALG}(S)(X + \gamma) \neq h(X)$, as $\adv$ can compute $\text{ALG}(S)$ without asking any queries. This allows $\adv$ to achieve adversarial risk of $AR(\text{ALG}(S),\e)$ (see Definition~\ref{def:queryboundedadversary}).

Defnition~\ref{def:querycomplexity} was guided by experiments. For instance, in \citet{goodfellowblackbox}, in order to attack a neural network $f$ the adversary trains a new neural network $\hat{f}$. This $\hat{f}$ acts as an approximation of $f$. She does so by creating a training set, which is labeled using $f$ and then training $\hat{f}$ on this dataset. The concept of transferability allows the adversary to ensure that $f$ will also misclassify inputs misclassified by $\hat{f}$. This phenomenon is reflected in our definition -- after the initial phase of querying $f$ (and training $\hat{f}$) the adversary no longer has access to $f$. This means that after this phase the adversary implicitly constructed an $\epsilon$-perturbation $p$ as in Definition 2. One can also consider different definitions. For instance ask about the number of queries required to attack a {\em given point} or measure the adversarial risk in {\em absolute terms} (instead of 1/2 of the white-box performance). Both of these questions are valid and are of interest in their own right. For the sake of definiteness we choose what we consider to be one important viewpoint, a viewpoint that is well motivated by experiments.  

Definition~\ref{def:querycomplexity} can be generalized to incorporate randomness in the learning algorithm. 
Intuitively, the randomness in $\text{ALG}$ can increase the entropy of the learning process and that in turn may lead to a higher QC. Further, both the approximation constant (which is chosen to be $1/2$ in Definition~\ref{def:querycomplexity}) as well as the success probability can also be generalized.
%
%
\begin{definition}[\textbf{Query complexity of adversarial attacks - generalized}] \label{def:querycomplexity2}
Consider a binary classification task $T$ for separable classes with a ground truth $h : 
\mathbb{R}^d \xrightarrow{} \{-1, 1\}$ and a distribution $\mathcal{D}$. Assume that there is a \textbf{randomized} learning algorithm $\text{ALG}$ for this task that given $S \sim \mathcal{D}^m$ and a sequence of random bits $B \sim \mathcal{B}$ learns a classifier $\text{ALG}(S,B) : \R^d \xrightarrow{} \{-1, 1\}$. For $\e \in \mathbb{R}_{\geq 0}, \kappa, \alpha \in [0,1]$ define the Query Complexity of the adversarial attacks on $\text{ALG}$ with respect to $(T,m,\e,\alpha,\kappa)$ and denote it by $QC(\text{ALG},T,m,\e,\alpha,\kappa)$: It is the minimum $q \in \N$ such that there exists a $q$-bounded adversary $\adv$ with parameter $\e$ such that $\mathbb{P}_{S \sim \mathcal{D}^m, B \sim \mathcal{B}}$ of the event

\[AR(\text{ALG}(S,B),\adv(\text{ALG}(S,B))) \geq \alpha AR(\text{ALG}(S,B),\e) \]
is at least $1 - \kappa$.

If the above holds for $\adv$ we will refer to $\alpha$ as the \textbf{approximation constant} of $\adv$ and to $\kappa$ as the \textbf{error probability} of $\adv$.
\end{definition}

For the sake of clarity whenever possible we will restrict ourselves for the most part to Definition~\ref{def:querycomplexity}. This eliminates two parameters from our expressions and restricts the attention to deterministic learning algorithms. Only when the distinction becomes important will we refer to Definition~\ref{def:querycomplexity2}.

\paragraph{Summary:} The query complexity of adversarial attacks is the minimum $q$ for which there exists a $q$-bounded adversary that carries out a successful attack. Such adversaries are computationally unbounded, know the learning task and the learning algorithm but \textit{don't} know the training set. 

\section{High-entropy decision boundaries lead to robustness}\label{sec:entropy}

The decision boundary of a learning algorithm applied to a given task can be viewed as the outcome of a random process: (i) generate a training set and, (ii) apply to it the, potentially randomized, learning algorithm. Recall, see Definitions~\ref{def:querycomplexity} and \ref{def:querycomplexity2}, that a query-bounded adversary does not know the sample on which the model was trained nor the randomness used by the learner. This means that if the decision boundary has high entropy then the adversary needs to ask many questions to recover the boundary to a high degree of precision. This suggest that high-entropy decision boundaries are robust against query-bounded adversaries since intuitively it is clear that an approximate knowledge of the decision boundary is a prerequisite for a successful attack.

Next we present a result that makes this intuition formal. Before delving into the details let us explain the intuition behind this approach. Let us recall the set-up. The classifier is trained on a sample $S$ that is unknown to the adversary. This classifier has a particular error set. We say that an adversary succeeds if, after asking some queries, it manages to produce an $\epsilon$-perturbation with the property that this perturbation moves ``sufficient'' mass into the error set of the classifier.  Here, sufficient means at least half of what is possible if the adversary had known the classifier exactly. Let us say in this case that an $\epsilon$-perturbation is {\em consistent} with an error set. 

The following theorem states that if for every $\e$-perturbation the probability that an error set of a classifier is consistent with that perturbation is small then the QC is high. This is true since if for every $\e$-perturbation only a small fraction of probability space (i.e., the possible classifiers) is consistent with this perturbation then $\adv$'s protocol has to return many distinct $\e$-perturbations depending on the outcome of its queries. And to distinguish which perturbation it should return it has to ask many queries. 

\begin{restatable}{theorem}{thmreduction}[\textbf{Reduction.}]
\label{thm:reduction}
Let $\e \in \mathbb{R}_{\geq 0}$ and let $T$ be a binary classification task on $\R^d$ with separable classes. Let $ALG$ be a randomized learning algorithm for $T$ that uses $m$ samples. Then for every $\kappa \in [0,1]$ the following holds:
\begin{align*}
&QC(ALG,T,m,\e,1/2,\kappa) \\
&\geq \log \left(\frac{1 - \kappa}{\sup_{p : \text{ $\e$-perturbation}} \mathbb{P}_{S \sim \mathcal{D}^m, B \sim \mathcal{B}} \left[ \mathcal{E}(S,B,p) \right]} \right) \text{,}
\end{align*}
where the event $\mathcal{E}(S,B,p)$ is defined as:
$$ \mu(p^{-1}(E(ALG(S,B)))) \geq \frac{AR(ALG(S,B),\e)}{2} \text{.}$$
\end{restatable}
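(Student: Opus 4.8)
The plan is to argue directly from the definition of query complexity, exploiting the fact that a $q$-bounded adversary is a \emph{deterministic} algorithm making at most $q$ \emph{binary} queries, so it can realize only a bounded number of distinct attack strategies.

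First I would unwind the event $\mathcal{E}(S,B,p)$. For any classifier $f$ and any $\e$-perturbation $p$ we have $AR(f,p) = \mathbb{P}_X[f(p(X)) \neq h(X)] = \mathbb{P}_X[p(X) \in E(f)] = \mu(p^{-1}(E(f)))$. Hence $\mathcal{E}(S,B,p)$ is precisely the event that the \emph{fixed} perturbation $p$ attains at least half of the white-box adversarial risk of $ALG(S,B)$, i.e. $AR(ALG(S,B),p) \geq \tfrac12 AR(ALG(S,B),\e)$. Next, let $q := QC(ALG,T,m,\e,1/2,\kappa)$ and let $\adv$ be a $q$-bounded adversary with parameter $\e$ witnessing this value, so that
\[
\mathbb{P}_{S \sim \mathcal{D}^m,\, B \sim \mathcal{B}}\big[\, AR(ALG(S,B),\adv(ALG(S,B))) \geq \tfrac12 AR(ALG(S,B),\e)\,\big] \geq 1-\kappa .
\]

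The core combinatorial step: since $\adv$ is deterministic and each of its (adaptively chosen) queries returns a single bit, the map $f \mapsto \adv(f)$ factors through the transcript of received answers, which is a binary string of length at most $q$; the first query point is fixed, and every later query point as well as the final output $\adv(f)$ depend only on the previously received bits. Consequently $\adv$ outputs at most $2^q$ distinct $\e$-perturbations, say $p_1,\dots,p_N$ with $N \le 2^q$. For any realization $(S,B)$, writing $f = ALG(S,B)$, we have $\adv(f) = p_j$ for some $j \in [N]$, so the success event for $(S,B)$ is contained in $\bigcup_{j=1}^N \mathcal{E}(S,B,p_j)$. A union bound then gives
\[
1-\kappa \;\le\; \mathbb{P}_{S,B}\Big[\textstyle\bigcup_{j=1}^N \mathcal{E}(S,B,p_j)\Big] \;\le\; \sum_{j=1}^N \mathbb{P}_{S,B}[\mathcal{E}(S,B,p_j)] \;\le\; 2^q \sup_{p}\, \mathbb{P}_{S,B}[\mathcal{E}(S,B,p)] ,
\]
and rearranging and taking logarithms base $2$ yields the claim; the degenerate case where the supremum is $0$ forces $\kappa = 1$, making the inequality trivial.

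The argument is short, and there is no single hard obstacle; the only place needing care is the claim that a deterministic adaptive $q$-query algorithm realizes at most $2^q$ output functions. This is the standard decision-tree observation, but it should be spelled out, and in particular one must make explicit that $\adv$ sees only oracle access to $ALG(S,B)$ and not $S$ or $B$ themselves — this is exactly what licenses the factoring-through-the-transcript step and, as the paper already notes, what keeps the quantity non-degenerate.
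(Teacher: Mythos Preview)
Your proof is correct and shares the same core idea as the paper's: represent the deterministic $q$-query adversary as a depth-$q$ binary decision tree, observe that it can output at most $2^q$ distinct $\e$-perturbations, and bound the success probability in terms of $\sup_p \mathbb{P}[\mathcal{E}(S,B,p)]$. The only difference is packaging: you finish with a direct union bound over the $\le 2^q$ leaves, whereas the paper conditions on the success event, defines a random variable $W$ indicating the leaf reached, and invokes the source-coding inequality $q \ge H(W) \ge -\log \max_i \mathbb{P}[W=i]$ to reach the same conclusion. Your route is more elementary and slightly cleaner; the paper's phrasing has the advantage of making explicit the ``entropy of decision boundaries'' interpretation that motivates the surrounding discussion, but the two arguments are algebraically equivalent.
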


\begin{remark}
For the sake of clarity and consistency with the standard setup we fixed the approximation constant to be equal $1/2$. We note however, that Theorem~\ref{thm:reduction} (and its proof) is also true for all approximation constants.
\end{remark}

\paragraph{Summary:} Theorem~\ref{thm:reduction} is a key ingredient in most of our results. It gives a lower bound on the QC in terms of a geometric-like notion of entropy of error sets. This is often much easier to compute than to analyze the inner workings of a particular learning algorithm.

\section{Entropy due to locality -- K-NN algorithms}\label{sec:knn}

Let us now analyze the QC of $K$-Nearest Neighbor (K-NN) algorithms. Nearest neighbor algorithms are among the simplest and most studied algorithms in machine learning. They are also widely used as a benchmark.  
It was shown in \citet{knnbound} that for a sufficiently large training set, the risk of the $1$-NN learning rule is upper bounded by twice the optimal risk. It is also known that these methods suffer from the "curse of dimensionality" -- for $d$ dimensional distributions they typically require $m = 2^{\Theta(d \log(d))}$ many samples. That is why in practice one often uses some dimensionality reduction subroutine before applying $K$-NN. Moreover, K-NN techniques are one of the few learning algorithms that do not require any learning. In the naive implementation all computation is deferred until function evaluation. This is related to the most interesting fact from our perspective, namely that \textbf{the classification rule of the K-NN algorithm depends only on the local structure of the training set}. 

We argue that this property makes K-NN algorithms secure against query-bounded adversaries. Intuitively, if the adversary $\adv$ wants to achieve a high adversarial risk she needs to understand the global structure of the decision boundary. But if the classification rule is only very weakly correlated between distant regions of the space then this intuition suggests that $\adv$ may need to ask $\Theta(m)$ many queries to guarantee a high adversarial risk. This is consistent with the entropy point of view.
Moreover there are experimental results (see
\citet{knnattack,papernottransferability}) that show that it is hard to attack K-NN classifiers in the black-box model.

We make these intuitions formal in the following sense. We design a synthetic binary learning task in $\R^2$, where the data is uniformly distributed on two parallel intervals -- correspondiing to the two classes. We then show a $\Theta(m)$ lower bound for the QC of $1$-NN algorithm for this learning task. This means that the number of queries the adversary needs to make to attack $1$-NN is proportional to the number of samples on which the algorithm was "trained". We conjecture that a similar behavior occurs in higher dimensions as well. 

\subsection{K-NN -- QC lower bounds}

Consider the following distribution. Let $m \in \N$ and $z \in \R_{+}$. Let $L_-,L_+ \subseteq \R^2$ be two parallel intervals of length $m$ placed at distance $z$ apart. More formally, $L_- := [(0,0), (m,0)]$, 
$L_+ = [(0,z), (m,z)]$.
Let the binary learning task $T_{\text{intervals}}(z)$ be as follows. We generate $\bar{x} \in \R^2$ uniformly at random from the union $L_- \cup L_+$. We assign the label $y = -1$ if $\bar{x} \in L_-$ and $y = +1$ otherwise. In Figure~\ref{fig:decisionboundaryknn} we visualize a decision boundary of the $1$-NN algorithm for a random $S \sim \mathcal{D}^m$ on the $T_{\text{intervals}}$. Horizontal lines, black and gray, represent the two classes, crosses are data points, white and gray regions depict the classification rule and the union of red intervals is equal to the error set. We also include more visualizations in the appendix. The main result of this subsection is:

\begin{restatable}{theorem}{thmknn}
\label{thm:knn}
There exists a function $\lambda : \R^+ \xrightarrow[]{} (0,1)$ such that the $1$-Nearest Neighbor ($1$-NN) algorithm applied to the learning task $T_{\text{intervals}}(z)$ satisfies:
\[
QC(1\text{-NN},T_{\text{intervals}}(z),2m,z/10,1-\lambda,0.1) \geq \Theta(m) \text{,}\]
provided that $z = \Omega(1)$.
\end{restatable}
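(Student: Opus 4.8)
The plan is to derive the bound from the Reduction Theorem (Theorem~\ref{thm:reduction}), used in the form valid for an arbitrary approximation constant (cf.\ the remark following it) with approximation constant $1-\lambda$ and error probability $\kappa = 0.1$; since $1\text{-NN}$ is deterministic the random bits are vacuous. It then suffices to exhibit a constant $c = c(z) > 0$ such that for \emph{every} $(z/10)$-perturbation $p$,
\[
\mathbb{P}_{S \sim \mathcal{D}^{2m}}\!\left[\, \mu\big(p^{-1}(E(1\text{-NN}(S)))\big) \;\ge\; (1-\lambda)\cdot AR\big(1\text{-NN}(S),\, z/10\big)\,\right] \;\le\; 2^{-cm}\,.
\]
Plugging this into Theorem~\ref{thm:reduction} yields $QC \ge \log\!\big(0.9\cdot 2^{cm}\big) = \Omega(m)$, which gives the stated lower bound; the matching $O(m)$ upper bound, showing tightness, follows from the naive attack that queries $f$ on an $O(z)$-spaced grid along both intervals.

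To prove the displayed inequality I would first describe $E(1\text{-NN}(S))$ geometrically. Writing $a_1 < \cdots < a_k$ for the $x$-coordinates of the samples that landed on $L_-$ (and symmetrically for $L_+$), a point at height close to $0$ is classified $+1$ exactly when its nearest $L_-$-sample is farther than its nearest $L_+$-sample; since every point of $L_+$ is at distance $\ge z$, this forces the point deep inside a gap $[a_i,a_{i+1}]$ of length at least some $c_0 z$. Hence near $L_-$ the error set is a disjoint union of short strips, one per such ``large gap'', whose positions are explicit functions of the local sample configuration, and $X=(t,0)$ is white-box attackable with parameter $z/10$ iff it lies within $z/10$ of one of these strips — here one uses that any useful perturbation of a point on $L_-$ moves it toward $L_+$ while keeping distance $\ge 9z/10$ from $L_+$, so attackability is governed entirely by the absence of $L_-$-samples in a horizontal window of width $\Theta(z)$ (up to the high-probability event that $L_+$ has no unusually large gap in the relevant window). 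Since the projections onto $L_-$ of the $2m$ samples behave like a rate-$\Theta(1)$ point process on $[0,m]$, large gaps occur at a constant density $e^{-\Theta(z)}$, so there are $\Theta(m)$ of them and their contributions to $AR$ are bounded and essentially independent across the $\Theta(m)$ unit sub-intervals of $[0,m]$. A Chernoff/Hoeffding argument (after conditioning on the $2^{-\Omega(m)}$-likely event that every block has a well-behaved sample count) then shows $AR(1\text{-NN}(S), z/10) = \eta(z)\,(1\pm o(1))$ with probability $1-2^{-\Omega(m)}$, for a constant $\eta(z) \in (0,1)$; here $z = \Omega(1)$ is essential, as it is what makes the error set genuinely sparse and $\eta(z)$ bounded away from $1$.

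The heart of the proof is the claim that there is an \emph{absolute} constant $\beta < 1$ with $\mathbb{E}_S\big[\mu(p^{-1}(E(1\text{-NN}(S))))\big] \le \beta\,\eta(z)$ for every $(z/10)$-perturbation $p$. By Fubini it is enough to find such a $\beta$ with, for (almost) every $X$ on $L_-$,
\[
\mathbb{P}_S\big[\, p(X) \in E(1\text{-NN}(S)) \mid X \text{ is white-box attackable}\,\big] \;\le\; \beta
\]
(the point being that $\{p(X)\in E(1\text{-NN}(S))\}$ already entails that $X$ is attackable, so summing over $X$ gives $\mathbb{E}_S[\mu(p^{-1}(E))] \le \beta\,\mathbb{E}_S[AR] = \beta\,\eta(z)$). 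To establish the conditional bound I would condition additionally on \emph{which} large gap of $L_-$ causes the attack and on the approximate location of its left endpoint; even after this conditioning the exact endpoints $a_i, a_{i+1}$ retain a continuous (essentially uniform) distribution over intervals of length $\Theta(1)$, so the error strip is ``smeared'' across a sub-window of $X + B_{z/10}$ and the single fixed point $p(X)$ lands inside the random strip with conditional probability bounded away from $1$. I expect this smearing step to be the main obstacle: one must pin down the conditional law of the responsible gap's endpoints precisely enough to extract an explicit $\beta<1$ that is uniform in $X$ and $p$, and to absorb the lower-order effects of large gaps of $L_+$ and of boundary blocks.

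Granting the claim, set $\lambda := (1-\beta)/2 \in (0,1)$. For a fixed $p$, the quantity $\mu(p^{-1}(E(1\text{-NN}(S))))$ is, like $AR$, a normalized sum of bounded contributions that are essentially independent across the unit blocks of $[0,m]$ — a $(z/10)$-perturbation displaces a block by only $O(z)$, so each block's contribution depends on the samples in $O(1)$ neighbouring blocks — hence it concentrates around its mean, which is at most $\beta\,\eta(z)$ by the claim. On the high-probability event $AR \ge \eta(z)(1-o(1))$, the event in the display of the first paragraph forces $\mu(p^{-1}(E)) \ge (1-\lambda)(1-o(1))\,\eta(z) > \beta\,\eta(z)$, i.e.\ a constant-size upward deviation of $\mu(p^{-1}(E))$ above its mean, which by Hoeffding has probability $2^{-\Omega(m)}$ uniformly over $p$. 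Combining this with $\mathbb{P}_S[AR < \eta(z)(1-o(1))] \le 2^{-\Omega(m)}$ gives the required inequality, and Theorem~\ref{thm:reduction} then yields $QC \ge \Omega(m)$, completing the proof.
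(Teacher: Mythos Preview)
Your overall strategy matches the paper's: invoke the reduction theorem with approximation constant $1-\lambda$, show that $AR$ and $\sup_p \mu(p^{-1}(E))$ each concentrate at rate $2^{-\Omega(m)}$ around constants that differ by a factor strictly below $1$, and read off $QC \ge \Omega(m)$. The parabolic geometry of the error set inside each large gap of the training points is also the correct picture, and is exactly what the paper uses.

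The proposal is incomplete precisely where you flag it: the $\beta<1$ step. The smearing heuristic is the right intuition, but the paper does not argue via the conditional law of the gap endpoints given attackability---doing so cleanly would be awkward, since after conditioning further on the responsible gap being long the hit probability of any fixed $p(X)$ is close to $1$, so the deficit only reappears after re-averaging over the (exponential) gap-length distribution. Instead the paper computes the two expected rates directly and compares their leading constants. It bounds the error region between consecutive $L_+$-samples $x_i^+,x_{i+1}^+$ by the intersection of the parabolic sublevel sets $\overline P_{x_i^+}, \overline P_{x_{i+1}^+}$ and shows that for \emph{any} fixed $(z/10)$-perturbation $p$ the expected mass moved into this region per gap is at most $(1+o(1))\,e^{-4\sqrt 5 z/5}$; for $AR$ it uses parabolas shifted inward by an amount $\rho$ of order $z^{-1/2}$, shows the shifted region lies in $E$ with high probability (this is where one needs a nearby sample on the other line---your ``$L_+$ has no unusually large gap'' caveat), and computes that its $(z/10)$-neighbourhood on $L_+$ has expected measure at least $(1+\epsilon(z))\,e^{-4\sqrt 5 z/5}$ per gap, with $\epsilon(z)>0$ extracted from an explicit integral. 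That strict inequality is the whole content of the theorem; the function $\lambda$ is then set from $\epsilon$.

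Two smaller points. First, the concentration for $\mu(p^{-1}(E))$ is more delicate than Hoeffding over unit blocks: a single gap can span arbitrarily many blocks, so the per-block contributions are not $O(1)$-dependent as you assert. The paper instead decomposes by \emph{gap index}, partitions the gaps into $k$ arithmetic progressions (with $k=k(z)$), bounds the conditional expectation of each term given all earlier sample positions uniformly, and runs a Chernoff-type argument on each of the $k$ subsums. Second, the paper replaces the i.i.d.\ sample $\mathcal D^{2m}$ by two independent rate-$1$ Poisson processes on $L_-$ and $L_+$; this makes the inter-sample spacings genuinely i.i.d.\ exponential and all of the above integrals tractable, and is worth adopting from the outset.
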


\begin{figure*}
  \centering
  \includegraphics[width=1\textwidth]{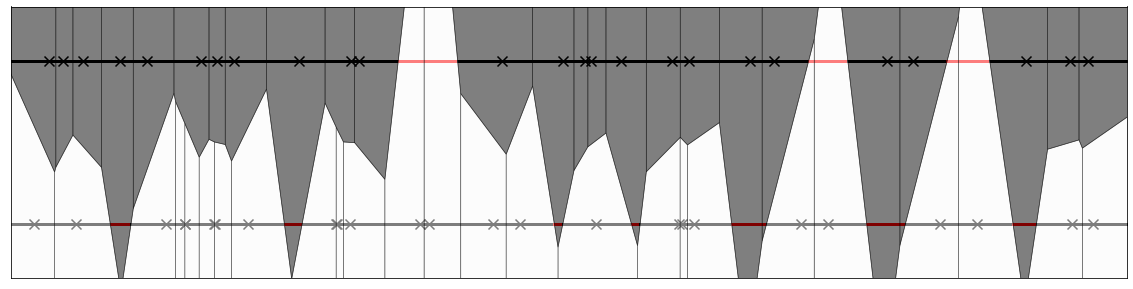}
  \caption{A random decision boundary of $1$-NN for $T_{\text{intervals}}$.}
  \label{fig:decisionboundaryknn}
\end{figure*}

\paragraph{Summary:} The K-NN algorithm learns classification rules that depend only on the local structure of the data. This implies high-entropy decision boundaries, which in turn leads to robustness against query-bounded adversaries. The QC of 1-NN scales at least linearly with the size of the training set.

\section{Entropy due to symmetry - Quadratic Neural Networks}\label{sec:quadraticnetwork}


In this section we analyze the QC of Quadratic Neural Networks (QNN) applied to a learning task defined in \citet{adversarialSpheres}. Let $S_r^{d-1} := \{ x \in \R^d : \|x\|_2 = r\}$. The distribution $\mathcal{D}$ is defined by the following process: generate $x \sim U[S_1^{d-1}]$ and $b \sim U\{-1,1\}$ (where $U$ denotes the uniform distribution). If $b = -1$ return $(x,-1)$, otherwise return $(1.3 x,+1)$. The associated ground truth is defined as $h(x) = -1$ for $x \in \R^d, \|x\|_2 \leq 1.15$ and $h(x) = 1$ otherwise .

The QNN is a single hidden-layer network where the activation function is the quadratic function $\sigma(x) = x^2$. There are no bias terms in the hidden layer. The output node computes the sum of the activations from the hidden layer, multiplies them by a scalar and adds a bias. If we assume that the hidden layer contains $h$ nodes then the network has $d \cdot h + 2$ parameters. It was shown in \citet{adversarialSpheres} that the function that is learned by QNN has the form
$y(x) = \sum_{i=1}^d \alpha_i z_i^2 - 1 \text{,}$
where the $\alpha_i$'s are scalars that depend on the parameters of the network and $z = M(x)$ for some orthogonal matrix $M$. The decision boundary is thus $\sum_{i=1}^d \alpha_i z_i^2 = 1$, which means that it is an ellipsoid centered at the origin.

In a series of experiments performed for the Concentric Spheres (CS) dataset in \citet{adversarialSpheres} it was shown that a QNN trained with $N = 10^6$ many samples with $d = 500$ and $h = 1000$ learns a classifier with an estimated error of approximately $10^{-20}$ but the adversarial risk $\eta$ is high and is estimated to be $1/2$ when $\e \approx 0.18$. On the theoretical side, it was proven in \citet{randompartitionscor} (see Section 9.1) that
\begin{equation}\label{eq:bestepsilon}
\e \leq O\left(\frac{\log(\eta/\delta)}{d} \right) \text{.}
\end{equation}
In words, \eqref{eq:bestepsilon} gives an upper bound on the biggest allowed perturbation $\e$ in terms of the risk $\delta$ the adversarial risk $\eta$ and the dimension $d$. In particular if we want the classifier to be adversarially robust for $\e = \Theta(1)$ (that is for perturbations comparable with the separation between the two classes) then $\delta = 2^{-\Omega(d)}$. Even robustness of only $\e = \Theta(1/\sqrt{d})$ requires the risk to be as small as $\delta = 2^{-\Omega(\sqrt{d})}$. These results paint a bleak picture of the adversarial robustness for CS.  

The QC point-of-view is more optimistic. 
Using results from Section~\ref{sec:easylowerbound} we first show that if the network learns classifiers with risk $2^{-\Omega(k)}$ then it automatically leads to a lower bound on the QC of $\Theta(k)$. Moreover, for a simplified model of the network, we show that even if the risk of the learned classifier is only a small constant, say $0.01$, then this results in a lower bound on the QC of $\Theta(d)$ for perturbations of $\Theta(1/\sqrt{d})$. Using \eqref{eq:bestepsilon} our result guarantees security against $\Theta(d)$-bounded adversaries for perturbations which are $\Theta(\sqrt{d})$ times bigger than the best possible against unbounded adversaries. This shows that restricting the power of the adversary can make a significant difference.

We argue that the obtained $\Theta(d)$ lower bound is close to the real QC for this algorithm and learning task. Observe that the decision boundary of the network is an ellipsoid which can be described by $O(d^2)$ parameters ($d^2$ for the rotation matrix and $d$ for lengths of principal axes). This suggest that it should be possible to design a $O(d^2)$-bounded adversary that succeeds on this task. Assuming that this is indeed the case, our lower bound is only a factor $O(d)$ away from the optimum. 

The results of this section can be understood in the following way. 
The relatively simple structure of the decision boundaries allows the adversary to attack the model with only $O(d^2)$ queries. There is however enough entropy in the network to guarantee a lower bound for the QC of $\Theta(d)$. This entropy intuitively comes from the rotational invariance of the dataset and in turn of the learned decision boundary. We conjecture that algorithms like linear classifiers (e.g., SVMs) exhibit a similar behavior. That is, for natural learning tasks they are robust against $q$-bounded adversaries only for $q = O(\text{poly}(d))$. The reason is that all these algorithms generate classifiers with relatively simple decision boundaries which can be described by $O(\text{poly}(d))$ parameters.

But this is not the end of the story for CS. Our results don't preclude the possibility that there exist a learning algorithm that is secure against $q$-bounded adversaries for $q \gg d$. 
In fact in Section~\ref{sec:paritions} we present an off-the-shelf solution that can be applied to CS dataset and which, by injecting entropy, achieves security against $k$-bounded adversaries for $\e = \Theta \left(\frac{1}{k \sqrt{d}} \right)$.

\subsection{Quadratic Neural Networks -- QC lower bounds}

Using the results from Section~\ref{sec:easylowerbound} one can show that increased accuracy leads to increased robustness. More precisely if QNN has a risk of $2^{-\Omega(k)}$ then it is secure against $\Theta(k)$-bounded adversaries for $\e = \Theta(1)$. The proof of this fact is deferred to the appendix.

Now we argue that also in the case where the risk achieved by the network is as large as a constant then QNN are still robust against $\Theta(d)$-bounded adversaries. We first argue that any reasonable optimization algorithm applied to QNN for the CS learning task gives rise to a distribution on error sets that is rotational invariant. This follows from the fact that $\mathcal{D}$ itself is rotational invariant. Now observe that for QNN the error sets are of the form: 
$\left\{x \in \R^d : \|x\|_2 \leq 1.15,  \sum_{i=1}^d \alpha_i z_i^2 > 1 \right\} \cup \left\{x \in \R^d : \|x\|_2 > 1.15, \sum_{i=1}^d \alpha_i z_i^2 < 1 \right\} \text{,}$ as the decision boundary learned by QNN is defined by $\sum_{i=1}^d \alpha_i z_i^2 = 1$, where $z = Mx$ for some orthonormal matrix $M$. These sets might be quite complicated as they are basically defined as the set difference of a ball and an ellipsoid. We will refer to the real distribution on error sets of QNN as $\mathcal{E}_{\text{QNN}}$. We assume that the standard risk of classifiers learned by the QNN is concentrated around a constant $\delta$.

Intuitively a ``complicated'' (high entropy) distribution on error sets results in a high QC and a ``simple'' (low entropy) distribution results in a low QC. In the rest of this section we first introduce a set of ``simple'', artificial distributions over error sets and then we state QC lower bounds for these distributions. Formal definitions are presented in Definition {\em Distributions on Spherical Caps} in the appendix, here we give a short description of what they are. For $y \in S_1^{d-1}$ let $\text{cap}(y,r,\tau) := B_r \cap \{x \in \R^d : \langle x,y \rangle \geq \tau \}$. Let $\tau : [0,1] \xrightarrow{} [0,1]$ be such that for every $\delta \in [0,1]$ we have $\nu(\text{cap}(\cdot,1,\tau(\delta))) / \nu(S_1^{d-1}) =  \delta$, where $\nu$ is a $d-1$ dimensional measure on the sphere $S_1^{d-1}$. For $k \in \N_+$ let:
\[E_-(k) := \text{cap}(e_1, 1.15, \tau(\delta/k)) \setminus B_{1.15/1.3} \]
\[E_+(k) := \text{cap}(e_1, 1.495, 1.3\tau(\delta/k)) \setminus B_{1.15} \text{,}\]
where $e_1$ is a standard basis vector. Note that for every $k$ we have $1.3 \cdot E_-(k) = E_+(k)$.
For $\delta \in (0,1), k \in \N_+$ the distributions are: $\text{Cap}(\delta)$ - randomly rotated either $E_-(1)$ or $E_+(1)$, chosen uniformly at random; $\text{Cap}_k^{i.i.d}(\delta)$ - union of $k$ i.i.d. randomly rotated sets each either $E_-(k)$ or $E_+(k)$, chosen uniformly at random; $\text{Cap}_k^{\mathcal{G}}(\delta)$ - $k$ randomly rotated sets, each either $E_-(k)$ or $E_+(k)$, chosen uniformly at random; the relative positions of cap's normal vectors are determined by $\mathcal{G}$, where $\mathcal{G}$ is a given distribution on $(S_1^{d-1})^k$.

We conjecture that $\text{Cap}(0.01)$, $\text{Caps}_d^{\text{i.i.d.}}(0.01)$, $\text{Caps}_d^{\mathcal{G}}(0.01)$ have QCs that are no larger than the QC of $\mathcal{E}_{\text{QNN}}$ that achieves standard risk $0.01$. The intuitive reason is that they contain less entropy than $\mathcal{E}_{\text{QNN}}$ and so it should be easier to attack these distributions. In Lemma~\ref{lem:lwrbndforcap} we prove a $\Theta(d)$ lower bound for Cap$(0.01)$ and, in the appendix, we give a matching upper bound of $\Theta(d)$. Also in the appendix, we give two reductions that lower-bound QC of $\text{Caps}_d^{\text{i.i.d.}}$ and $\text{Caps}_d^{\mathcal{G}}$ based on a conjecture (see {\em $\text{Cap}$ conjecture} in the appendix). We summarize the proved and conjectured lower bounds in Table~\ref{tab:sphereslwrbnds}.

\begin{restatable}[\textbf{Lower bound for $\text{Cap}$}]{lemma}{lwrbndforcap}
\label{lem:lwrbndforcap}
There exists $\lambda > 0$ such that if a $q$-bounded adversary $\adv$ succeeds on $\text{Cap}(0.01)$ with approximation constant  $ \geq 1 - \lambda$, error probability $2/3$ for $\e = \tau(0.01)$. Then
\[q \geq \Theta(d) \text{.}\]
\end{restatable}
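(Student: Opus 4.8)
The plan is to apply Theorem~\ref{thm:reduction} (in the form of the Remark following it, with approximation constant $\alpha = 1-\lambda$ in place of $1/2$) to the randomized ``algorithm'' that ignores its training sample and simply outputs the classifier $f_E$ whose error set $E$ is drawn from $\text{Cap}(\delta)$, where $\delta := 0.01$. This reduces the lemma to showing that, for every $\e$-perturbation $p$ with $\e = \tau(\delta)$,
\[
\sup_{p}\;\mathbb{P}_{E \sim \text{Cap}(\delta)}\big[\, \mu(p^{-1}(E)) \ge \alpha\, AR(f_E,\e)\,\big] \le 2^{-\Omega(d)} ,
\]
since substituting this together with $\kappa = 2/3$ into the theorem gives $q \ge \log\!\big(\tfrac{1/3}{2^{-\Omega(d)}}\big) = \Omega(d)$. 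Two ingredients are needed: a constant lower bound on $AR(f_E,\e)$, and an exponential tail bound on how much $\mu$-mass a fixed $p$ can push into a uniformly random rotation of a fixed spherical cap.

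For the first ingredient, note that an error set in $\text{Cap}(\delta)$ is a uniformly random rotation of $E_-(1)$ or of $E_+(1)$, and that $E_-(1)\subseteq B_{1.15}$ lies at Euclidean distance $\ge 0.15 \gg \e$ from $S_{1.3}^{d-1}$ (and symmetrically for $E_+(1)$ and $S_1^{d-1}$), so for each type only the data on one of the two spheres is relevant, both for $AR$ and for $p^{-1}(E)$. A short computation shows that for the $E_-$-type a point $X\in S_1^{d-1}$ can be moved into $\rho E_-(1)$ by an $\e$-perturbation iff $\langle X,\rho e_1\rangle + \e \ge \tau(\delta)$, i.e.\ (since $\e = \tau(\delta)$) iff $\langle X,\rho e_1\rangle\ge 0$ — the two radial constraints defining $E_-(1)$ being automatic for $\|X\|_2=1$ and $\e$ small — so $AR(f_E,\e)=\tfrac12\cdot\tfrac12=\tfrac14$; the $E_+$-type is analogous, with $AR$ a constant $\ge 1/8$. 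Hence $AR(f_E,\e)\ge c_1$ for an absolute constant $c_1>0$, and after fixing $\lambda$ to be a small constant (e.g.\ $\lambda = 1/2$) it suffices to bound $\mathbb{P}_E[\mu(p^{-1}(E))\ge c]$ whenever $c$ is a constant with $2\delta < c \le 1/2$.

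The second ingredient is the crux. I would condition on the cap type and on $v := \rho e_1\sim U(S^{d-1})$. By the previous paragraph, $\mu(p^{-1}(\rho E_-(1))) = \tfrac12 G(v)$ with $G(v):=\mathbb{P}_{X\sim U(S^{d-1})}[\langle p(X),v\rangle\ge\tau(\delta)]$ (the $E_+$-type reduces to the same statement after rescaling the outer sphere by $1.3$, which only tightens the perturbation budget), so the goal becomes $\mathbb{P}_v[G(v)\ge c]\le 2^{-\Omega(d)}$. Write $\gamma(X):=p(X)-X$, $\|\gamma(X)\|_2\le\e=\tau(\delta)$, and $\bar\Phi_d(t):=\mathbb{P}_{X\sim U(S^{d-1})}[\langle X,v\rangle\ge t]$, noting $\bar\Phi_d(\tau(\delta))=\delta$ by the definition of $\tau$. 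Fix $\theta\in(0,\tau(\delta))$ with $\bar\Phi_d(\theta)=c/2$ — possible since $\delta<c/2<1/2$ — and set $s_0:=1-\theta/\tau(\delta)$, which stays bounded away from $0$ for large $d$ because $\theta$ and $\tau(\delta)$ are both $\Theta(1/\sqrt d)$ with $\theta/\tau(\delta)$ converging to a ratio of Gaussian quantiles strictly below $1$. Splitting on whether $\langle X,v\rangle\ge\theta$ gives $G(v)\le\bar\Phi_d(\theta)+R_\theta(v)$ with $R_\theta(v):=\mathbb{P}_X[\langle\gamma(X),v\rangle\ge\tau(\delta)-\theta]$; since $\|\gamma(X)\|_2\le\tau(\delta)$, the event inside $R_\theta$ forces $\gamma(X)\ne 0$ and $\gamma(X)/\|\gamma(X)\|_2$ to lie in the cap $\{w:\langle w,v\rangle\ge s_0\}$. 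Therefore $R_\theta(v)\le\mu_\gamma(\{w:\langle w,v\rangle\ge s_0\})$ for the sub-probability measure $\mu_\gamma$ equal to the law of $\gamma(X)/\|\gamma(X)\|_2$ on $S^{d-1}$, so by Fubini $\mathbb{E}_v[R_\theta(v)]\le\bar\Phi_d(s_0)\le 2^{-\Omega(d s_0^2)}=2^{-\Omega(d)}$, and Markov gives $\mathbb{P}_v[G(v)\ge c]\le\mathbb{P}_v[R_\theta(v)\ge c-\bar\Phi_d(\theta)]\le\bar\Phi_d(s_0)/(c/2)=2^{-\Omega(d)}$. Averaging over the two cap types and feeding $\sup_p\mathbb{P}_E[\cdot]\le 2^{-\Omega(d)}$ back into Theorem~\ref{thm:reduction} finishes the argument.

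I expect the main obstacle to be exactly this split: a direct Markov bound on $\mathbb{P}_v[G(v)\ge c]$ via $\mathbb{E}_v[G(v)]\approx\delta$ only yields the constant $\delta/c$, because when $\e=\tau(\delta)$ the perturbation is large enough to carry a $1/2$-fraction of the sphere across the threshold for a well-chosen $v$. Peeling off the ``cheap'' mass $\bar\Phi_d(\theta)$ — the points already near the cap — is what forces the residual mass $R_\theta(v)$ to require $\gamma(X)$ to point, for a constant fraction of $X$, into a \emph{fixed} cap of directions of exponentially small measure, which a $v$-independent perturbation can arrange for at most a $2^{-\Omega(d)}$-fraction of $v$. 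The remaining work is routine bookkeeping: checking that the ``$d$ large'' conditions ($\tau(\delta)<0.115$, $s_0$ bounded below, $2\delta<c\le 1/2$) are mutually compatible, which they are since $\delta$ and $c$ are fixed while $d\to\infty$.
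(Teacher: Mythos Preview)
Your argument is correct and reaches the same $\Omega(d)$ bound by a genuinely different route than the paper. Both proofs invoke Theorem~\ref{thm:reduction} (with the general approximation constant, as in the Remark) to reduce to bounding $\sup_p \mathbb{P}_{E}[\mu(p^{-1}(E))\ge c]$ for a suitable constant $c$, and both verify that $AR(f_E,\e)$ is a constant; they diverge after that. The paper argues via a pairwise \emph{conflict-region} argument: if $p$ succeeds on the cap $C_y$, then for every direction $x$ with $\sphericalangle(x,y)\in[49\pi/100,51\pi/100]$ one exhibits a set $I\subset S_1^{d-1}$ of constant $\mu$-mass in which each point can be $\e$-perturbed into at most one of $C_x,C_y$; this yields $\mu(p^{-1}(C_x))+\mu(p^{-1}(C_y))\le 1-\mu(I)$, so $p$ must fail on $C_x$, confining the success set of $p$ to a cap of constant angular radius and hence of measure $2^{-\Omega(d)}$. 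You instead peel off the deterministic ``cheap'' contribution $\bar\Phi_d(\theta)=c/2$ from points already $\theta$-close to the cap and observe that the residual event forces the normalized perturbation direction $\gamma(X)/\|\gamma(X)\|$ to lie in a cap of constant height $s_0=1-\theta/\tau(\delta)$ around the random direction $v$; since $\gamma$ is fixed while $v$ is uniform, a Fubini-plus-Markov step gives $\mathbb{P}_v[G(v)\ge c]\le (2/c)\,\bar\Phi_d(s_0)=2^{-\Omega(d)}$. Your route is shorter and isolates the obstruction very cleanly---a $v$-independent perturbation cannot align with a uniformly random $v$---while the paper's pairwise picture gives an explicit geometric description of the set of caps a given $p$ can succeed on, which dovetails more naturally with the $\text{Caps}_k\to\text{Cap}$ reductions elsewhere in the paper.
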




\begin{table}
  \caption{QC for CS}
  \label{tab:sphereslwrbnds}
  \centering
  \begin{tabular}{ll}
    \\
    \toprule
    Error distribution     & Lower bound     \\[0.5mm]
    \midrule
    $\text{Cap}(0.01)$             & $\Theta(d)$        \\[0.5mm] 
    $\text{Caps}_k^{\text{i.i.d.}}(0.01)$             & $\Theta(d)$\tablefootnote{This lower bound is conditional on {\em $\text{Cap}$ conjecture} (in the appendix).\label{reference footnote}}  \\[0.5mm]
    $\text{Caps}_k^{\mathcal{G}}(0.01)$         & $\Theta(d/k)$\textsuperscript{†} \\[0.5mm]
    \bottomrule
  \end{tabular}
\end{table}

\paragraph{Summary:} Quadratic neural networks have simple decision boundaries - they are of the form of ellipsoids. But due to the rotational symmetry there is sufficient entropy to guarantee robustness against $\Theta(d)$-bounded adversaries.

\section{How to increase the entropy of an existing scheme -- a universal defense }\label{sec:paritions}

It was proven in \citet{randompartitionscor} that there exists a universal defense against adversarial attacks. The defense algorithm gets as an input access to a high accuracy classifier $f$ and outputs a new classifier $g$ that  is adversarially robust. 
The idea of the defense is based on randomized smoothing \citep{smoothingPearson,smoothingMicrosoft} and random partitions of metric spaces. Simple rephrasing of Theorem 5 from \citet{randompartitionscor} in the language of the QC of adversarial attacks gives the following:

\begin{theorem}
For every $d \in \N_+$ there exists a randomized algorithm $\text{DEF}$ satisfying the following. It is given as input access to an initial classifier $\R^d \xrightarrow{} \{-1,1\}$ and provides oracle access to a new classifier $\R^d \xrightarrow{} \{-1,1\}$. For every separable binary classification task $T$ in $\R^d$ with separation $\e$ the following conditions hold. Let $\text{ALG}$ be a learning algorithm for $T$ that uses $m$ samples. Then for every $S \sim \mathcal{D}^m$ we have $R(\text{DEF}(\text{ALG}(S))) \leq 2R(\text{ALG}(S))$ and for every $\e' > 0$:
\[QC(\text{DEF} \circ \text{ALG},T,m,\e') \geq \Theta \left(\frac{\e}{\sqrt{d} \cdot \e'} \right) \text{.}\]

\end{theorem}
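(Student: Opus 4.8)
The plan is to instantiate $\text{DEF}$ with the random-partition smoothing operator underlying Theorem~5 of \citet{randompartitionscor}, and then to feed its two guarantees into Theorem~\ref{thm:reduction}. That operator turns black-box access to an arbitrary $f:\R^d\to\{-1,1\}$ into oracle access to a \emph{randomized} classifier $g$ with two properties: (i)~$R(g)\le 2R(f)+2S(\e)$, where $S(\e)$ is the $\mathcal{D}$-mass lying within distance $\e$ of the ground-truth decision boundary, and (ii)~$\mathbb{P}_{\text{DEF}}[\,g(x)\ne g(x')\,]\le O(\|x-x'\|_2\sqrt{d}/\e)$ for all $x,x'\in\R^d$. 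Since $T$ is separable with separation $\e$, no point of $\text{supp}(\mathcal{D})$ lies within $\e$ of the ground-truth boundary, hence $S(\e)=0$, and (i) applied with $f=\text{ALG}(S)$ gives $R(\text{DEF}(\text{ALG}(S)))\le 2R(\text{ALG}(S))$ for every $S$; this is the first assertion.

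For the query-complexity bound, write $g:=\text{DEF}(\text{ALG}(S))$ and regard $\text{DEF}\circ\text{ALG}$ as a randomized learner whose coins $B$ are the coins of $\text{DEF}$. Theorem~\ref{thm:reduction} (with approximation constant $1/2$ and $\kappa=0.01$) then reduces the whole problem to an upper bound on $\sup_{p}\mathbb{P}_{S,B}[\mathcal{E}(S,B,p)]$, i.e.\ on the probability that a single, \emph{fixed} $\e'$-perturbation $p$ already moves mass $\ge\tfrac12 AR(g,\e')$ into $E(g)$. Property (ii) is exactly what controls this: since $p$ displaces every point by at most $\e'$, the ($\mu$-a.e., by separation) inclusion $p^{-1}(E(g))\subseteq E(g)\cup\{x: g(p(x))\ne g(x)\}$ and integration of (ii) against $\mu$ give $\mathbb{E}_B[\mu(p^{-1}(E(g)))]\le \mathbb{E}_B[R(g)]+O(\e'\sqrt{d}/\e)$, so a perturbation committed to \emph{before} the random partition is drawn cannot harvest more than an $O(\e'\sqrt d/\e)$ surplus of error, no matter how it is chosen. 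The optimum $AR(g,\e')$, by contrast, is attained by a partition-aware perturbation and with non-negligible probability exceeds this by a constant factor; combining this multiplicative gap with the sharp concentration of $\mu(p^{-1}(E(g)))$ as a functional of the many independent local choices that build the partition yields $\sup_{p}\mathbb{P}_{S,B}[\mathcal{E}(S,B,p)]\le 2^{-\Theta(\e/(\sqrt d\,\e'))}$, and plugging this into Theorem~\ref{thm:reduction} gives $QC(\text{DEF}\circ\text{ALG},T,m,\e')\ge\Theta(\e/(\sqrt d\,\e'))$. (The estimate is informative only when $\e'=O(\e/\sqrt d)$; otherwise the right-hand side is $O(1)$.)

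The hard step is upgrading the first-moment inequality $\mathbb{E}_B[\mu(p^{-1}(E(g)))]\le\mathbb{E}_B[R(g)]+O(\e'\sqrt d/\e)$ into the exponentially small tail bound that Theorem~\ref{thm:reduction} actually consumes. This needs two ingredients: a genuine multiplicative separation between what an arbitrary fixed $p$ can achieve and $AR(g,\e')$ --- equivalently, the statement that a perturbation that does not see the realized partition cannot ``track'' the randomly-placed decision boundary of $g$ --- and a concentration argument that uses the concrete construction of \citet{randompartitionscor}, not merely the marginal Lipschitz property (ii), exploiting that the partition is assembled from many independent local decisions so that $\mu(p^{-1}(E(g)))$ barely fluctuates. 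I expect the concentration step, together with the bookkeeping caused by $E(g)$ being constant on each partition cell, to be the most delicate part.
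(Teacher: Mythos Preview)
The paper does not actually give a proof of this theorem. The surrounding text says only that the statement is a ``simple rephrasing of Theorem~5 from \citet{randompartitionscor} in the language of the QC of adversarial attacks,'' and no argument (in the body or appendix) derives the $QC$ bound from the tools developed here. In particular, the paper does \emph{not} route the claim through Theorem~\ref{thm:reduction}; it imports the conclusion wholesale from the cited work.

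Your proposal therefore goes well beyond what the paper does: you try to manufacture a self-contained proof using Theorem~\ref{thm:reduction} together with the two black-box properties of the random-partition defense. The risk bound part is fine (and this is indeed how the paper gets $R(g)\le 2R(f)$: separation $\e$ forces $S(\e)=0$). The $QC$ part, however, has two genuine soft spots that you flag but do not resolve.

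First, the ``multiplicative gap'' you assert between a partition-oblivious $p$ and $AR(g,\e')$ is not obviously there. Both $\mathbb{E}_B[\mu(p^{-1}(E(g)))]$ and $\mathbb{E}_B[AR(g,\e')]$ live at scale $O(\e'\sqrt{d}/\e)$ when $R(g)$ is small (the latter because a point has an $\e'$-adversarial example only if its $\e'$-ball is not contained in a single cell, which happens with probability $O(\e'\sqrt{d}/\e)$). So the Lipschitz property alone does not separate the two; you would need a structural fact about the partition that a \emph{fixed} direction of perturbation captures only a strictly-less-than-$1/2$ fraction of the unpadded mass. This is plausible but is not a consequence of property~(ii).

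Second, and more seriously, your concentration step must produce a tail of order $2^{-\Theta(\e/(\sqrt d\,\e'))}$ for Theorem~\ref{thm:reduction} to output $QC\ge\Theta(\e/(\sqrt d\,\e'))$. McDiarmid over ``many independent local choices'' gives a rate governed by the number and influence of those choices, and there is no reason a priori that this equals $\Theta(\e/(\sqrt d\,\e'))$; it depends on the granularity of the partition relative to $\text{supp}(\mathcal{D})$, not just on the marginal Lipschitz constant. In short, the quantitative match between the claimed $QC$ bound and what your concentration argument can deliver is not established by anything you have written, and property~(ii) alone is too weak to pin it down. If you want to make this route work, you will have to open up the construction from \citet{randompartitionscor} rather than treat it as a black box---at which point you are essentially re-proving their Theorem~5, which is exactly what the present paper declines to do.
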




\paragraph{Summary:} There exists a universal defense that can be applied on top of any learning algorithm to make it secure against query-bounded adversaries. Roughly speaking, it works by injecting additional randomness to increase the entropy of the final classifier. 

\section{Robustness and accuracy -- foes no more}\label{sec:easylowerbound}

It was argued in \citet{robustnessaccuracy} that there might be an inherent tension between accuracy and adversarial robustness. We argue that this potential tension disappears for a rich class of learning algorithms if we consider $q$-bounded adversaries. We show that if a learning algorithm satisfies a particular natural property then there is a lower bound for the QC of this algorithm that {\em grows} with accuracy.

\begin{restatable}{theorem}{thmlogoverdeltabound}
\label{thm:log1overdeltabound}
For every $\e \in \mathbb{R}_{\geq 0}, C,\delta,\eta \in \R_{+}$ and $T$ a binary classification task on $\R^d$ with separable classes the following conditions hold. If $\text{ALG}$ is a learning algorithm for $T$ and satisfies the following properties:
\begin{enumerate}
    \item 
    $\forall x \in \text{supp}(\mathcal{D}) + B_\e, \\ \mathbb{P}_{S \sim \mathcal{D}^m}[\text{ALG}(S)(x) \neq h(x)] \leq C \cdot \delta \text{,}$ 
    \label{item:firstprop}
    \item $\mathbb{P}_{S \sim \mathcal{D}^m}[AR(\text{ALG}(S),\e) \geq \eta] \geq 0.99 \text{,}$ \label{item:secondprop}
    \item $\mathbb{P}_{S \sim \mathcal{D}^m} [R(\text{ALG}(S)) \leq \delta] \geq 0.99 \text{,}$ \label{item:thirdprop}
\end{enumerate}
then: 
\[QC(\text{ALG},T,m,\e) \geq \log \left( \frac{ \eta}{3 \cdot C \cdot \delta} \right) \text{.}\]
\end{restatable}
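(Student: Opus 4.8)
The plan is to run the counting argument behind Theorem~\ref{thm:reduction}, but with the low-probability ``bad'' samples discarded \emph{before} the union bound over the adversary's possible outputs. If $3C\delta\ge\eta$ the right-hand side is non-positive and there is nothing to prove, so assume $3C\delta<\eta$. Fix a $q$-bounded adversary $\adv$ with parameter $\e$ that is successful in the sense of Definition~\ref{def:querycomplexity}, i.e. $\mathbb P_{S\sim\mathcal D^m}\big[AR(\text{ALG}(S),\adv(\text{ALG}(S)))\ge\tfrac12 AR(\text{ALG}(S),\e)\big]\ge 0.99$; it suffices to show $q\ge\log(\eta/(3C\delta))$. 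Since $\adv$ is deterministic and asks at most $q$ binary queries, its query tree has at most $2^q$ leaves, so there are fixed $\e$-perturbations $p_1,\dots,p_{2^q}$ with $\adv(\text{ALG}(S))\in\{p_1,\dots,p_{2^q}\}$ for every $S$. As in the proof of Theorem~\ref{thm:reduction}, ``$\adv$ succeeds on $S$'' implies that $\mathcal E(S,p_i)$ holds for the particular $p_i$ that $\adv$ outputs, where $\mathcal E(S,p)$ denotes the event $\mu(p^{-1}(E(\text{ALG}(S))))\ge\tfrac12 AR(\text{ALG}(S),\e)$; hence
\[0.99\ \le\ \mathbb P_{S}\Big[\textstyle\bigcup_{i\le 2^q}\mathcal E(S,p_i)\Big].\]

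Next I would bound $\mathbb P_S[\mathcal E(S,p)]$ on a good event, uniformly over $\e$-perturbations $p$. Let $G:=\{AR(\text{ALG}(S),\e)\ge\eta\}\cap\{R(\text{ALG}(S))\le\delta\}$; properties~\ref{item:secondprop} and~\ref{item:thirdprop} and a union bound give $\mathbb P_S[G]\ge 0.98$. Fix $p$. Using Fubini (with $X\sim\mathcal D$ independent of $S$) and then property~\ref{item:firstprop} applied at the point $p(x)\in\text{supp}(\mathcal D)+B_\e$,
\[\mathbb E_{S}\big[\mu(p^{-1}(E(\text{ALG}(S))))\big]=\int_{\text{supp}(\mathcal D)}\mathbb P_S\big[\text{ALG}(S)(p(x))\neq h(p(x))\big]\,d\mu(x)\ \le\ C\delta .\]
On $G$ we have $AR(\text{ALG}(S),\e)\ge\eta$, so $\mathcal E(S,p)\cap G$ forces $\mu(p^{-1}(E(\text{ALG}(S))))\ge\eta/2$, and Markov's inequality yields $\mathbb P_S[\mathcal E(S,p)\cap G]\le 2C\delta/\eta$, independently of $p$.

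Finally I would combine the two estimates, splitting off $G^c$ first and only then union-bounding over the $2^q$ perturbations:
\[0.99\ \le\ \mathbb P_S[G^c]+\sum_{i\le 2^q}\mathbb P_S[\mathcal E(S,p_i)\cap G]\ \le\ 0.02+2^q\cdot\frac{2C\delta}{\eta},\]
so $2^q\ge 0.97\,\eta/(2C\delta)\ge \eta/(3C\delta)$, i.e. $q\ge\log(\eta/(3C\delta))$. Taking the minimum over all successful adversaries gives the claimed bound.

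The main obstacle is precisely the last step. Invoking Theorem~\ref{thm:reduction} as a black box only bounds $\sup_p\mathbb P_S[\mathcal E(S,p)]$ by roughly $C\delta/\eta + \mathbb P_S[G^c]\approx C\delta/\eta+0.01$, which degenerates to a constant lower bound once $C\delta\ll\eta$ — exactly the regime that matters (e.g. for quadratic networks whose risk is exponentially small). One therefore has to reopen the reduction and discard $G^c$ \emph{before} paying the multiplicative factor $2^q$, which is why hypotheses~\ref{item:secondprop}--\ref{item:thirdprop} must be stated in ``with probability $\ge 0.99$'' form rather than merely in expectation. A minor secondary point is the identification in the first display of ``$\adv$ succeeds'' with the event $\mathcal E$, which uses separability of $T$ so that the $\e$-perturbation does not flip the ground-truth label on $\text{supp}(\mathcal D)$; this is already handled inside Theorem~\ref{thm:reduction}.
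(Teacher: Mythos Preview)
Your proof is correct and follows essentially the same route as the paper: restrict to the good event $G=\{AR\ge\eta\}\cap\{R\le\delta\}$, bound $\mathbb E_S[\mu(p^{-1}(E))]$ via property~\ref{item:firstprop}, apply Markov, and then run the counting argument from Theorem~\ref{thm:reduction}. The only cosmetic difference is that the paper \emph{conditions} on $G$ and invokes Theorem~\ref{thm:reduction} for the conditional sampling process $\mathcal D_1$ (using Remark~\ref{rem:aboutreduction}), whereas you reopen the tree argument and intersect with $G$ before union-bounding over the $2^q$ leaves---this is slightly cleaner and yields the same constants.
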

The lower bound obtained in Theorem~\ref{thm:log1overdeltabound} is useful in situations when $\text{ALG}(S)$ has high accuracy but the adversarial risk is large. This is a typical situation when using neural networks -- one is often able to find classifiers that have high accuracy but they are not adversarially robust. 

\paragraph{Summary:} For a rich class of learning algorithms our security guarantee against query-bounded adversaries increases with accuracy. A risk of $2^{-\Omega(k)}$ leads to robustness against $\Theta(k)$-bounded adversaries.

\section{Discussion}


For a given task the QC can vary considerably depending on the learning algorithm. In this section we try to explain our current understanding of what governs this dependence.

Consider the two intervals learning task from Theorem 2. As proven, if we use the 1-NN classifier the QC is lower-bounded by $\Theta(m)$, where $m$ is the number of 'training' examples. Now consider what happens if we used the hypothesis class of linear separators with the standard ERM algorithm. Then we expect the learned classifier to be a line that approximately separates the two intervals. To approximately recover this line the adversary can find two points through which the line passes by running two binary search procedures. This implies that the QC is independent of $m$. Thus QCs can be as different as $\Omega(m)$ and $O(1)$ depending on the used hypothesis class/learning algorithm. Note that if we used the SVM classifier then with allowed perturbation of $z/10$ (as in Theorem 2) the adversarial risk will be $0$ (for $m$ big enough), as the learned classifier will be defined by the line passing through $(0,z/2)$ and $(m,z/2)$, which makes the QC degenerate and equal $0$.

Consider now the CS data set and see what happens when you very the number of samples $m$ and the learning algorithm. As we mentioned in Section~\ref{sec:knn}, for CS K-NN might need as many as $m = 2^{\Theta(d \log d)}$ samples. This means that the lower bound of $\Theta(m)$ (from Theorem~\ref{thm:knn}) becomes $2^{\Theta(d \log d)}$. On the other hand, QNN applied to this task is conjectured to have QC of $\Theta(d^2)$. We can also look at an interesting learning algorithm from \citet{srebroimproperly}, which for a hypothesis class $\mathcal{H}$ can improperly learn $\mathcal{H}$ to a constant adversarial risk using $O(\text{VC-dim}(\mathcal{H}) \cdot  \text{dualVC-dim}(\mathcal{H}))$ many samples. As the VC-dimension, and also the dual VC-dimension, of ellipsoids is in $O(d^2)$ we get that this algorithm achieves constant adversarial risk using $m = O(d^4)$ samples. The QC in this case is, in a way, irrelevant because the adversarial risk is small already. Thus we see three algorithms with different QCs but it's not clear if we can directly compare them as they require different sample sizes.

These examples suggest that there is a strong connection between the number of degrees of freedom of the classifier and the QC. However this connection cannot be expressed in terms of VC-dim or AdversarialVC-dim \citep{adversarialPAClearning} as there exist hypothesis classes and learning tasks for which QC $\gg$ VC-dim/AdversarialVC-dim. Understanding the connection between the QC and the number of parameters is a part of ongoing work.

\section{Conclusions and takeaways}\label{sec:conclusions}

We investigate robustness of learning algorithms against query-bounded adversaries. We start by introducing a definition of QC of adversarial attacks and then proceed to study it's properties. We prove a general lower bound for the QC in terms of an entropy-like property. Using this result we then show a series of lower bounds for classical learning algorithms. Specifically,  we give a lower bound of $\Theta(d)$ for QNNs and  a lower bound of $\Theta(m)$ for $1$-NN algorithm. Moreover we observe that sources of the entropy can be varied. For K-NN the entropy is high due to the locality of the learning algorithm whereas for QNN it comes from the rotational symmetry of the data. The entropy can also be increased by introducing randomness in the learning algorithm itself. We also show that improvements in accuracy of a model lead to an improved security against query-bounded adversaries.

Our analysis identifies properties of learning algorithms that make them (non-)robust. These results give a rule-of-thumb: "The higher the entropy of decision boundary the better" for assessing the QC of a given algorithm.

We believe that a systematic investigation of learning algorithms from the point of view of QC will lead to more adversarially-robust systems. Specifically, it should be possible to design generic defenses that can be applied on top of any learning algorithm. One example of such a defense was given in Section~\ref{sec:paritions}. Significantly more work is needed in order to fulfill the potential of this approach. But imagine that this type of defense could be applied efficiently with only a black-box access to the underlying classifier. And imagine further, that it could guaranteed a QC of, say $q = 2^{\Theta(d)}$. This would arguably solve the adversarial robustness problem.

\bibliography{example_paper}
\bibliographystyle{icml2021}



\onecolumn
\begin{appendix}


\section{Omitted proofs - Reduction and a lower bound}
\label{sec:proofs}

We now present the reduction that we will use for our QC lower bounds later on.



\thmreduction*


\begin{proof}
We first prove the Theorem when $ALG$ is deterministic. Let $\adv$ be a $q$-bounded adversary that performs a successful attack on $ALG$ with respect to $(T,m,\e,1/2, \kappa)$ (as per Definition~\ref{def:querycomplexity2}). We will show that $q$ is lower-bounded by the value from the statement of the Theorem. 

The behavior of $\adv$ can be represented as a binary tree $\mathcal{T}$ where each non-leaf vertex $v \in \mathcal{T}$ contains a query point $x_v \in \R^d$ and each leaf $l \in \mathcal{T}$ contains an $\e$-perturbation $p_l : \mathbb{R}^d \xrightarrow{} \R^d$. Then $\adv$ works as follows: it starts in the root $r$ of $\mathcal{T}$ and queries the vertex $x_r$. Depending on $ f(x_r) \stackrel{?}{=} 1 $ it proceeds left or right. It continues in this manner, querying the points stored in the visited vertices until it reaches a leaf $l$. At the leaf it outputs the perturbation function $p_l$.

Let us partition all possible data sets $S \in (\R^d)^m$ depending on which leaf is reached by $\adv$ when interacting with $ALG(S)$. Let $l_1, \dots, l_n$ be the leaves of $\mathcal{T}$ and $C_1, \dots, C_n \subseteq (\R^d)^m$ be the respective families of data sets that end up in the corresponding leaves. Let $Z := \{S \in (\R^d)^m : \adv \text{ succeeds on } S \}$. By assumption  $\adv$ is guaranteed to succeed with probability $1 - \kappa$, so 
\begin{equation}\label{eq:succesprob}
\mathbb{P}_{S \sim \mathcal{D}^m}[S \in Z] \geq 1 - \kappa \text{.}
\end{equation} 

Now observe that for every $i \in [n]$ and $S \in C_i \cap Z$
\[\mu(p_{l_i}^{-1}(E(ALG(S)))) \geq \frac{AR(ALG(S),\e)}{2} \text{.}\]
In words, for every $S \in C_i \cap Z$ the adversary $\adv$ succeeds if at least $\frac{AR(ALG(S),\e)}{2}$ of the probability mass of $\mathcal{D}$ is moved by $p_{l_i}$ into the error set of $ALG(S)$. Thus we get that for every $i \in [n]$:
\begin{equation}\label{eq:probupperbnd}
 \mathbb{P}_{S \sim \mathcal{D}^m}[S \in C_i \cap Z] \leq \sup_{p : \text{ $\e$-perturbation}} \mathbb{P}_{S \sim \mathcal{D}^m} \left[ \mu(p^{-1}(E(ALG(S)))) \geq \frac{AR(ALG(S),\e)}{2} \right] \text{.}
\end{equation}

By standard properties of entropy we know that for a discrete random variable $W$ any protocol asking yes-no questions that finds the value of $W$ must on average ask at least $H(W)$ many questions. Let $W$ be a random variable that takes values in $\{1,2,\dots,n\}$, where 
for every $i \in [n]$ we have $\mathbb{P}[W = i] := \mathbb{P}_{S \sim \mathcal{D}^m}[S \in C_i \cap Z] / \mathbb{P}_{S \sim \mathcal{D}^m}[S \in Z]$. Note that $\adv$'s protocol can be directly used to find a protocol that asks yes-no questions and finds the value of $W$ with at most $q$ queries. It is enough to prove a lower-bound on $H(W)$ as the expected number of questions can only be lower than the maximum number.

Note that by \eqref{eq:succesprob} and \eqref{eq:probupperbnd} we get that for every $i \in [n]$: 
\[\mathbb{P}[W = i] \leq \frac{1}{1 - \kappa} \cdot \sup_{p : \text{ $\e$-perturbation}} \mathbb{P}_{S \sim \mathcal{D}^m} \left[ \mu(p^{-1}(E(ALG(S)))) \geq \frac{AR(ALG(S),\e)}{2} \right] \text{.}\]
By properties of entropy we know that $H(W) \geq \log(1/\max_{i \in [n]} \mathbb{P}[W = i])$, so in the end we get that:
\[H(W) \geq \log \left(\frac{1 - \kappa}{\sup_{p : \text{ $\e$-perturbation}} \mathbb{P}_{S \sim \mathcal{D}^m} \left[ \mu(p^{-1}(E(ALG(S)))) \geq \frac{AR(ALG(S),\e)}{2} \right]} \right) \text{.}\]

The proof for the case when $ALG$ is randomized in analogous. The only difference is that instead of partitioning the space $(\R^d)^m$ we partition the space $(\R^d)^m \times \text{supp}(\mathcal{B})$.
\end{proof}

\begin{remark}\label{rem:aboutreduction}
For the sake of clarity and consistency with the standard setup we fixed the approximation constant to be equal $1/2$ and the data generation process to be $S \sim \mathcal{D}^m$. We note however, that Theorem~\ref{thm:reduction} (and its proof with minor changes) is also true for all approximation constants and for general data generation processes. By different generation process we mean anything different from $S \sim \mathcal{D}^m$, for instance a case where samples are dependent or where the number of samples $m$ is itself a random variable. This distinction will become important in the proof of Theorem~\ref{thm:knn}.
\end{remark}

The following theorem states that if an algorithm $\text{ALG}$ applied to a learning task satisfies the following: ALG learns low-risk classifier with constant probability, the adversarial risk is high with constant probability and every point from the support of the distribution is misclassified with small probability then the QC of $\text{ALG}$ is high. The core of the proof is the reduction from Theorem~\ref{thm:reduction}.

\thmlogoverdeltabound*

\begin{proof}
Let $p : \mathbb{R}^d \xrightarrow{} \R^d$ be an $\e$-perturbation. For simplicity we introduce the notation $\rho := \mathbb{P}_{S \sim \mathcal{D}^m}[AR(ALG(S),\e) \geq \eta \wedge R(ALG(S),\e) \leq \delta]$. We define two new data distributions: 
\[\mathcal{D}_1 := \mathcal{D}^m | \left( AR(ALG(S),\e) \geq \eta \wedge R(ALG(S),\e) \leq \delta \right)\text{,} \] \[ \mathcal{D}_2 := \mathcal{D}^m | \left( AR(ALG(S),\e) < \eta \vee R(ALG(S),\e) > \delta \right) \text{.}\] 
Observe that $\text{supp}(\mathcal{D}_1) \cap \text{supp}(\mathcal{D}_2) = \emptyset$ and:
\begin{equation}\label{eq:splittingdistribution}
\mathcal{D}^m = \rho \cdot \mathcal{D}_1 + (1- \rho) \cdot \mathcal{D}_2 \text{.}
\end{equation}
Let $\adv$ be an adversary that succeeds on $\mathcal{D}^m$ with probability $0.99$. By  \eqref{eq:splittingdistribution} and the union bound $\adv$ has to succeed on $\mathcal{D}_1$ with probability of success $s$ that satisfies:
\[\rho \cdot s + (1-\rho) \geq 0.99  \text{,}\]
or, equivalently, \[s \geq \frac{1}{\rho} \left(0.99 - (1 - \rho) \right).\] By Assumption~\ref{item:secondprop} and~\ref{item:thirdprop}, this implies 
\begin{equation}\label{eq:succeesproblog1delta}
s \geq 0.97 \text{.}
\end{equation}
Now observe:
\begin{align}
&\mathbb{E}_{S \sim \mathcal{D}_1}[\mu(p^{-1}(E(ALG(S))))] \nonumber \\
&= \int_{\text{supp}(\mathcal{D})} \mathbb{P}_{S \sim \mathcal{D}_1}[p(x) \in E(ALG(S))] \ d\mu \nonumber \\
&= \int_{\text{supp}(\mathcal{D})} \mathbb{P}_{S \sim \mathcal{D}^m} \left[p(x) \in E(ALG(S)) | \left( AR(ALG(S),\e) \geq \eta \wedge R(ALG(S),\e) \leq \delta \right) \right] \ d\mu 
\nonumber \\
&= \int_{\text{supp}(\mathcal{D})} \frac{ \mathbb{P}_{S \sim \mathcal{D}^m} \left[p(x) \in E(ALG(S)) \cap AR(ALG(S),\e) \geq \eta \cap R(ALG(S),\e) \leq \delta \right]}{\mathbb{P}_{S \sim \mathcal{D}^m}[AR(ALG(S),\e) \geq \eta \wedge R(ALG(S),\e) \leq \delta]} \ d\mu  
\nonumber \\
&\leq \int_{\text{supp}(\mathcal{D})} \frac{ \mathbb{P}_{S \sim \mathcal{D}^m} \left[p(x) \in E(ALG(S)) \right]}{\mathbb{P}_{S \sim \mathcal{D}^m}[AR(ALG(S),\e) \geq \eta \wedge R(ALG(S),\e) \leq \delta]} \ d\mu  
\nonumber \\
&\leq (C \cdot \delta) / \rho  
\nonumber \\
&\leq \frac{1}{0.98} \cdot C \cdot \delta \text{,} 
\label{eq:expbnd}
\end{align}
where the second equality follows from the definition of $\mathcal{D}_1$, third equality follows from the definition of conditioning, first inequality follows from the fact that intersection decreases probability, second inequality is a result of Assumption~\ref{item:firstprop} (which can be applied as $p(x) \in \text{supp}(\mathcal{D}) + B_\e$) and the last inequality is obtained by Assumptions~\ref{item:secondprop}, \ref{item:thirdprop} and the union bound.
Using \eqref{eq:expbnd} we get:
\begin{align}
&\mathbb{P}_{S \sim \mathcal{D}_1} \left[ \mu(p^{-1}(E(ALG(S)))) \geq \frac{AR(ALG(S),\e)}{2} \right] \nonumber \\
&\leq \frac{2 \cdot \mathbb{E}_{S \sim \mathcal{D}_1}[\mu(p^{-1}(E(ALG(S))))]}{AR(ALG(S),\e)}  && \text{by Markov inequality} \nonumber \\
&\leq \frac{2 \cdot \frac{1}{0.98} \cdot C \cdot \delta}{\eta} && \text{by \eqref{eq:expbnd} and definition of } \mathcal{D}_1 \label{eq:log1deltalast}
\end{align}
Applying Theorem~\ref{thm:reduction} to \eqref{eq:succeesproblog1delta} and \eqref{eq:log1deltalast} we get that:
\[QC(ALG,T,m,\e) \geq \log \left(\frac{0.97 \cdot 0.98 \cdot \eta}{2 \cdot C \cdot \delta} \right) \geq \log \left( \frac{\eta}{3 \cdot C \cdot \delta} \right) \text{.}\]
\end{proof}

\newpage
\section{Omitted Proofs - K-NN}
\label{sec:proofsknn}

\thmknn*

\begin{proof}
For $x \in L_- \cup L_+$ and $\rho \in \R$ we will use $x + \rho$ to denote $x + (\rho,0)$. Finally, for $x \in L_- \cup L_+$ we will use $g(x)$ to denote the closest point to $x$ in the other interval. 

\paragraph{Data generation process.} Instead of letting $S \sim \mathcal{D}^{2 m}$ we will use a standard trick and employ a Poisson sampling scheme. This will simplify our proof considerably. Specifically, we think of the samples as being generated by two Poisson processes: Let $N_-$ be a homogeneous Poisson process on the line defined by the extension of $L_-$ and $N_+$ be a independent of $N_-$ homogeneous Poisson process on the line defined by the extension of $L_+$, both of rate $\lambda = 1$. Then we define $A_- := ([0,m)\times \{0\}) \cap N_-, A_+ := ([0,m) \times \{z\}) \cap N_+$ and finally: 
\[S := \{ (x,-1) : x \in A_- \} \cup \{ (x,+1) : x \in A_+ \} \text{ and}\]
\[\widetilde{S} := \{ (x,-1) : x \in N_- \} \cup \{ (x,+1) : x \in N_+ \} \text{.}\]
By design we have $\mathbb{E}[|S|] = 2m$ as $|S|$ is distributed according to $\text{Pois}(2m)$. Moreover, using a standard tail bound for a Poisson random variable, we get that for every $t > 0$:
\begin{equation}\label{eq:poissontailbnd}
\mathbb{P}[||S|- 2m| \geq t] \leq 2e^{-\frac{t^2}{2(2m+t)}} \text{.}
\end{equation}
This means that the size of the dataset generated with the new process is concentrated around $2m$ (with likely deviations of order $\sqrt{m}$).
Let $\{x_1^-,x_2^-, \dots \}$ be the points from $N_-$ with non-negative first coordinate ordered in the increasing order and similarly let $\{x_1^+,x_2^+, \dots \}$. Then note that
$A_- = \{x_1^-, \dots, x_{|A_-|}^- \}$ and $A_+ = \{x_1^+, \dots, x_{|A_+|}^+ \}$. To simplify notation we let $E(S) := E(\text{$1$-Nearest Neighbor}(S)), E(\widetilde{S}) := E(\text{$1$-Nearest Neighbor}(\widetilde{S}))$, where we recall that $E$ denotes the error set. 
Moreover let: 
\[x^-_0 := \max_{x \in N_-, x < 0} x, \ x^+_0 := \max_{x \in N_+, x < 0} x 
\]
We also define the corresponding random variables $\{ X_0^-, X_1^-, \dots \}$ and $\{ X_0^+, X_1^+, \dots \}$, where for every $i$ we have $x_i^- \sim X_i^-$ and $x_i^+ \sim X_i^+$.  

\paragraph{Upper-bounding $\mu(p^{-1}(E(\text{$1$-Nearest Neighbor}(S))))$.}
Let $p$ be a $z/10$-perturbation. 
We analyze only one of the intervals, namely $L_+$, as the situation for $L_-$ is symmetric. For $i \in \mathbb{N}_+ \cup \{0\}$ let $\widetilde{Z}_i$ be a non-negative random variable defined as: 
\[\widetilde{Z}_i := \nu \left(p^{-1} \left(\overline{P}_{x_{i}^+} \cup \overline{P}_{x_{i+1}^+} \right)\right) \text{,}\]
where we define for every $(x,y) \in L_+$:
\[\overline{P}_{(x,y)} := \left\{(x',y') \in \R^2 : y' < \frac{1}{2z}(x'-x)^2 + \frac{z}{2} \right\} \]

Note that by construction:
\begin{equation}\label{eq:erroruprbnd}
\sum_{i=0}^{|A_-|} \widetilde{Z}_i \geq \nu(p^{-1}(E(S)) \cap L
_+) \text{.}
\end{equation}
We divide $\widetilde{Z}_i$'s into $k$ groups, where $k$ will be chosen later. For $i \in \mathbb{N}_+ \cup \{0\}$ we define:
\[\widetilde{Z}_{i/k}^{i \bmod k} := \widetilde{Z}_i \text{.}\] 
Let $g \in \{0, \dots,k-1\}$. We will upper-bound the probability:
\[ \mathbb{P}\left[\sum_{i=0}^{\lceil (1+c(z))m/k \rceil} \widetilde{Z}_i^g \geq \left(1 + \frac{\e(z)}{2}\right) \cdot e^{-\frac{4\sqrt{5}z}{5}} \cdot m \right] \text{,}\]
where the function $c : \R^+ \xrightarrow[]{} \R^+$ will be defined later.

Let $i \in [\lceil (1+c(z))m/k \rceil]$ and $x_0^+,x_1^+, \dots, x_{(i-1)k+g+1}^+ \in \R$ be an increasing sequence such that $x_0^+ < 0 < x_1^+$. Assume that $p$ maximizes $\mathbb{E} \left[\widetilde{Z}_i^g \middle| X_0^+ = x_0^+, X_1^+ = x_1^+, 
\dots, X_{(i-1)k+g+1}^+ = x_{(i-1)k+g+1}^+\right]$. Note that by construction of $\overline{P}$'s we have the following property. For every $i \in [|A_+|]$ and every $(x,y) \in \overline{P}_{x_{i}^+}, \ y \geq 1/2$ we have that for every $y' \in [1/2,y] \ (x,y') \in \overline{P}_{x_{i}^+}$. Using this fact we can assume without loss of generality that for every $t \in L_+$ we have $p(t) = (x,y), y \leq z$ and $\|p(t) - t\|_2 = z/10$. The reason is that if $p(t)$ is above $L_+$ we can flip it with respect to $L_+$ and preserve the distance to $t$ and if $\|p(t) - t\|_2 < z/10$ we can create a new $p'$ that moves $t$ to $p'(t) := (x,y')$, where $(x,y) = p(t), y' < y$ and $\|p'(t) - t\|_2 = z/10$.

For every $t \in L_+$ let:
\[\alpha(t) := \sphericalangle( (-1,0), p(t) - t) \text{.}\]
Now observe that $p(t) \in \overline{P}_{x_{i}^+} \cup \overline{P}_{x_{i+1}^+}$ iff $x_{i}^+ \leq \tau_1$ and $\tau_2 \leq x_{i+1}^+$, where the two threshold can be computed from $p(t)$ or equivalently from $t$ and $\alpha(t)$. We get the following:
\begin{align}
&\mathbb{P}\left[p(t) \in \overline{P}_{x_{i\cdot k+g}^+} \cup \overline{P}_{x_{i\cdot k+g+1}^+} \middle| X_{(i-1) \cdot k + g + 1} = x_{(i-1)k+g+1}^+ \right] \nonumber \\
&= \int_{x_{(i-1)k+g+1}^+}^{\tau_1} f_{X_{i\cdot k+g}^+ - X_{(i-1)k + g + 1}^+}(x' - x_{(i-1)k+g+1}^+) \cdot e^{-(\tau_2 - x')} dx' \nonumber \\
&= e^{-\frac{2\sqrt{5}z}{5} \sqrt{5 - \cos(\alpha(t))}} \cdot \frac{\left[t - \frac{z}{10}(- \sin(\alpha(t)) + 2 \sqrt{5}\sqrt{5 - \cos(\alpha(t))})\right]^k}{k!} \cdot e^{-\left[t -  \frac{z}{10}(- \sin(\alpha(t)) + 2 \sqrt{5}\sqrt{5 - \cos(\alpha(t))}\right]} \nonumber \\
&\leq e^{-\frac{4\sqrt{5}z}{5}} \cdot \frac{\left[t - \frac{z}{10}(- \sin(\alpha(t)) + 2 \sqrt{5}\sqrt{5 - \cos(\alpha(t))})\right]^k}{k!} \cdot e^{-\left[t -  \frac{z}{10}(- \sin(\alpha(t)) + 2 \sqrt{5}\sqrt{5 - \cos(\alpha(t))}\right]} \label{eq:firststepugly}
\end{align}
The first equality follows from the fact that inter-arrival times are independent on $L_+$. To see the second observe that $f_{X_{i\cdot k+g}^+ - X_{(i-1)k + g + 1}^+}$ is the density of Erlang distribution with parameters $(k-1,1)$ and the formula $t -  \frac{z}{10}(- \sin(\alpha(t)) + 2 \sqrt{5}\sqrt{5 - \cos(\alpha(t))}$ gives the expression for $\tau_1$ and $\frac{2\sqrt{5}z}{5} \sqrt{5 - \cos(\alpha(t))} + \tau_1$ gives the expression for $\tau_2$.

Then we have:
\begin{align}
&\mathbb{E} \left[\widetilde{Z}_i^g \middle| X_0^+ = x_0^+, X_1^+ = x_1^+, 
\dots, X_{(i-1)k+g+1}^+ = x_{(i-1)k+g+1}^+\right] \nonumber \\
&= \int_{x_{(i-1)k+g+1}^+}^{\infty} \mathbb{P}_S \left[p(t) \in \overline{P}_{x_{i\cdot k+g}^+} \cup \overline{P}_{x_{i\cdot k+g+1}^+} \middle| X_0^+ = x_0^+, 
\dots, X_{(i-1)k+g+1}^+ = x_{(i-1)k+g+1}^+ \right] dt \nonumber \\
&= \int_{x_{(i-1)k+g+1}^+}^{\infty} \mathbb{P}_S \left[p(t) \in \overline{P}_{x_{i\cdot k+g}^+} \cup \overline{P}_{x_{i\cdot k+g+1}^+} \middle| X_{(i-1)k+g+1}^+ = x_{(i-1)k+g+1}^+ \right] dt \nonumber \\
&\leq e^{-\frac{4\sqrt{5}z}{5}} \int_{0}^{\infty} \frac{\left[t - \frac{z}{10}(- \sin(\alpha(t)) + 2 \sqrt{5}\sqrt{5 - \cos(\alpha(t))})\right]^k}{k!} \cdot e^{-\left[t -  \frac{z}{10}(- \sin(\alpha(t)) + 2 \sqrt{5}\sqrt{5 - \cos(\alpha(t))}\right]} &&\text{By \eqref{eq:firststepugly}} \label{eq:secondstepugly}
\end{align}
Now we bound the expression from \eqref{eq:secondstepugly}. Note that the range of $\sin$ and $\cos$ is $[-1,1]$ so:
\[\left|\frac{z}{10}(- \sin(\alpha(t)) + 2 \sqrt{5}\sqrt{5 - \cos(\alpha(t))})\right| \leq  \frac{11z}{10}\]
Function $e^{-x}  \cdot \frac{x^k}{k!}$ is increasing on $[-\infty,k]$ and decreasing on $[k,\infty]$ thus 
\begin{align}
&\int_{0}^{\infty} \frac{\left[t - \frac{z}{10}(- \sin(\alpha(t)) + 2 \sqrt{5}\sqrt{5 - \cos(\alpha(t))})\right]^k}{k!} \cdot e^{-\left[t -  \frac{z}{10}(- \sin(\alpha(t)) + 2 \sqrt{5}\sqrt{5 - \cos(\alpha(t))}\right]} \nonumber \\
&\leq \int_{0}^{k - \frac{11z}{10}} e^{-(t'+ \frac{11z}{10})} \cdot \frac{(t'+ \frac{11z}{10})^k}{k!} \ dt' + \int_{k-\frac{11z}{10}}^{k+\frac{11z}{10}} e^{-k} \cdot \frac{k^k}{k!} \  dt' + \int_{k+ \frac{11z}{10}} e^{-(t'-\frac{11z}{10})} \cdot \frac{(t'- \frac{11z}{10})^k}{k!} \ dt' \nonumber \\
&\leq \int_0^{\infty} e^{-t'} \cdot \frac{t'^k}{k!} \ dt' + \frac{22z}{10} e^{-k} \cdot \frac{k^k}{k!} \nonumber \\
&\leq 1 + \frac{22z}{10 \sqrt{2\pi k}} \label{eq:oneplussmall}
\end{align}
where the last inequality follows from the fact that the function $e^{-t'} \cdot \frac{t'^k}{k!}$ is the density function of the Erlang distribution with parameters $(k,1)$ and Stirling factorial bounds.

Combining \eqref{eq:secondstepugly} and \eqref{eq:oneplussmall} we get that:
\begin{equation}\label{eq:expectationbnd}
\mathbb{E} \left[\widetilde{Z}_i^g \middle| X_0^+ = x_0^+, X_1^+ = x_1^+, 
\dots, X_{(i-1)k+g+1}^+ = x_{(i-1)k+g+1}^+\right] \leq \left( 1 + \frac{z}{\sqrt{k}} \right)e^{-\frac{4\sqrt{5}z}{5}}\text{.}
\end{equation}

Note that in order for $\widetilde{Z}_i^g \geq 0$ one needs $x_{i\cdot k+g+1}^+ - x_{i\cdot k + g}^+ \geq 2 \cdot \frac{z}{10}(- \sin(\alpha(t)) + 2 \sqrt{5}\sqrt{5 - \cos(\alpha(t))})$, for $\alpha(t) = 0$. Simplifying this is equivalent to $x_{i\cdot k+g+1}^+ - x_{i\cdot k + g}^+ \geq \frac{4\sqrt{5}z}{5}$. As the lengths of intervals are independent we get that for every $i \in \mathbb{N}_+ \cup \{0\}$:  
\begin{equation}\label{eq:zerowithhp}
\mathbb{P}\left[\widetilde{Z}_i^g = 0 | X_0^+ = x_0^+, X_1^+ = x_1^+, 
\dots, X_{(i-1)k+g+1}^+ = x_{(i-1)k+g+1}^+ \right] \geq 1 - e^{-\frac{4\sqrt{5}z}{5}}
\end{equation}
From definition of $\widetilde{Z}_i^g$ we have that $\widetilde{Z}_i^g \leq \nu \left(\left(\overline{P}_{x_{i\cdot k+g}^+} \cup \overline{P}_{x_{i\cdot k+g+1}^+}\right) + B_{z/10} \right)$. We will give an upper bound on $\nu \left(\left(\overline{P}_{x_{i\cdot k+g}^+} \cup \overline{P}_{x_{i\cdot k+g+1}^+}\right) + B_{z/10} \right)$ depending on $x_{i\cdot k+g+1}^+-x_{i\cdot k + g}^+$. For simplicity let $l := x_{i\cdot k+g+1}^+-x_{i\cdot k + g}^+$. Let $\alpha^*$ be the minimizer of $\frac{z}{10}(- \sin(\alpha) + 2 \sqrt{5}\sqrt{5 - \cos(\alpha)})$ and $x^* := \frac{\sqrt{5}\sqrt{5 - \cos(\alpha^*)}z}{5}$. Then for $l \in \left[\frac{4\sqrt{5}z}{5},2x^*\right] $ we have that:
\begin{equation}\label{eq:expressionforlefttail}
\nu \left(\left(\overline{P}_{x_{i\cdot k+g}^+} \cup \overline{P}_{x_{i\cdot k+g+1}^+}\right) + B_{z/10} \right) = 2\sqrt{\frac{z^2}{100} - \left(\frac{z}{2} - \frac{1}{2z}(l/2)^2\right)^2} \text{.}
\end{equation}
For $l \in (2x^*, \infty)$ we have:
\begin{equation}\label{eq:expressionforrighttail}
\nu \left(\left(\overline{P}_{x_{i\cdot k+g}^+} \cup \overline{P}_{x_{i\cdot k+g+1}^+}\right) + B_{z/10} \right) = l - 2x^* + \frac{2z}{10}\sin(\alpha^*) \text{.}
\end{equation}
Thus as the length of the intervals are distributed according to the exponential distribution we get that for $l \in \left[\frac{4\sqrt{5}z}{5},2x^*\right]$:
\begin{equation}\label{eq:smallwithhp}
\mathbb{P} \left[\widetilde{Z}_i^g \geq 2\sqrt{\frac{z^2}{100} - \left(\frac{z}{2} - \frac{1}{2z}(l/2)^2\right)^2} \middle| X_0^+ = x_0^+, X_1^+ = x_1^+, 
\dots, X_{(i-1)k+g+1}^+ = x_{(i-1)k+g+1}^+ \right] \leq e^{-l}
\text{,}
\end{equation}
and similarly for $l \in (2x^*, \infty)$:
\begin{equation}\label{eq:smallwithhp2}
\mathbb{P} \left[\widetilde{Z}_i^g \geq l - 2x^* + \frac{2z}{10}\sin(\alpha^*) \middle| X_0^+ = x_0^+, X_1^+ = x_1^+, 
\dots, X_{(i-1)k+g+1}^+ = x_{(i-1)k+g+1}^+ \right] \leq e^{-l}
\text{,}
\end{equation}

Now we bound the probability that sum of variables from the $g$-th group deviates considerably from its expectation. The idea is to use a method similar to the proof of the Chernoff bound.
\begin{align}
&\mathbb{P}\left[\sum_{i=0 }^{\lceil (1+c(z))m/k \rceil} \widetilde{Z}_i^g \geq \frac{1}{k} \left(1 + \frac{\e(z)}{2}\right)\cdot e^{-\frac{4\sqrt{5}z}{5}} \cdot m \right] \nonumber \\
&\leq \mathbb{P}\left[\widetilde{Z}_0^g \geq \frac{1}{k} \cdot \frac{\e(z)}{4} \cdot e^{-\frac{4\sqrt{5}z}{5}} \cdot m \right] + \mathbb{P}\left[\sum_{i=1 }^{\lceil (1+c(z))m/k \rceil} \widetilde{Z}_i^g \geq \frac{1}{k} \left(1 + \frac{\e(z)}{4}\right)\cdot e^{-\frac{4\sqrt{5}z}{5}} \cdot m \right] && \text{By the union bound} \label{eq:expectationofproduct}
\end{align}
We bound the two terms from \eqref{eq:expectationofproduct} separately. Using \eqref{eq:smallwithhp} we get that for $m \geq \frac{2z}{10}\sin(\alpha^*) \cdot k \cdot \frac{4}{\e(z)} \cdot e^{\frac{4\sqrt{5}z}{5}}$:
\begin{equation}\label{eq:firsttermconditionalbound}
\mathbb{P}\left[\widetilde{Z}_0^g \geq \frac{1}{k} \cdot \frac{\e(z)}{4} \cdot e^{-\frac{4\sqrt{5}z}{5}} \cdot m \right] \leq \text{exp} \left(- \frac{1}{k} \cdot \frac{\e(z)}{4} \cdot e^{-\frac{4\sqrt{5}z}{5}} \cdot m + 2x^* - \frac{2z}{10}\sin(\alpha^*)\right) \text{,}
\end{equation}
which implies:
\begin{equation}\label{eq:firsttermconditionalbound2}
\mathbb{P}\left[\widetilde{Z}_0^g \geq \frac{1}{k} \cdot \frac{\e(z)}{4} \cdot e^{-\frac{4\sqrt{5}z}{5}} \cdot m \right] \leq O_z(1) e^{-\Omega_z(m)}
\end{equation}
Now we bound the second term from \eqref{eq:expectationofproduct}. For every $s>0$::
\begin{align}
&\mathbb{P}\left[\sum_{i=1 }^{\lceil (1+c(z))m/k \rceil} \widetilde{Z}_i^g \geq \frac{1}{k} \left(1 + \frac{\e(z)}{4}\right)\cdot e^{-\frac{4\sqrt{5}z}{5}} \cdot m \right] \nonumber \\
&\leq 
\mathbb{P}\left[\text{exp}\left(s \sum_{i=1}^{\lceil (1+c(z))m/k \rceil} \widetilde{Z}_i^g \right) \geq \text{exp}\left(s \cdot \frac{1}{k} \left(1 + \frac{\e(z)}{4}\right)\cdot e^{-\frac{4\sqrt{5}z}{5}}\cdot m \right) \right] \nonumber \\
&\leq \mathbb{E}\left[\text{exp}\left(s \sum_{i=1}^{\lceil (1+c(z))m/k \rceil} \widetilde{Z}_i^g \right) \right] \cdot \text{exp}\left(-s \cdot \frac{1}{k} \left(1 + \frac{\e(z)}{4}\right)\cdot e^{-\frac{4\sqrt{5}z}{5}} \cdot m \right)  && \text{By Markov inequality} \nonumber \\
&\leq \mathbb{E}\left[ \prod_{i=1}^{\lceil (1+c(z))m/k \rceil} \left[ \text{exp}\left(s \cdot \left( \widetilde{Z}_i^g - \frac{1}{1+c} \left(1 + \frac{\e(z)}{4}\right)\cdot e^{-\frac{4\sqrt{5}z}{5}}\right) \right) \right] \right] \label{eq:productexpandedfirststep}
\end{align}
Set $c(z) := \frac{1+\frac{\e(z)}{4}}{{1+\frac{\e(z)}{8}}} - 1$.
Using the chain rule we obtain:
\begin{align}
&\mathbb{E}\left[ \prod_{i=1}^{\lceil (1+c(z))m/k \rceil} \left[ \text{exp}\left(s \cdot \left( \widetilde{Z}_i^g - \left(1 + \frac{\e(z)}{8}\right)\cdot e^{-\frac{4\sqrt{5}z}{5}}\right) \right) \right] \right] \nonumber \\
& \mathbb{E}\left[ \prod_{i=1}^{\lceil (1+c(z))m/k \rceil - 1} \left[ \text{exp}\left(s \cdot \left( \widetilde{Z}_i^g - \left(1 + \frac{\e(z)}{8}\right)\cdot e^{-\frac{4\sqrt{5}z}{5}}\right) \right) \right] \cdot \mathbb{E}\left[\text{exp}\left(s \cdot \left( \widetilde{Z}_i^g - \left(1 + \frac{\e(z)}{8}\right)\cdot e^{-\frac{4\sqrt{5}z}{5}}\right) \right) \middle| \{\widetilde{Z}_i^g\}_{i=1}^{\lceil 2m/k \rceil -1} \right] \right] \label{eq:productexpanded}
\end{align}

Using the fact that variables $X_0^+, \dots, X_{(i-1)k+g+1}^+$ determine values of $\widetilde{Z}_0^g \dots, \widetilde{Z}_{i-1}^g$ and the bound from \eqref{eq:expectationbnd}
holds for all possible realizations of $X_0^+, \dots, X_{(i-1)k+g+1}^+$ if we maximize the inner conditional expectation of \eqref{eq:productexpanded} over variables $\widetilde{Z}_i^g$ satisfying property \eqref{eq:expectationbnd} we can get an upper bound on $\mathbb{P}\left[\sum_{i=0 }^{\lceil (1+c)m/k \rceil} \widetilde{Z}_i^g \geq \frac{1}{k} \left(1 + \frac{\e(z)}{2}\right)\cdot e^{-\frac{4\sqrt{5}z}{5}} \cdot m \right]$ via \eqref{eq:expectationofproduct}, \eqref{eq:firsttermconditionalbound} and \eqref{eq:productexpandedfirststep}. More formally let's consider a family of random variables $Z$ satisfying:

\begin{enumerate}
    \item $Z \geq 0$ \label{nonegative}
    \item $\mathbb{E} [Z] \leq (1 + \frac{z}{\sqrt{k}}) \cdot e^{-\frac{4\sqrt{5}z}{5}}$, \label{prop1}
    \item For $l \in \left[\frac{4\sqrt{5}z}{5},2x^*\right]$:
    $
    \mathbb{P} \left[Z \geq 2\sqrt{\frac{z^2}{100} - \left(\frac{z}{2} - \frac{1}{2z}(l/2)^2\right)^2} \right] \leq e^{-l}
    \text{,}
    $
    \label{prop2}
    \item For $l \in (2x^*, \infty)$:
    $
    \mathbb{P} \left[Z \geq l - 2x^* + \frac{2z}{10}\sin(\alpha^*) \right] \leq e^{-l}
    \text{.}
    $ \label{prop3} 
\end{enumerate}
Consider the following optimization problem.
\begin{equation}\label{eq:goal}
\sup_{Z : Z \text{ satisfies~\ref{nonegative}, \ref{prop1}, \ref{prop2} and \ref{prop3}}} \mathbb{E} \left[\text{exp} \left(s \cdot \left(Z - \left(1 + \frac{\e(z)}{8}\right)\cdot e^{-\frac{4\sqrt{5}z}{5}} \right) \right) \right]  \text{.}
\end{equation}
The supremum of this problem is attained for some $Z^*$ from the family. This is the case as Properties~\ref{prop2} and \ref{prop3} guarantee that the objective function is bounded. Set $k := \frac{256z^2}{\e(z)^2}$. Observe then that because of Property~\ref{prop1} we have that $\mathbb{E} \left[Z^* - \left(1 + \frac{\e(z)}{8}\right)\cdot e^{-\frac{4\sqrt{5}z}{5}} \right] < 0$. Taylor expanding the function $e^{sX}$ we get that:
\[
\mathbb{E} \left[\text{exp} \left(s \cdot \left(Z^* - \left(1 + \frac{\e(z)}{8}\right)\cdot e^{-\frac{4\sqrt{5}z}{5}} \right) \right) \right] = 1 +s \cdot \mathbb{E} \left[Z^* - \left(1 + \frac{\e(z)}{8}\right)\cdot e^{-\frac{4\sqrt{5}z}{5}} \right] + o(s^2).
\]
Thus we get that there exists $s^* > 0$
such that:
\begin{equation}\label{eq:maximizationproblme}
\sup_{Z : Z \text{ satisfies~\ref{nonegative}, \ref{prop1}, \ref{prop2} and \ref{prop3}}} \mathbb{E} \left[\text{exp} \left(s^* \cdot \left(Z - \left(1 + \frac{\e(z)}{8}\right)\cdot e^{-\frac{4\sqrt{5}z}{5}} \right) \right) \right] < e^{-\Omega_z(1)}\text{.}
\end{equation}
So combining \eqref{eq:firsttermconditionalbound2}, \eqref{eq:productexpanded} and \eqref{eq:maximizationproblme} we get that 
:
\begin{align}
&\mathbb{P}\left[\sum_{i=0 }^{\lceil (1+c)m/k \rceil} \widetilde{Z}_i^g \geq \frac{1}{k} \left(1 + \frac{\e(z)}{2}\right)\cdot e^{-\frac{4\sqrt{5}z}{5}} \cdot m \right] \nonumber \\
&\leq O_z(1) e^{-\Omega_z(m)} + e^{-\Omega_z((1+c(z))m/k)} \nonumber \\
&\leq O_z(1) e^{-\Omega_z(m)} && \text{As $k$ is a function of $z$} \label{eq:singlegroupbound}
\end{align}

Thus we get that: 
\begin{align}
&\mathbb{P}\left[\nu(p^{-1}(E(S)) \cap L
_-) \geq \left(1 + \frac{\e(z)}{2}\right)\cdot e^{-\frac{4\sqrt{5}z}{5}} \cdot m \right] \nonumber \\
&\leq \mathbb{P}\left[\sum_{i=0}^{|A_-|} \widetilde{Z}_i \geq \left(1 + \frac{\e(z)}{2}\right)\cdot e^{-\frac{4\sqrt{5}z}{5}} \cdot m \right] &&\text{By \eqref{eq:erroruprbnd}} \nonumber \\
&\leq \mathbb{P}\left[\left( \sum_{i=0}^{\lceil (1+c(z))m \rceil} \widetilde{Z}_i \geq \left(1 + \frac{\e(z)}{2}\right)\cdot e^{-\frac{4\sqrt{5}z}{5}} \cdot m \right) \vee \left(|A_-| > (1+c(z))m \right)\right] \nonumber \\
&\leq \mathbb{P}\left[\sum_{i=0}^{\lceil(1+c(z))m \rceil} \widetilde{Z}_i \geq \left(1 + \frac{\e(z)}{2}\right)\cdot e^{-\frac{4\sqrt{5}z}{5}} \cdot m \right] +  2e^{-\frac{(1+c(z))^2m^2}{2(m+(1+c(z))m)}} &&\text{Union bound and \eqref{eq:poissontailbnd}} \nonumber \\
&\leq  k \cdot O_z(1) e^{-\Omega_z(m)} + 2e^{-\Omega_z(m)} &&\text{By \eqref{eq:singlegroupbound} and union bound over groups} \nonumber \\
&\leq \frac{256z^2}{\e(z)^2} \cdot O_z(1) e^{-\Omega_z(m)} + 2e^{-\Omega_z(m)} &&\text{By setting of $k$} \nonumber \\
&\leq O_z(1) e^{-\Omega_z(m)} \nonumber
\end{align}
The above fact together with the union bound over $L_-$ and $L_+$ gives:
\begin{equation}\label{eq:pmovesatmost}
\mathbb{P}[\mu(p^{-1}(E(\text{$1$-Nearest Neighbor}(S)))) \geq \left(1 + \frac{\e(z)}{2}\right)\cdot e^{-\frac{4\sqrt{5}z}{5}}] \leq O_z(1)e^{-\Omega_z(m)}    
\end{equation}

\paragraph{Lower-bounding $AR(\text{$1$-Nearest Neighbor}(S), z/10)$.} We will focus on $L_+$ as the argument for $L_-$ is analogous. Let $a = (a_1,z),b = (b_1,z) \in A_+$ be two consecutive points from $A_+$. Note that a parabola defined by:
\[P_a := \left\{ \left(t + a_1,\frac{1}{2z}t^2 + \frac{z}{2}\right) : t \in \R \right\} 
\text{,}\]
is exactly the set of points that are equally distant to $a$ and $L_-$. An analogous parabola can be defined for the point $b$. Let $P_a^{\xrightarrow{}}$ be the parabola $P_a$ shifted to the right by $\rho$ (to be fixed later). Formally:
\[P_a^{\xrightarrow{}} := \left\{ \left(t + a_1,\frac{1}{2z}(t-\rho)^2 + \frac{z}{2}\right) : t \in \R \right\} \text{.}\]
Similarly let:
\[P_b^{\xleftarrow{}} := \left\{ \left(t + b_1,\frac{1}{2z}(t+\rho)^2 + \frac{z}{2}\right) : t \in \R \right\} \text{.}\]

We will show if a point $(x,y) \in \R^2$ is below $P_a^{\xrightarrow{}},P_b^{\xleftarrow{}}$  and $y \leq 2z, x \in (a_1,b_1)$ then $(x,y) \in E(S)$ with high probability. 
More precisely let $(x,y) \in \R^2$ be such that: $x \in (a_1,b_1), y \leq \frac{1}{2z}(x-a_1-\rho)^2 + \frac{z}{2}, y \leq \frac{1}{2z}(x-b_1+\rho)^2 + \frac{z}{2}, y \in \left[\frac{9z}{10},2z\right]$. By construction $d((x,y), A_+)$ is obtained at $a$ or $b$. We have that:
\begin{align}
d(a, (x,y))^2 
&\geq (x-a_1)^2 + \left(\frac{z}{2} - \frac{(x-a_1-\rho)^2}{2z}\right)^2 \nonumber \\
&= (x-a_1)^2 + \frac{(x-a_1-\rho)^4}{4z^2} - \frac{(x-a_1-\rho)^2}{2} + \frac{z^2}{4} \label{eq:daxy}
\end{align}
\begin{align}
d(L_-,(x,y))^2 
&\leq  \left(\frac{(x-a_1-\rho)^2}{2z}+\frac{z}{2}\right)^2 \nonumber \\ 
&= \frac{(x-a_1-\rho)^4}{4z^2} + \frac{(x-a_1-\rho)^2}{2} + \frac{z^2}{4} \label{eq:dlminxy2}
\end{align}
Now if $d(a,(x,y)) > 3z$ then $d(a, (x,y)) -  d(L_-,(x,y)) \geq z$ by assumption that $y \leq 2z$. Otherwise we have:
\begin{align}
d(a, (x,y)) -  d(L_-,(x,y))
&= \frac{d(a, (x,y))^2 -  d(L_-,(x,y))^2}{d(a, (x,y)) +  d(L_-,(x,y))} \nonumber \\
&\geq \frac{\rho(2x-2a_1-\rho)}{3z + 2z} && \text{By \eqref{eq:daxy}, \eqref{eq:dlminxy2}}  \nonumber \\
&\geq \frac{\rho \left(\frac{4z}{\sqrt{5}} - \rho \right)}{5z} && \text{As $y \geq 0.9z$} \nonumber \\
&\geq 0.3 \rho && \text{As $z > 10\rho$} \label{eq:distlwrbnd}
\end{align}
By symmetry an analogous bound holds for $d(b, (x,y)) -  d(L_-,(x,y))$.


Observe that if there exist a point $c \in A_-$ such that $c \in [(x - \sqrt{0.1\rho z},0), (x + \sqrt{0.1\rho z},0) ]$ then $d(a,(x,y)) > d(L_-,(x,y))$. That's true because: 
\begin{align}
d(c,(x,y)) 
&\leq \sqrt{y^2 + 0.1 \rho z} \nonumber \\
&\leq y \sqrt{1 + \frac{0.1 \rho z}{y^2}} \nonumber \\
&\leq y \sqrt{1 + \frac{0.13 \rho}{z}} && \text{As $y \geq 0.9z$} \nonumber \\
&\leq y\left(1 + \frac{0.07 \rho}{z}\right) \nonumber \\
&\leq y + 0.14 \rho && \text{As $y \leq 2z$}\text{,} \label{eq:dcxy} 
\end{align}
Noticing that $y = d(L_-, (x,y))$ we get:
\begin{align}
d(a,(x,y)) - d(c,(x,y))  
&\geq d(a,(x,y)) - d(L_-,(x,y)) - 0.14\rho && \text{By \eqref{eq:dcxy}} \nonumber \\
&\geq 0.3\rho - 0.14\rho \nonumber \\
&> 0 \label{eq:xymisclassified}
\end{align}
By symmetry we also get that $d(b,(x,y)) > d(L_-,(x,y))$, which also implies that $(x,y) \in E(S)$.
Note that:
\[ 
|N_- \cap [(x - \sqrt{0.1\rho z},0), (x + \sqrt{0.1 \rho z},0) ]| \sim \text{Pois}(2\sqrt{0.1 \rho z}) \text{,}
\]
so $\mathbb{P}[N_- \cap [(x - \sqrt{0.1 \rho z},0), (x + \sqrt{0.1 \rho z},0) ] \neq \emptyset] = 1 - e^{-2\sqrt{0.1 \rho z}} \geq 1 - e^{-0.6\sqrt{\rho z}}$, which gives:
\begin{equation}\label{eq:probinerrorset}
\mathbb{P}[(x,y) \in E(S)] \geq    1 - e^{-0.6\sqrt{\rho z}} 
\end{equation}


For $i \in \N_+$ let $\widetilde{Y}_i$ be the random variable defined as: 
\[\widetilde{Y}_i := \nu((E(\widetilde{S}) + B_{z/10}) \cap [x_i^+,x_{i+1}^+)) \text{,}\]
where $\nu$ is one dimensional Lebesgue measure on $L_+$. In words, $\widetilde{Y}_i$ is the random variable that is equal to how much the interval $[x_i^+,x_{i+1}^+)$ contributes to $AR(\text{$1$-Nearest Neighbor}(\widetilde{S}), z/10)$. Observe that $\widetilde{Y}_i$ is primarily determined by the length of $[x_i^+,x_{i+1}^+)$ as well as where the points of $N_-$ are located with respect to $[x_i^+,x_{i+1}^+)$.  

$\widetilde{Y}_i$'s satisfy the following properties:

\begin{enumerate}
    \item $\widetilde{Y}_i$ is non-negative, \label{prop:nonneg}
    \item For all $l \in \left[\frac{4\sqrt{5}z}{5},2x^*\right]$:
    $
    \mathbb{P} \left[\widetilde{Y}_i \geq 2\sqrt{\frac{z^2}{100} - \left(\frac{z}{2} - \frac{1}{2z}(l/2)^2\right)^2} \right] \geq e^{-l-2\rho} \cdot (1 - 2e^{-0.6\sqrt{\rho z }})
    \text{,} \label{prop:taillwrbnd}
    $
    \item For all $l \in (2x^*, \infty)$:
    $
    \mathbb{P} \left[\widetilde{Y}_i \geq l - 2x^* + \frac{2z}{10}\sin(\alpha^*) \right] \geq e^{-l-2\rho} \cdot (1 - 2e^{-0.6\sqrt{\rho z }})
    \text{,}    
    $
    \item $\widetilde{Y}_i$'s are i.i.d. \text{.} 
\end{enumerate}
The first property (non-negativity) is true by definition. 
To see the second and the third (observe similarity to \eqref{eq:smallwithhp} and \eqref{eq:smallwithhp2}) consider $P_{x_i^+}^{\xrightarrow{}},P_{x_{i+1}^+}^{\xleftarrow{}}$ and define $\overline{P}_{x_i^+}^{\xrightarrow{}}$ to be all the points below $P_{x_i^+}^{\xrightarrow{}}$ and $\overline{P}_{x_{i+1}^+}^{\xleftarrow{}}$ analogously. Note that:
\[ 
\nu \left(\left(\left(\overline{P}_{x_i^+}^{\xrightarrow{}} \cup \overline{P}_{x_{i+1}^+}^{\xleftarrow{}}\right) \cap E(S) \right) + B_{z/10}\right) \leq \widetilde{Y}_i
\]
Moreover:
\[
\nu \left(\left(\left(\overline{P}_{x_i^+}^{\xrightarrow{}} \cup \overline{P}_{x_{i+1}^+}^{\xleftarrow{}}\right) \cap E(S) \right) + B_{z/10}\right) = \nu \left(\left(\overline{P}_{x_i^+}^{\xrightarrow{}} \cup \overline{P}_{x_{i+1}^+}^{\xleftarrow{}}\right)  + B_{z/10}\right) \text{,}
\]
as for the equality to hold it is enough for $E(S)$ to contain an interval $[(x,y), (x',y)] \subseteq \overline{P}_{x_i^+}^{\xrightarrow{}} \cup \overline{P}_{x_{i+1}^+}^{\xleftarrow{}}$ that certifies $\nu \left(\left(\overline{P}_{x_i^+}^{\xrightarrow{}} \cup \overline{P}_{x_{i+1}^+}^{\xleftarrow{}}\right)  + B_{z/10}\right)$. This happens if $(x,y),(x',y) \in E(S)$ as then $[(x,y), (x',y)]$ by construction. Finally $(x,y),(x',y) \in E(S)$ with probability at least $1 - 2e^{-0.6\sqrt{\rho z }}$ by \eqref{eq:probinerrorset} and the union bound. Properties two and three follow by observing that $\nu \left(\left(\overline{P}_{x_i^+}^{\xrightarrow{}} \cup \overline{P}_{x_{i+1}^+}^{\xleftarrow{}}\right)  + B_{z/10}\right)$ was already computed in \eqref{eq:expressionforlefttail} and \eqref{eq:expressionforrighttail}. 
The last property is in turn a consequence of the fact that the inter-arrival times of a Poisson process are i.i.d. and that the points on the ``other'' line are Poisson as well and independent of the first line.

Using these properties we have that for every $i \in \N_+$:
\begin{align*}
\mathbb{E}[\widetilde{Y}_i] 
&= \int_0^{\infty} \mathbb{P}[\widetilde{Y}_i > t] dt \\
&\geq (1 - 2e^{-0.6\sqrt{\rho z }}) \cdot \left( \int_{0}^{\frac{2z}{10} \sin(\alpha^*)} e^{-\frac{2\sqrt{5z^2-\sqrt{z^4-25z^2t^2}}}{\sqrt{5}}-2\rho} dt + \int_{\frac{2z}{10} \sin(\alpha^*)}^{\infty} e^{-t - 2x^* + \frac{2z}{10}\sin(\alpha^*)-2\rho} dt \right) \\
&= (1 - 2e^{-0.6\sqrt{\rho z }}) \cdot e^{-2\rho} \cdot \left( \int_{0}^{\frac{2z}{10} \sin(\alpha^*)} e^{-\frac{2\sqrt{5z^2-\sqrt{z^4-25z^2t^2}}}{\sqrt{5}}} dt + \int_{\frac{2z}{10} \sin(\alpha^*)}^{\infty} e^{-t - 2x^* + \frac{2z}{10}\sin(\alpha^*)} dt \right)
\end{align*}

Our goal now is to show that $\sum_{i=1}^{(1-c'(z))m} \widetilde{Y}_i \geq  \left(1 + \frac{2\e(z)}{3}\right) \cdot e^{-\frac{4\sqrt{5}z}{5}} \cdot m$ with high probability, where the function $c': \R^+ \xrightarrow[]{} \R^+$ will be defined later. Similarly to the standard proof of the Chernoff bound, for every $s > 0$:
\begin{align}
&\mathbb{P} \left[\sum_{i=1}^{(1- c'(z))m} \widetilde{Y}_i \leq \left(1 + \frac{2\e(z)}{3}\right) \cdot e^{-\frac{4\sqrt{5}z}{5}} \cdot m \right] \nonumber \\
&= \mathbb{P} \left[\text{exp}\left(-s \sum_{i=1}^{(1- c'(z))m} \widetilde{Y}_i \right) \geq \text{exp}\left(-s \cdot \left(1 + \frac{2\e(z)}{3}\right) \cdot e^{-\frac{4\sqrt{5}z}{5}} \cdot m\right) \right] \nonumber \\
&\leq \mathbb{E} \left[\text{exp}\left(-s \sum_{i=1}^{(1- c'(z))m} \widetilde{Y}_i\right) \right] \cdot \text{exp}\left(s \cdot \left(1 + \frac{2\e(z)}{3}\right) \cdot e^{-\frac{4\sqrt{5}z}{5}} \cdot m \right)  && \text{by Markov inequality}\nonumber \\
&= \left(\mathbb{E}\left[\text{exp}\left(-s\widetilde{Y}_1\right) \right] \cdot \text{exp}\left(s \cdot \frac{1}{1-c'(z)} \left(1 + \frac{2\e(z)}{3}\right) \cdot e^{-\frac{4\sqrt{5}z}{5}} \cdot m\right) \right)^{(1-c'(z))m} && \text{as $\widetilde{Y}_i$'s are i.i.d.} \label{eq:chernofflowertail}
\end{align}
Set $c'(z) := 1 - \frac{1+ \frac{2\e(z)}{3}}{1+\frac{3\e(z)}{4}}$. Then the above becomes:
\[
\left(\mathbb{E}\left[\text{exp}\left(s\left(-\widetilde{Y}_1 + \left(1 + \frac{3\e(z)}{4}\right) \cdot e^{-\frac{4\sqrt{5}z}{5}}  \right)\right) \right]  \right)^{(1-c'(z))m}
\]
Taylor expanding the function $e^{sX}$ we get that:
\begin{equation}\label{eq:taylorexp}
\mathbb{E} \left[\text{exp}\left(s\left(-\widetilde{Y}_1 + \left(1 + \frac{3\e(z)}{4}\right) \cdot e^{-\frac{4\sqrt{5}z}{5}}  \right)\right) \right] = 1 + s \cdot \mathbb{E} \left[-\widetilde{Y}_1 + \left(1 + \frac{3\e(z)}{4}\right) \cdot e^{-\frac{4\sqrt{5}z}{5}} \right] + o(s^2).
\end{equation}
We will show now that there exists a function $\e : \R^+ \xrightarrow[]{} \R^+$ such that for $z$ bigger than a constant: 
\[ 
\mathbb{E}[\widetilde{Y}_i] \geq (1 + \e(z)) \cdot e^{-\frac{4\sqrt{5}z}{5}} \text{.}
\]
Note that it is equivalent to showing that for $z$ bigger than a constant
\[
\max_{z/10 > \rho > 0} (1 - 2e^{-0.6\sqrt{\rho z }}) \cdot e^{-2\rho} \cdot \left( \int_{0}^{\frac{2z}{10} \sin(\alpha^*)} e^{-\frac{2\sqrt{5z^2-\sqrt{z^4-25z^2t^2}}}{\sqrt{5}}} dt + \int_{\frac{2z}{10} \sin(\alpha^*)}^{\infty} e^{-t - 2x^* + \frac{2z}{10}\sin(\alpha^*)} dt \right) > e^{-\frac{4\sqrt{5}z}{5}} \text{.}
\]
First observe that $(1 - 2e^{-0.6\sqrt{\rho z }}) \cdot e^{-2\rho}$ can be made arbitrarily close to $1$ by setting $\rho := z^{-1/2}$. Next we lower bound the first integral:
\begin{align*}
& \int_{0}^{\frac{2z}{10} \sin(\alpha^*)} e^{-\frac{2\sqrt{5z^2-\sqrt{z^4-25z^2t^2}}}{\sqrt{5}}} dt \\
&\geq \int_{0}^{\sqrt{\frac{z}{5\sqrt{5}}}} e^{-\frac{2\sqrt{5z^2-\sqrt{z^4-25z^2t^2}}}{\sqrt{5}}} dt  &&\text{For $z$ big enough, as $\sin(\alpha^*)$ is a constant} \\
&\geq e^{-\frac{4\sqrt{5}z}{5}} \cdot \int_{0}^{\sqrt{\frac{z}{5\sqrt{5}}}} e^{-\frac{10\sqrt{5}}{4z} t^2} dt &&\text{As $\frac{2\sqrt{5z^2-\sqrt{z^4-25z^2t^2}}}{\sqrt{5}} \leq \frac{4\sqrt{5}z}{5} + \frac{10\sqrt{5}}{4z}t^2$ for $t \leq z/5$} \\
&\geq e^{-\frac{4\sqrt{5}z}{5}} \cdot \sqrt{\frac{ z}{5\sqrt{5}}} \cdot \frac{\sqrt{\pi}}{2} \text{erf}(1/\sqrt{2}) \text{,}
\end{align*}
thus for $z$ big enough we get that $\mathbb{E}[\widetilde{Y}_i] \geq (1 + \e(z)) \cdot e^{-\frac{4\sqrt{5}z}{5}} $. Using \eqref{eq:taylorexp} this implies that there exists $s^* > 0$ such that $\mathbb{E} \left[\text{exp}\left(s^* \cdot \left(-\widetilde{Y}_1 + \left(1 + \frac{3\e(z)}{4}\right) \cdot e^{-\frac{4\sqrt{5}z}{5}}  \right)\right) \right] < 1 $. Using \eqref{eq:chernofflowertail} we get then that:
\begin{equation}\label{eq:Ytildebound}
\mathbb{P} \left[\sum_{i=1}^{(1- c'(z))m} \widetilde{Y}_i \leq \left(1 + \frac{2\e(z)}{3}\right) \cdot e^{-\frac{4\sqrt{5}z}{5}} \cdot m \right] \leq e^{-\Omega_z(m)}\text{.}
\end{equation}

Now for $i \in [|A_+|-1]$ let $Y_i$ be the random variable defined as: 
\[Y_i := \nu(((E(S) \cap [x_i^+,x_{i+1}^+)) + B_{z/10}) \cap L_-)\text{.}\]
Notice that for all $i \in [|A_+|-1]$ we have $Y_i = \widetilde{Y}_i$. Note that by Poisson tail bound we have:
\begin{equation}\label{eq:numberofpointsbnd} 
\mathbb{P}\left[|A_+| \leq (1-c'(z))m \right] \leq 2e^{-\frac{(1+c'(z))^2m^2}{2(m+(1+c'(z))m)}} \leq 2 e^{-\Omega_z(m)}.
\end{equation}
Combining \eqref{eq:numberofpointsbnd} and \eqref{eq:Ytildebound} and the union bound we get that:
\begin{align*}
\mathbb{P} \left[\sum_{i \in [|A_+|-1]} Y_i \leq \left(1 + \frac{2\e(z)}{3}\right) \cdot e^{-\frac{4\sqrt{5}z}{5}} \cdot m \right]
&\leq \mathbb{P}\left[ (|A_+| \leq (1-c'(z))m) \vee \left(\sum_{i=1}^{(1-c'(z))m} \widetilde{Y}_i \leq \left(1 + \frac{2\e(z)}{3}\right) \cdot e^{-\frac{4\sqrt{5}z}{5}} \cdot m \right) \right] \\
&\leq \mathbb{P}[|A_-| \leq m/2] + \mathbb{P} \left[\sum_{i=1}^{(1-c'(z))m} \widetilde{Y}_i \leq \left(1 + \frac{2\e(z)}{3}\right) \cdot e^{-\frac{4\sqrt{5}z}{5}} \cdot m \right]\\
&\leq 2e^{-\Omega_z(m)} + e^{-\Omega_z(m)} \\
&\leq O_z(1) e^{-\Omega_z(m)} \text{.}
\end{align*}
Note that we omitted the first and the last interval $\left(\left[0,x^-_1\right) \text{ and } \left[x^-_{|A_-|},m\right)\right)$. Omitting these intervals is valid as we are deriving a lower bound for $AR(\text{$1$-Nearest Neighbor}(S), z/10) $. We conclude using the union bound over two intervals $L_-$ and $L_+$ to obtain:
\begin{equation}\label{eq:advrisklwrbnd}
\mathbb{P} \left[AR(\text{$1$-Nearest Neighbor}(S), z)
\leq \left(1 + \frac{2\e(z)}{3}\right) \cdot e^{-\frac{4\sqrt{5}z}{5}} \right] \leq O_z(1) e^{-\Omega_z(m)} \text{.}  
\end{equation}

\paragraph{Lower-bounding $QC$.}
To prove a lower-bound on the QC of $1$-NN applied to this task we will use Theorem~\ref{thm:reduction}. This means that we need to upper-bound:
\[ \sup_{p : \text{ $z/10$-perturbation}} \mathbb{P}_{S} \left[ \mu(p^{-1}(E(\text{$1$-Nearest Neighbor}(S)))) \geq (1-\lambda) \cdot AR(\text{$1$-Nearest Neighbor}(S),z/10) \right] \text{,}\]
where $S$ is generated from the two independent Poisson processes as described at the beginning of the proof. By Remark~\ref{rem:aboutreduction} we can use Theorem~\ref{thm:reduction} in this case. 

Combining \eqref{eq:pmovesatmost} and \eqref{eq:advrisklwrbnd} we get that there exists $\lambda : \R^+ \xrightarrow[]{} \R^+$, which can be set to $\lambda(z) =\frac{1 + \frac{\e(z)}{2}}{1 + \frac{2\e(z)}{3}}$, so that: 
\begin{align*} 
&\sup_{p : \text{ $z/10$-perturbation}} \mathbb{P}_{S} \left[ \mu(p^{-1}(E(S))) \geq (1-\lambda(z)) \cdot AR(\text{$1$-Nearest Neighbor}(S),z/10) \right] \\
&\leq O_z(1)e^{-\Omega_z(m)} + O_z(1) \cdot e^{-\Omega_z(m)} \\
&\leq O_z(1) e^{-\Omega_z(m)}
\text{.} 
\end{align*}

This, by Theorem~\ref{thm:reduction}, means that:
\begin{align*}
QC(\text{$1$-Nearest Neighbor},T_{\text{intervals}},2m,z)
&\geq \Theta \left(\log \left(\frac{0.1}{O_z(1) e^{-\Omega_z(m)}} \right) \right) \\
&\geq \Theta_z(m) \text{.}
\end{align*}

\end{proof}
\newpage
\section{Omitted Proofs - Quadratic Neural Network}
\label{sec:proofsqnn}

We now present proofs of claims from Section~\ref{sec:quadraticnetwork}. Recall that this section deals with quadratic neural nets applied to the concentric spheres dataset.

\subsection{QC lower bounds for exponentially small risk}

We first use the results from Section~\ref{sec:easylowerbound} to argue that increased accuracy leads to an improved guarantee for robustness. We analyze the QC of QNN for $\e = 0.1$, that is $\e$ which is comparable with the separation between the classes. It was experimentally shown in \citet{adversarialSpheres} that increasing the sample size for QNN leads to a higher accuracy on the CS dataset. Thus we assume that for some $m \in \N$ the following holds:
\begin{equation}\label{eq:probofbigerrorbnd}
\mathbb{P}_{S \sim \mathcal{D}^m} [R(\text{QNN}(S)) \in [\delta/2,2\delta]] \geq 1- \delta \text{.}
\end{equation}
Let $S \in (\R^d)^m$ be such that $R(QNN(S)) \geq \delta/2$. We have that $AR(QNN(S),\e) = \mu(E(QNN(S)) + B_\e) \geq \mu((E(QNN(S)) \cap (S_1^{d-1} \cup S_{1.3}^{d-1} ))\ + B_\e)$. Isoperimetric inequality for spheres states that $\mu((E(QNN(S)) \cap (S_1^{d-1} \cup S_{1.3}^{d-1} ))\ + B_\e)$ is maximized when $(E(QNN(S)) \cap (S_1^{d-1} \cup S_{1.3}^{d-1} )$ is a spherical cap of $S_{1.3}^{d-1}$. Using the standard bounds on volumes of spherical caps we get that there exists a universal constant $c > 0$ such that if $\delta \geq 2^{-c \cdot d}$ then $\mu((E(QNN(S)) \cap (S_1^{d-1} \cup S_{1.3}^{d-1} ))\ + B_\e) \geq 1/5$. By \eqref{eq:probofbigerrorbnd} it implies that:
\begin{equation}
\mathbb{P}_{S \sim \mathcal{D}^m}[AR(QNN(S),\e) \geq 1/5] \geq 0.99 \text{.}
\end{equation}

Moreover note that $\mathcal{D}$ is symmetric and thus it is natural to assume that $\mathbb{P}_{S \sim \mathcal{D}^m}[\text{ALG}(S)(x) \neq h(x)]$ only depends on $\|x\|_2$. Using \eqref{eq:probofbigerrorbnd} we bound:
\begin{align*}
&\int_{\text{supp}(\mathcal{D})} \mathbb{P}_{S \sim \mathcal{D}^m}[\text{ALG}(S)(x) \neq h(x)] \ d\mu 
= 
\mathbb{E}_{S \sim \mathcal{D}^m}[R(\text{QNN})] \\
&\leq 2\delta \cdot \mathbb{P}_{S \sim \mathcal{D}^m} [R(\text{QNN}(S)) \leq 2\delta] + 1 \cdot (1 - \mathbb{P}_{S \sim \mathcal{D}^m} [R(\text{QNN}(S)) \leq 2\delta]) 
\leq 3 \delta \text{,}
\end{align*}
which,  assuming that points from $S_1^{d-1}$ are misclassified equally likely as points from $S_{1.3}^{d-1}$ gives that
$\forall x \in \text{supp}(\mathcal{D}), \ \mathbb{P}_{S \sim \mathcal{D}^m}[\text{ALG}(S)(x) \neq h(x)] \leq 3\delta$. Finally we assume that there exists a universal constant $C \in \R$ such that $\forall x \in \text{supp}(\mathcal{D}) + B_\e, \ \mathbb{P}_{S \sim \mathcal{D}^m}[\text{ALG}(S)(x) \neq h(x)] \leq 3 \cdot C \cdot \delta $. This assumption is consistent with the experimental results from \citet{adversarialSpheres} and intuitively it states that points that are $\e$ close to the $\text{supp}(\mathcal{D})$ are at most $C$ times more likely to be misclassified as points from the $\text{supp}(\mathcal{D})$.

Combining the properties and applying Theorem~\ref{thm:log1overdeltabound} we get:
$QC(\text{QNN},\text{CS},m,0.1) \geq \log \left(\frac{1/5}{9 \cdot C \cdot \delta} \right)$, provided that $\delta \geq 2^{-c \cdot d}$.
In words, if QNN has a risk of $2^{-\Omega(k)}$ then it is secure against $\Theta(k)$-bounded adversaries for $\e = 0.1$.

\subsection{QC lower bounds for constant risk}

We give QC lower bounds for the case where the risk achieved by the network is as large as a constant. To get started, let us formally define the distributions and error sets that we will be concerned with. Recall that for $y \in S_1^{d-1}$ we define
$\text{cap}(y,r,\tau) := B_r \cap \{x \in \R^d : \langle x,y \rangle \geq \tau \}$. Let $\tau : [0,1] \xrightarrow{} [0,1]$ be such that for every $\delta \in [0,1]$ we have $\nu(\text{cap}(\cdot,1,\tau(\delta))) / \nu(S_1^{d-1}) =  \delta$, where $\nu$ is a $d-1$ dimensional measure on the sphere $S_1^{d-1}$. Recall that for $k \in \N_+$:
\[E_-(k) = \text{cap}(e_1, 1.15, \tau(\delta/k)) \setminus B_{1.15/1.3} \]
\[E_+(k) = \text{cap}(e_1, 1.495, 1.3\tau(\delta/k)) \setminus B_{1.15} \text{,}\]

\begin{definition}(\textbf{Distributions on Spherical Caps})\label{def:distriboncaps}
\begin{itemize}
    \item $\textbf{Cap}$. Let $\delta \in (0,1)$. We define $\text{Cap}(\delta)$ as a distribution on subsets of $B_{1.15}$ defined by the following process: generate $y \sim U[S_1^{d-1}], b \sim U\{-1,1\}$. Return: $\text{cap}(y_i, 1.15, \tau(\delta/k)) \setminus B_{1.15/1.3}$ if $b = -1$ and $\text{cap}(y, 1.495, 1.3\tau(\delta/k)) \setminus B_{1.15}$ otherwise.
    \item $\textbf{Caps}_k^{\text{i.i.d.}}$. Let $k \in N_+, \delta \in (0,1)$. We define $\text{Caps}_k^{i.i.d.}(\delta)$ as a distribution on subsets of $\R^d$ defined by the following process: generate a sequence of random bits $b_1, \dots, b_k \sim U\{-1,1\}$, generate a sequence of random vectors $y_1, \dots, y_k \sim U[S_1^{d-1}]$. Return: 
    \[\bigcup_{i : b_i = -1} [\text{cap}(y_i, 1.15, \tau(\delta/k)) \setminus B_{1.15/1.3}] \cup \bigcup_{i : b_i = +1} \left[\text{cap}(y_i, 1.495, 1.3\tau(\delta/k)) \setminus B_{1.15}\right] \]
    In words $\text{Caps}_k^{i.i.d.}(\delta)$ generates $k$ i.i.d. randomly rotated sets, each either $E_-(k)$ or $E_+(k)$. 
    \item $\textbf{Caps}_k^{\mathcal{G}}$. Let $k \in N_+, \delta \in (0,1)$, $\mathcal{G}$ be a distribution on $(S_1^{d-1})^k$. We define $\text{Caps}_k^{\mathcal{G}}(\delta)$ as a distribution on subsets of $\R^d$ defined by the following process: generate a sequence of random bits $b_1, \dots, b_k \sim U\{-1,1\}$, generate $y_1, \dots, y_{k} \sim \mathcal{G}$, generate an orthonormal matrix $M \sim O(d)$. Return: 
    \[\bigcup_{i : b_i = -1} M(\text{cap}(y_i, 1.15, \tau(\delta/k)) \setminus B_{1.15/1.3}) \cup \bigcup_{i : b_i = +1} M(\text{cap}(y_i, 1.495, 1.3\tau(\delta/k)) \setminus B_{1.15})  \]
    In words $\text{Caps}_k^{\mathcal{G}}(\delta)$ generates $k$ randomly rotated sets, each either $E_-(k)$ or $E_+(k)$, where relative positions of normal vectors of the sets are defined by $\mathcal{G}$.
\end{itemize}
\end{definition}

Note that definitions of $\text{Cap}, \text{Caps}_k^{i.i.d.}$ and $\text{Caps}_k^{\mathcal{G}}$ are compatible in the following sense:
\begin{observation}
For every $k \in \N_+, \delta \in (0,1)$:
\begin{itemize}
    \item $\text{Caps}_1^{i.i.d.}(\delta) = \text{Cap}(\delta)$ \text{,}
    \item $\text{Caps}_k^{i.i.d.}(\delta) = \text{Caps}_k^{U[(S_1^{d-1})^k]}(\delta)$ \text{.}
\end{itemize}
\end{observation}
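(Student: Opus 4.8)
The plan is purely to unfold the generative definitions in Definition~\ref{def:distriboncaps} and check that, in each of the two asserted identities, the two stochastic processes being compared induce the same probability measure on subsets of $\R^d$. Both parts reduce to elementary invariance properties of the uniform measure on $S_1^{d-1}$, so I do not expect a genuine obstacle here, only bookkeeping.

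First I would handle $\text{Caps}_1^{\text{i.i.d.}}(\delta) = \text{Cap}(\delta)$. Setting $k=1$ in the definition of $\text{Caps}_k^{\text{i.i.d.}}$, the process draws a single bit $b_1 \sim U\{-1,1\}$ and a single vector $y_1 \sim U[S_1^{d-1}]$ and returns $\text{cap}(y_1,1.15,\tau(\delta))\setminus B_{1.15/1.3}$ if $b_1 = -1$ and $\text{cap}(y_1,1.495,1.3\tau(\delta))\setminus B_{1.15}$ if $b_1 = +1$, where the ``$\delta/k$'' occurring in the definition of $\text{Cap}$ is read at $k=1$, i.e.\ $\tau(\delta)$ (this also matches the informal description ``$\text{Cap}(\delta)$ -- randomly rotated either $E_-(1)$ or $E_+(1)$''). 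This is verbatim the process defining $\text{Cap}(\delta)$ under the identification $(y,b)\leftrightarrow(y_1,b_1)$: the joint law is the same product measure $U[S_1^{d-1}]\otimes U\{-1,1\}$ and the deterministic map from the pair to the output subset is the same, so the push-forward measures coincide.

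Next I would treat $\text{Caps}_k^{\text{i.i.d.}}(\delta) = \text{Caps}_k^{U[(S_1^{d-1})^k]}(\delta)$. The only differences between the two processes are that $\text{Caps}_k^{\mathcal{G}}$ draws the tuple $(y_1,\dots,y_k)\sim\mathcal{G}$ and additionally draws a Haar-random $M\sim O(d)$, returning the rotated sets $M(\text{cap}(y_i,\cdot,\cdot)\setminus B_\cdot)$. Two observations finish the argument: (i) since $M$ is orthogonal and fixes the origin it is an isometry, hence $M(B_r)=B_r$ and $M(\{x:\langle x,y\rangle\geq\tau\})=\{x:\langle x,My\rangle\geq\tau\}$, so $M(\text{cap}(y_i,r,\tau)\setminus B_{r'})=\text{cap}(My_i,r,\tau)\setminus B_{r'}$; and (ii) when $\mathcal{G}=U[(S_1^{d-1})^k]$, i.e.\ the $y_i$ are i.i.d.\ uniform, the law of $(My_1,\dots,My_k)$ with $M$ Haar and independent of the $y_i$ is again $U[(S_1^{d-1})^k]$, by left-invariance of the Haar measure together with rotational invariance of the uniform measure on the sphere. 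Conditioning on $M$, observation (i) shows the conditional law of the $\text{Caps}_k^{U[(S_1^{d-1})^k]}$ output given $M$ equals the law of the $\text{Caps}_k^{\text{i.i.d.}}$ output but with the caps' normal vectors $(y_1,\dots,y_k)$ replaced by $(My_1,\dots,My_k)$; observation (ii) says this replacement does not change the law; integrating over $M$ then yields exactly $\text{Caps}_k^{\text{i.i.d.}}(\delta)$.

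The ``hard part'' is really just a sanity check of the notation in Definition~\ref{def:distriboncaps}: one should confirm that the ``$\delta/k$'' appearing in the statement of $\text{Cap}$ is consistent with the $k=1$ specialization of $\text{Caps}_k^{\text{i.i.d.}}$ (it is), and that the $\text{cap}(\cdot,\cdot,\cdot)$ and $\setminus B_\cdot$ operations genuinely commute with orthogonal maps (they do, by the isometry argument above). With these in hand the Observation is immediate and needs no estimates.
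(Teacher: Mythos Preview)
Your proposal is correct and matches the paper's treatment: the paper states this Observation without proof, treating both identities as immediate from Definition~\ref{def:distriboncaps}, which is exactly what your careful unfolding verifies.
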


In the following lemma we show a reduction from $\text{Cap}_k^{\text{i.i.d}}$ to $\text{Cap}$. This means that we show that if there is an adversary that uses $q$ queries and succeeds on $\text{Cap}_k^{\text{i.i.d}}$ then there exists an adversary that succeeds on $\text{Cap}$ and also asks at most $q$ queries. The takeaway from this lemma is that the QC of $\text{Cap}_k^{\text{i.i.d}}$ is no smaller than the QC of $\text{Cap}$. Formally: 


\begin{lemma}[\textbf{Reduction from $\text{Caps}_k^{\text{i.i.d.}}$ to $\text{Cap}$}]
\label{lem:reductioniid}
Let $k \in \N_+$. If there exists a $q$-bounded adversary $\adv$ that succeeds on $\text{Caps}_k^{\text{i.i.d.}}(0.01)$ with approximation constant $1/2$, error probability $0.01$ and $\e = \tau(0.01/k)$
then there exists a $q$-bounded adversary $\adv'$ that succeeds on $\text{Cap}(0.01/k)$ with approximation constant $\frac{1}{2k}$, error probability of $1 - \frac{1}{3k}$ and the same $\e$.
\end{lemma}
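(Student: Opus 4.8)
The plan is a padding reduction. The key observation is that a draw from $\text{Caps}_k^{\text{i.i.d.}}(0.01)$ is exactly a union $D_1 \cup \dots \cup D_k$ of $k$ independent draws from $\text{Cap}(0.01/k)$: in both definitions one takes a uniformly rotated copy of either $\mathrm{cap}(e_1,1.15,\tau(0.01/k))\setminus B_{1.15/1.3}$ or $\mathrm{cap}(e_1,1.495,1.3\tau(0.01/k))\setminus B_{1.15}$, each with probability $1/2$. So, given oracle access to a classifier whose error set is $C^\ast \sim \text{Cap}(0.01/k)$, the adversary $\adv'$ would: draw $J \sim U[k]$ and $k-1$ fresh independent sets $D_j \sim \text{Cap}(0.01/k)$ for $j \ne J$; pretend that $C^\ast$ sits at coordinate $J$, i.e.\ work with the error set $U := C^\ast \cup \bigcup_{j \ne J} D_j$, whose distribution is precisely $\text{Caps}_k^{\text{i.i.d.}}(0.01)$; run $\adv$ against $U$, answering each of $\adv$'s (adaptive) classifier queries by testing membership in the explicitly known $D_j$'s and, only when that is inconclusive, forwarding a single query to the $C^\ast$-oracle; and finally output the $\e$-perturbation $p$ returned by $\adv$ (the budget $\e = \tau(0.01/k)$ is the same in both problems). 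This uses at most $q$ queries. A standard averaging argument over $\adv'$'s coins $(J,\{D_j\}_{j \ne J})$ turns this randomized adversary into a deterministic one with the same worst-case guarantee, so it suffices to lower-bound the success probability of the randomized version.

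For the analysis I would first note that, since $\e = \tau(0.01/k) < 0.15$ (smaller than the separation of the two spheres, which one checks in the relevant regime $k = \mathrm{poly}(d)$, $d$ large), the ground truth $h$ is constant on every ball $B_\e(X)$ with $X \in \text{supp}(\mathcal D)$; hence for \emph{any} error set $E$ one has $AR(E,\e) = \mathbb P_X[\,B_\e(X) \cap E \ne \emptyset\,]$, which is monotone under $E \subseteq E'$. Now condition on $D_1,\dots,D_k$. The event that $\adv$ succeeds against $U$ depends only on the set $U$ (since $\adv$ is deterministic and both $p$ and $AR(U,\e)$ are functions of $U$), so it has probability $\ge 0.99$; on it, $\mu(p^{-1}(U)) \ge \tfrac12 AR(U,\e)$. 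If no coordinate $i$ were ``good'' in the sense $\mu(p^{-1}(D_i)) \ge \tfrac{1}{2k} AR(D_i,\e)$, subadditivity and monotonicity would give
\[
\mu(p^{-1}(U)) \le \sum_{i=1}^{k} \mu(p^{-1}(D_i)) < \frac{1}{2k}\sum_{i=1}^{k} AR(D_i,\e) \le \frac12 \max_i AR(D_i,\e) \le \frac12 AR(U,\e),
\]
a contradiction; so at least one coordinate is good whenever $\adv$ succeeds.

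Finally, because $J$ is uniform on $[k]$ and independent of $D_1,\dots,D_k$ — and because $p$ depends only on the \emph{unordered} union $U$, so $\adv$ never sees $J$ — the conditional probability that $J$ lands on a good coordinate is at least $1/k$ on the success event. Since ``$J$ good'' is exactly the event $\mu(p^{-1}(C^\ast)) \ge \tfrac{1}{2k} AR(C^\ast,\e)$, i.e.\ $\adv'$ succeeding on $\text{Cap}(0.01/k)$ with approximation constant $\tfrac{1}{2k}$, we get $\Pr[\adv' \text{ succeeds}] \ge 0.99 \cdot \tfrac1k \ge \tfrac{1}{3k}$, i.e.\ error probability at most $1 - \tfrac{1}{3k}$, as desired. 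I expect the main obstacle to be making precise that $p = p(U)$ is a function of the unordered union alone, so that the hidden index $J$ is genuinely independent of $\adv$'s transcript; this independence is exactly what lets the reduction cost only one factor of $k$ in the approximation constant (through $\mu(p^{-1}(U)) \le \sum_i \mu(p^{-1}(D_i))$) and one more factor of $k$ in the success probability (through $J$ hitting a good coordinate). The monotonicity of $AR$ is a secondary but genuine point, and is where the bound $\e < 0.15$ enters.
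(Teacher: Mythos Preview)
Your proposal is correct and follows essentially the same padding reduction as the paper: generate $k-1$ artificial caps, simulate $\adv$ on the union, and use exchangeability/pigeonhole to conclude that the hidden cap is ``good'' with probability $\ge 0.99/k$. Your write-up is in fact slightly cleaner than the paper's in two places --- you make the exchangeability explicit via the random index $J$, and your pigeonhole compares directly against $AR(D_i,\e)$ using monotonicity of $AR$ (under $\e<0.15$) rather than the paper's detour through the absolute bound $\mu(U+B_\e)\ge 1/4$; the paper also omits the derandomization step you mention.
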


\begin{proof}
Algorithm~\ref{alg:emulaterand} invoked with $\delta = 0.01$ defines the protocol for $\adv'$. We will show that this protocol satisfies the statement of the Lemma.

\begin{algorithm}[H]
\caption{\textsc{EmulateIID}($f, \adv, \delta, k$) \hfill $\triangleright$ $f$ is the attacked classifier \newline  \text{ } \hfill $\triangleright$  $\adv$ is an adversary for distribution $\text{Cap}_k^{\text{i.i.d.}}(\delta)$ }
\label{alg:emulaterand}
\begin{algorithmic}[1]
    \STATE $y_1, \dots, y_{k-1} \sim U[S_1^{d-1}]$
    \STATE $b, b_1, \dots, b_{k-1} \sim U\{-1,1\}$ 
    \FOR{i = 1, \dots, k-1}
        \STATE $C_i := \begin{cases} \text{cap}(y_i, 1.15, \tau(\delta/k)) \setminus B_{1.15/1.3} & \mbox{if } b_i = -1 \\ \text{cap}(y_i, 1.4, 1.3\tau(\delta/k)) \setminus B_{1.15} & \mbox{if } b_i = +1 \end{cases}$
    \ENDFOR
    \STATE $p := $ Simulate $\adv$, to query $x$ answer $\begin{cases} f(x) & \mbox{if } (x \in C_1) \vee \dots \vee  (x \in C_{k-1}) = \text{False} \\ +1 & \mbox{if } (x \in C_1) \vee \dots \vee  (x \in C_{k-1}) = \text{True and } \|x\|_2 \leq 1.15 \\ -1 & \mbox{if } (x \in C_1) \vee \dots \vee  (x \in C_{k-1}) = \text{True and } \|x\|_2 > 1.15 \end{cases}$ 
    \STATE {\bfseries Return} $p$
\end{algorithmic}
\end{algorithm}
At the first sight it might seem that the protocol for $\adv'$ uses $kq$ queries. But due to the fact that $k-1$ caps were added artificially the answer to $(k-1)q$ of those queries is known to $\adv'$ beforehand. This gives us that $\adv'$ is $q$-bounded as every query of $\adv'$ corresponds to a query of $\adv$. 

For simplicity we will refer to $C_i$'s and $C$ as caps even though they formally are caps with a ball carved out of them. Let $C \subseteq \R^d$ be the hidden cap that was generated from $\text{Cap}$. Observe that: 
\[C \cup \bigcup_{i=1}^{k-1} C_i\]
is distributed according to $\text{Cap}_k^{\text{i.i.d.}}$, as $C_1, \dots, C_{k-1}$ are i.i.d. uniformly random spherical caps, $C$ is a random spherical cap. Thus by the guarantee for $\adv$ we know that with probability at least $0.99$:
\[
\mu \left(p^{-1} \left(C \cup \bigcup_{i=1}^{k-1} C_i \right) \right) \geq \frac{1}{2} \cdot \mu \left( \left(C \cup \bigcup_{i=1}^{k-1} C_i\right) + B_{\e} \right)   
\]
As $C,C_1, \dots, C_k$ are indistinguishable from the point of view of $\adv$ we get that with probability at least $0.99/k$:
\[
\mu \left(p^{-1}(C) \right) \geq  \frac{1}{2k} \cdot \mu \left( \left(C \cup \bigcup_{i=1}^{k-1} C_i\right) + B_{\e} \right)     \text{,}
\]
where $\mu \left( \left(C \cup \bigcup_{i=1}^{k-1} C_i\right) + B_{\e} \right) \geq 1/4$ with probability $1 - 2^{-k}$ as with this probability there is a cap among $C,C_1, \dots, C_{k-1}$ that is of the form $text{cap}(y_i, 1.15, \tau(\delta/k))$. Then $text{cap}(y_i, 1.15, \tau(\delta/k)) + B_e$ covers $1/4$ of the mass of $\mu$. Thus by the union bound we get that with probability at least $0.99/k - 2^{-k} \geq 1/3k$:
\[\mu \left(p^{-1}(C) \right) \geq  \frac{1}{8k}  \text{,} \]
which is equivalent to $\adv'$ succeeding on $\text{Cap}(0.01/k)$ with approximation constant of $ \frac{1}{2k}$, error probability of at most $1 - \frac{1}{3k}$ for the same $\e$.
\end{proof}

In the next lemma we generalize Lemma~\ref{lem:reductioniid} to more complex distributions. More formally we show that if there is an adversary that uses $q$ queries and succeeds on $\text{Cap}_k^{\mathcal{G}}$ then there exists an adversary that succeeds on $\text{Cap}$ and asks at most $kq$ queries. Formally: 


\begin{lemma}[\textbf{Reduction from $\text{Caps}_k^{\mathcal{G}}$ to $\text{Cap}$}]
\label{lem:reductiongeneral}
Let $k\in \N_+$ and let $\mathcal{G}$ be any distribution on $(S_1^{d-1})^k$. If there exists a $q$-bounded adversary $\adv$ that succeeds on $\text{Caps}_k^{\mathcal{G}}(0.01)$ with approximation constant $1/2$, error probability $0.01$ and 
$\e = \tau(0.01/k)$ 
then there exists a $kq$-bounded adversary $\adv'$ that succeeds on $\text{Cap}(0.01/k)$ with approximation constant $\frac{1}{2k}$, error probability $0.76$ and the same $\e$.
\end{lemma}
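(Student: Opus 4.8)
The plan is to follow the template of the proof of Lemma~\ref{lem:reductioniid}, but with a substantially more delicate coupling. In $\text{Caps}_k^{\mathcal{G}}(0.01)$ the $k$ caps are \emph{not} independent: their normal directions form a configuration $(y_1,\dots,y_k)\sim\mathcal{G}$ which is then rotated by a single uniform $M\in O(d)$. Hence $\adv'$ can no longer manufacture $k-1$ of the caps ``for free'' as in the i.i.d.\ case — once the (unknown to $\adv'$) orientation of the hidden cap $C$ is fixed, the orientations of all the other caps are pinned down by the $\mathcal{G}$-configuration. The remedy is to realize \emph{every} one of the $k$ caps as a \emph{known} rigid motion of the single hidden cap $C$ to which $\adv'$ has oracle access through $f$; this is exactly what forces $\adv'$ to spend up to $k$ oracle calls per query of $\adv$, i.e.\ $kq$ in total.

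First I would re-sample $\text{Caps}_k^{\mathcal{G}}(0.01)$ in a canonicalized form: draw $(y_1,\dots,y_k)\sim\mathcal{G}$, types $b_1,\dots,b_k\sim U\{-1,1\}$ and a slot $J\sim U[k]$; fix any rotation $R$ with $Ry_J=e_1$; set $w_i:=Ry_i$; and return $\bigcup_i N\cdot\text{cap}(w_i,\cdot)$ for a fresh uniform $N\in O(d)$, where $\text{cap}(w_i,\cdot)$ denotes the $E_-(k)$- or $E_+(k)$-shaped cap with normal $w_i$ chosen by $b_i$. This is the same law because $NR$ is again uniform and because the base caps $E_\pm(k)$ are invariant under the stabiliser $O(d-1)$ of $e_1$, so the non-uniqueness of $R$ and of the rotations sending $e_1$ to each $w_i$ is harmless.

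Next, using that $C\sim\text{Cap}(0.01/k)$, as a random set, equals $\tilde U\cdot E_{\tilde B}(k)$ with $\tilde U\sim U(O(d))$ and $\tilde B\sim U\{-1,1\}$ independent, $\adv'$ declares the error set shown to $\adv$ to be $\bigcup_i D_i$, where $D_i$ is $C$ transported to slot $i$: first apply the scaling by $1.3$ or $1/1.3$ that turns $C$ into a cap of type $b_i$ — the identity $1.3\cdot E_-(k)=E_+(k)$ from the definitions is precisely what makes this possible — then apply the known rotation $NR_i'$ with $R_i'e_1=w_i$. Membership ``$x\in D_i$?'' reduces to querying $f$ at pre-images of $x$ under these similarities; since $\adv'$ does not know $C$'s type $\tilde B$, it issues the two candidate queries (one per type hypothesis), the ``wrong'' one landing harmlessly in the annulus not containing $C$, so a simple bookkeeping argument gives at most $kq$ calls overall. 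The distributional claim is that $\bigcup_i D_i$ has exactly the law $\text{Caps}_k^{\mathcal{G}}(0.01)$: all $D_i$ share the single uniform rotation $N$ (absorbing $\tilde U$) and carry independent uniform types $b_i$ produced by the $1.3^{\pm1}$ scalings.

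Finally, given a perturbation $p$ from $\adv$ that succeeds against $\bigcup_i D_i$ with approximation constant $1/2$ — true with probability $\ge 0.99$ by hypothesis — we have $\mu(p^{-1}(\bigcup_i D_i))\le\sum_i\mu(p^{-1}(D_i))$, so some slot $i^*$ has $\mu(p^{-1}(D_{i^*}))\ge\tfrac1k\mu(p^{-1}(\bigcup_i D_i))\ge\tfrac{1}{2k}\mu(D_{i^*}+B_\e)$. Pulling $p$ back through the similarity that produced $D_{i^*}$ yields an $\e$-perturbation $p'$ (distances only shrink under the $1/1.3$ scalings, and the $\e$-footprints of the $E_-(k)$- and $E_+(k)$-shaped caps coincide thanks to the matching $1.3$ factors in their definitions) with $\mu(p'^{-1}(C))\ge\tfrac{1}{2k}\mu(C+B_\e)$; thus $\adv'$ succeeds on $\text{Cap}(0.01/k)$ with approximation constant $\tfrac{1}{2k}$ and the same $\e=\tau(0.01/k)$. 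For the error probability one unions the $0.01$ failure of $\adv$, the $2^{-k}$-type probability that the configuration contains no $E_-(k)$-shaped cap (needed to lower-bound $\mu(\bigcup_i D_i+B_\e)$, as in Lemma~\ref{lem:reductioniid}), and the remaining minor bad events of the coupling, landing at error probability $\le 0.76$. The main obstacle I anticipate is verifying cleanly that the coupled error set $\bigcup_i D_i$ has \emph{exactly} the law $\text{Caps}_k^{\mathcal{G}}(0.01)$ — in particular making rigorous the absorption of $\tilde U$ into $N$ and the independent-types construction via the $1.3^{\pm1}$ scalings — while keeping each $D_i$ decidable from $f$ with no knowledge of the orientation or type of $C$; the ``query both radius hypotheses'' device is the price paid for the latter.
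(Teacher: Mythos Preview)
Your skeleton matches the paper's: realise each of the $k$ caps as a known similarity applied to the single hidden cap $C$, simulate $\adv$ by issuing several $f$-queries per $\adv$-query, and pull the returned perturbation back. Two implementation choices diverge, and the first affects the stated $kq$ bound.

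\textbf{Type handling and query count.} You force $D_i$ to have the \emph{prescribed} type $b_i$, so the scaling from $C$ to $D_i$ depends on the unknown type $\tilde B$; your ``query both radius hypotheses'' fix costs two $f$-calls per relevant slot, giving $2kq$ in the worst case rather than $kq$ --- I do not see a bookkeeping argument that removes the factor~$2$. The paper avoids the ambiguity by making the similarity depend only on the \emph{known} $b_i$: set $T_i := T$ (the annulus swap $x\mapsto 1.3^{\pm 1}x$) when $b_i=-1$ and $T_i:=\mathrm{Id}$ otherwise, and let $D_i:=M R_i T_i(C)$. The type of $D_i$ is then a function of $(b_i,\tilde B)$, but since the $b_i$ are i.i.d.\ uniform so are the resulting types, and ``$x\in D_i$?'' reduces to the \emph{single} $f$-query at $M R_i T_i(x)$, giving exactly~$kq$.

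\textbf{Output perturbation.} You select the ``best'' slot $i^*$ and pull back through its similarity, but $\adv'$ cannot name $i^*$ without knowing $C$, so this step is non-constructive as written. The paper instead returns the mixture $p':=\tfrac{1}{k}\sum_i p_i$ on $S_1^{d-1}$ (identity on $S_{1.3}^{d-1}$), with $p_i:=T_i^{-1}R_i^{-1}M^{-1}\,p\,M R_i T_i$, interpreted as a linear combination of transport maps; this yields $\mu(p'^{-1}(C))=\tfrac1k\sum_i\mu(p^{-1}(D_i))\ge \tfrac1k\,\mu\bigl(p^{-1}\bigl(\bigcup_i D_i\bigr)\bigr)$ without ever naming a slot. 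The restriction to $S_1^{d-1}$ together with the event ``$C$ is of inner type'' (probability $1/2$) is what guarantees each $p_i$ is an $\e$-perturbation there (all pull-back scalings contract by $1/1.3$), and it is the source of the loose $0.76$ error probability --- the same accounting you sketch.

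The distributional claim you flag as the main obstacle is indeed the crux; the paper asserts it tersely, and your concern that the pairwise angles among $(R_i'\,\tilde U e_1)_i$ need not reproduce those of the $\mathcal{G}$-sample is exactly the point that requires care in either approach.
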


\begin{proof}
Algorithm~\ref{alg:emulategeneral} defines the protocol for $\adv'$. We will show that this protocol satisfies the statement of the lemma.
\begin{algorithm}[H]
\caption{\textsc{EmulateGeneral}($f, \adv, \mathcal{G}, k$) \hfill $\triangleright$  $f$ is the attacked classifier
\newline \text{ } \hfill $\triangleright$  $\adv$ is an adversary for distribution $\text{Cap}_k^{\mathcal{G}}(0.01)$ }
\label{alg:emulategeneral}
\begin{algorithmic}[1]
    \STATE $T(x) := \begin{cases} 1.3 \cdot x & \mbox{if } \|x\|_2 \leq 1.15 \\ x/1.3 & \mbox{if } \|x\|_2 > 1.15 \end{cases}$
    \STATE $(y_1, \dots, y_k) \sim \mathcal{G}$
    \FOR{i = 1, \dots, k}
        \STATE $R_i :=$ rotation such that $R_i(e_1) = y_i$
        \hfill $\triangleright$ Any rotation satisfying the condition is valid
    \ENDFOR
    \STATE $M \sim O(d)$
    \STATE $b_1, \dots, b_{k} \sim U\{-1,1\}$
    \FOR{i = 1, \dots, k}
        \STATE $T_i := \begin{cases} T & \mbox{if } b_i = -1 \\ \text{Id} & \mbox{if } b_i = +1 \end{cases}$
    \ENDFOR
    \FOR{i = 1, \dots, k}
        \STATE $err_i := \left(f(M(R_i(T_i(x))) = -1 \wedge \|T_i(x)\|_2 > 1.15 \right) \vee \left(f(M(R_i(T_i(x))) = +1 \wedge \|T_i(x)\|_2 \leq 1.15\right)$
    \ENDFOR
    \STATE $err :=  \bigvee_{i \in [k]} err_i$
    \STATE $p := $ Simulate $\adv$, to $x$ answer 
    $\begin{cases} 
    +1 & \mbox{if } \|x\|_2 \leq 1.15 \wedge (err = \text{True})\\ 
    -1 & \mbox{if } \|x\|_2 \leq 1.15 \wedge (err = \text{False})\\ 
    -1 & \mbox{if } \|x\|_2 > 1.15 \wedge (err = \text{True})\\ 
    +1 & \mbox{if } \|x\|_2 > 1.15 \wedge (err = \text{False}) 
    \end{cases}$ \label{line:bigor}
    \FOR{i = 1, \dots, k}
        \STATE $p_i := T_i^{-1} \circ R_{i}^{-1} \circ M^{-1} \circ p \circ M \circ R_i \circ T_i$
    \ENDFOR
    \STATE {\bfseries Return} $p' := \frac{1}{k} \sum_{i=1}^{k} p_i \Big|_{S_1^{d-1}} + \text{Id }\Big|_{S_{1.3}^{d-1}} $ 
    \hfill $\triangleright$ understood as a linear combination of transport maps
\end{algorithmic}
\end{algorithm}

First observe that $\adv'$ asks at most $kq$ queries as every query of $\adv$ is multiplied $k$ times (see line~\ref{line:bigor} of Algorithm~\ref{alg:emulategeneral}). Observe that $p'$ is a well defined $\e$-perturbation as all $p_i$'s are $\e$-perturbations when restricted to $S_1^{d-1}$. It follows from the fact that all $p_i$'s are of the form $F^{-1} \circ p \circ F$ where  $F$ is a composition of an isometry and either $T$ or the identity. This implies that for all $x \in S_1^{d-1}$ we have $\|x - F^{-1} \circ p \circ F(x)\|_2 \leq \e$. Let $C$ be the hidden spherical cap. Observe that: 
\[ \bigcup_{i=1}^{k} M(R_i(T_i(C)))\]
is distributed according to $\text{Cap}_k^{\mathcal{G}}$, as the relative positions of normal vectors of $M(R_1(C)), M(R_2(C)), \dots, M(R_k(C))$ are distributed according to the process: generate $(y_1', \dots, y_k') \sim \mathcal{G}$, $M' \sim O(d)$, return $M'((y_1', \dots, y_k'))$. 
Thus by the fact that $\adv$ succeeds with $\alpha = 1/2$ we know that with probability at least $0.99$: 

\[\mu\left(p^{-1} \left(\bigcup_{i=1}^{k} M(R_i(T_i(C))) \right) \right) \geq \frac{1}{2} \cdot \mu \left( \left(\bigcup_{i=1}^{k} M(R_i(T_i(C)))\right) + B_{\e} \right) \text{.}\]
If $C \subseteq B_{1.5}$ then:
\[\mu \left(\left(\frac{1}{k} \sum_{i=1}^k p_i\right)^{-1}(C) \right) = \frac{1}{k} \sum_{i=1}^k \mu \left( p^{-1}(M(R_i(T_i(C)))) \right) \geq \frac{1}{k} \cdot \mu\left(p^{-1} \left(\bigcup_{i=1}^{k} M(R_i(T_i(C))) \right) \right)\]
Combining the two bounds we get that if $C \subseteq B_{1.5}$ then with probability at least $0.99$:
\begin{equation}\label{eq:pprimelwrbnd}
\mu \left(p'^{-1}(C) \right) \geq \frac{1}{2k} \cdot \mu \left( \left(\bigcup_{i=1}^{k} M(R_i(T_i(C)))\right) + B_{\e} \right)
\end{equation}
We note that with probability at least $(1 - 2^{-k}) \cdot 1/2$ we have that $C \subseteq B_{1.5}$ and there exists $i_0 \in [k]$ such that $T_{i_0}(C) \subseteq B_{1.5}$ as the two events are independent. This means that with probability at least $1/4$:
\begin{equation}\label{eq:advriskbndgeneral}
\mu \left( \left(\bigcup_{i=1}^{k} M(R_i(T_i(C)))\right) + B_{\e} \right) \geq 1/4 \text{,}
\end{equation}
as $\mu(M(R_{i_0}(T_{i_0}(C))) + B_\e) = \mu(S_1^{d-1})/2$.
Combining~\eqref{eq:pprimelwrbnd} and \eqref{eq:advriskbndgeneral} and using the union bound we get that with probability of at least $0.24$:
\[\mu(p'^{-1}(C)) \geq \frac{1}{8k} \text{,}\]
which is equivalent to $\adv'$ succeeding on $\text{Cap}(0.01/k)$ with approximation constant of at least $ \frac{1}{2k}$, error probability of at most $0.76$ for the same $\e$.

\end{proof}

The following tail bound will be useful.
\begin{lemma}\label{lemma:gaussianbounds}
Let $X$ be a zero-mean Gaussian with variance $\sigma^2$. Then for every $t \geq 0$:
\[\frac{1}{\sqrt{2 \pi}} \cdot \left( \frac{1}{t} - \frac{1}{t^3} \right) \cdot e^{-t^2/2} \leq \mathbb{P}_{X \sim \mathcal{N}(0,\sigma^2)}[X \geq \sigma \cdot t] \leq \frac{1}{\sqrt{2 \pi}} \cdot \frac{1}{t} \cdot e^{-t^2/2}\]
\end{lemma}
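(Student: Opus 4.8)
The plan is to reduce everything to the standard normal and then use two elementary comparison arguments for the integral $Q(t) := \tfrac{1}{\sqrt{2\pi}}\int_t^\infty e^{-x^2/2}\,dx$. First I would write $X = \sigma Z$ with $Z \sim \mathcal{N}(0,1)$, so that $\mathbb{P}[X \geq \sigma t] = \mathbb{P}[Z \geq t] = Q(t)$, and it suffices to prove $\tfrac{1}{\sqrt{2\pi}}\bigl(\tfrac{1}{t} - \tfrac{1}{t^3}\bigr)e^{-t^2/2} \leq Q(t) \leq \tfrac{1}{\sqrt{2\pi}}\,\tfrac{1}{t}\,e^{-t^2/2}$ for $t>0$ (the case $t=0$, and more generally $t\le 1$, makes the left inequality vacuous since its left-hand side is $\leq 0 \leq Q(t)$, and the right-hand side is $+\infty$ at $t=0$).

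For the upper bound, on the range of integration we have $x \geq t$, hence $1 \leq x/t$, so $\int_t^\infty e^{-x^2/2}\,dx \leq \int_t^\infty \tfrac{x}{t} e^{-x^2/2}\,dx = \tfrac{1}{t}\bigl[-e^{-x^2/2}\bigr]_t^\infty = \tfrac{1}{t} e^{-t^2/2}$, which after multiplying by $\tfrac{1}{\sqrt{2\pi}}$ is exactly the claimed upper bound.

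For the lower bound I would exhibit an explicit antiderivative: a direct differentiation shows
\[
\frac{d}{dx}\left[\left(-\frac{1}{x} + \frac{1}{x^3}\right)e^{-x^2/2}\right] = \left(1 - \frac{3}{x^4}\right)e^{-x^2/2} \leq e^{-x^2/2}
\]
for all $x>0$. Integrating the inequality from $t$ to $\infty$ and evaluating the antiderivative at the endpoints gives $\int_t^\infty e^{-x^2/2}\,dx \geq \bigl(\tfrac{1}{t} - \tfrac{1}{t^3}\bigr)e^{-t^2/2}$, and multiplying by $\tfrac{1}{\sqrt{2\pi}}$ yields the lower bound.

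There is no real obstacle here; the only point requiring a moment's care is the bookkeeping at $t=0$ and for $t \in (0,1]$, where the stated lower bound is non-positive and therefore automatic, so the substantive content of the lemma concerns $t>1$. The rest is the two one-line comparisons above.
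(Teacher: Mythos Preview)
Your proof is correct and is the standard textbook argument for these Mills-ratio bounds. The paper itself does not supply a proof of this lemma; it simply records the inequality as a well-known tail bound and uses it elsewhere, so there is nothing to compare against and your two one-line comparisons are exactly what is needed.
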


In  Lemma~\ref{lem:reductioniid} and Lemma~\ref{lem:reductiongeneral} we showed that the QC of $\text{Cap}_k^{\text{i.i.d.}}$ and $\text{Cap}_k^{\mathcal{G}}$ can be lower-bounded in terms of the QC of $\text{Cap}$. We now show an upper bound $\Theta(d)$ for the the QC of $\text{Cap}$. Further, we give the proof for a lower bound of $\Theta(d)$ for $\text{Cap}(0.01)$. The summary of these results is presented in Table~\ref{tab:sphereslwrbnds}.

The upper-bound for $\text{Cap}$, that we are going to show, holds even if we restrict the adversary to be non-adaptive. I.e., the bound holds even if we require the adversary to declare the set of queries up front.

\begin{definition}[\textbf{Non-adaptive query-bounded adversary}] \label{def:queryboundedadversarynonadapt}
For $\e \in \mathbb{R}_{\geq 0}$ and $f : \R^d \xrightarrow{} \{-1, 1\}$ a $q$-bounded adversary with parameter $\e$ is a deterministic algorithm $\adv$ that asks at most $q \in \N$ \textbf{non-adaptive} queries of the form $ f(x) \stackrel{?}{=} 1$ and outputs an $\e$-perturbation $\adv(f) : \R^d \xrightarrow{} \R^d$.
\end{definition}

\begin{lemma}[\textbf{Upper bound for $\text{Cap}$}]\label{lem:upperbnd}
For every $d$ bigger than an absolute constant there exists a non-adaptive $\Theta(d)$-bounded adversary $\adv$ that succeeds on $\text{Cap}(0.01)$ with approximation constant $1/2$, error probability $0.01$ for $\e = \tau(0.01)$.
Moreover $\adv$ can be implemented in $O(d^2)$ time.
\end{lemma}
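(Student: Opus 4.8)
The attack will (i) spend $\Theta(d)$ non-adaptive queries to localize the hidden cap's \emph{direction} to within a small absolute-constant angle, and (ii) attack with the straight-line perturbation $p(x)=x+\e\,\hat y$, whose effectiveness degrades only by a factor $\cos\beta$ under an angular error $\beta$ in $\hat y$. Write $\e=\tau(0.01)$; writing a uniform point of $S_1^{d-1}$ as $g/\|g\|$ for $g$ standard Gaussian and applying Lemma~\ref{lemma:gaussianbounds} gives $c_1/\sqrt d\le\e\le c_2/\sqrt d$ for absolute constants $0<c_1<c_2$ and all large $d$. An error set drawn from $\text{Cap}(0.01)$ is, up to a rotation $y\in S_1^{d-1}$ and a bit $b\in\{-1,1\}$, either $E_-(1)$ (whose trace on $S_1^{d-1}$ is $\{x:\langle x,y\rangle\ge\e\}$) or $E_+(1)=1.3\cdot E_-(1)$. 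A direct computation shows that for $x\in S_1^{d-1}$ at angle $\theta$ from $y$ (so $x=(\cos\theta)y+(\sin\theta)u$), the nearest point of $E_-(1)$ is $(\sin\theta)u+\e\,y$, at distance $\max(\e-\cos\theta,0)$; hence $(E_-(1)+B_\e)\cap S_1^{d-1}$ is exactly the hemisphere $\{\langle x,y\rangle\ge 0\}$ and $AR(f,\e)=1/4$ when $b=-1$, while the $1.3$-scaled computation gives $AR(f,\e)=\tfrac12\,\mathbb{P}_{x\sim U[S_1^{d-1}]}[\langle x,y\rangle\ge 0.3\e/1.3]=\Theta(1)$ when $b=+1$. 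In both cases $AR(f,\e)\le 1/4$, so it suffices to move $\mathcal{D}$-mass $\ge\tfrac12 AR(f,\e)$ into $E$.

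\textbf{The attacker.} Query $f$ at $N=\Theta(d)$ i.i.d.\ uniform points of $S_1^{d-1}$ and at $N$ i.i.d.\ uniform points of $S_{1.3}^{d-1}$, all chosen in advance (non-adaptive); since $\adv$ knows $h$, each answer reveals whether the queried point lies in $E$. With probability $1-e^{-\Omega(d)}$ the queried set meets $E$ on exactly one sphere (Chernoff on $\mathrm{Bin}(N,0.01)$), which determines $b$; assume $b=-1$ (the $b=+1$ case is handled symmetrically through the $1.3$-scaling). Then the queried points $q_1,\dots,q_{|I|}$ lying in $E$ are i.i.d.\ uniform on the cap $\{x\in S_1^{d-1}:\langle x,y\rangle\ge\e\}$, with $|I|\ge 0.005N=\Theta(d)$. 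Put $\hat m=\tfrac1{|I|}\sum_i q_i$ and $\hat y=\hat m/\|\hat m\|$. By rotational symmetry of the cap, $\mathbb{E}[q_i]=c\,y$ with $c=\mathbb{E}[\langle X,y\rangle\mid\langle X,y\rangle\ge\e]\ge\e\ge c_1/\sqrt d$, and $\mathbb{E}\|\hat m-c\,y\|^2=(1-c^2)/|I|\le 1/|I|=O(1/d)$ because $\|q_i\|=1$; a vector Bernstein bound (summands of norm $\le 2$) upgrades this to $\|\hat m-c\,y\|\le O(1/\sqrt d)$ with probability $1-e^{-\Omega(d)}$. Taking the constant in $N=\Theta(d)$ large enough forces $\|\hat m-c\,y\|\le c/4$, whence $\sin\beta\le 1/2$ for $\beta:=\angle(\hat y,y)$, i.e.\ $\beta$ is below an absolute constant. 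The attacker outputs the $\e$-perturbation $p(x):=x+\e\,\hat y$.

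\textbf{Why $p$ succeeds, and derandomization.} If $x\in S_1^{d-1}$ with $\langle x,y\rangle\ge\e(1-\cos\beta)$, then $\langle x+\e\hat y,\,y\rangle=\langle x,y\rangle+\e\cos\beta\ge\e$ and $\|x+\e\hat y\|\in[1-\e,1+\e]\subseteq(1.15/1.3,\,1.15]$, so $x+\e\hat y\in E_-(1)$; hence $p^{-1}(E_-(1))\supseteq\{x\in S_1^{d-1}:\langle x,y\rangle\ge\e(1-\cos\beta)\}$. Since $\beta$ is a small constant, $\e(1-\cos\beta)$ is a small constant times $\e$, and Lemma~\ref{lemma:gaussianbounds} gives that the $U[S_1^{d-1}]$-measure of that cap is at least an absolute constant strictly exceeding $1/4$, so $\mu(p^{-1}(E_-(1)))>1/8\ge\tfrac12 AR(f,\e)$. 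In the case $b=+1$ the same $p$ gives $\mu(p^{-1}(E_+(1)))=\tfrac12\,\mathbb{P}_{x\sim U[S_1^{d-1}]}[\langle x,y\rangle\ge(1.3-\cos\beta)\e/1.3]$, which by Lemma~\ref{lemma:gaussianbounds} is $\ge\tfrac12 AR(f,\e)$ once $\beta$ is below an absolute constant, since the two quantities are the same Gaussian tail evaluated at arguments within a constant factor of each other. Thus the procedure succeeds with probability $1-e^{-\Omega(d)}\ge 0.99$ over the joint randomness of the query points and $E\sim\text{Cap}(0.01)$; averaging over the query points fixes a deterministic, non-adaptive query set with success probability $\ge 0.99$ over $E$ alone, as required by Definition~\ref{def:queryboundedadversarynonadapt}. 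Generating the $\Theta(d)$ Gaussian query points and forming $\hat m$ each cost $O(d^2)$; everything else is $O(d)$.

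\textbf{Main obstacle.} The key point — and the reason the bound is $\Theta(d)$ rather than $\Theta(d^2)$ — is the choice of perturbation: the straight-line push $x\mapsto x+\e\hat y$ satisfies $\langle x+\e\hat y,y\rangle=\langle x,y\rangle+\e\cos\beta$, so it is robust to a \emph{constant} angular error $\beta$, whereas projecting $x$ onto the cap along $S_1^{d-1}$ would require $\hat y$ accurate to angle $O(1/\sqrt d)$ and hence $\Theta(d^2)$ queries. Given that, a constant-angle estimate of the direction is enough, and the crude empirical mean $\hat m$ delivers it at $|I|=\Theta(d)$, because the signal $\|c\,y\|^2=\Theta(1/d)$ already beats the noise $\mathbb{E}\|\hat m-c\,y\|^2=O(1/d)/|I|$. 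The remaining effort is purely the bookkeeping that converts the Gaussian heuristics for $\tau(\cdot)$, for $c$, and for the several spherical-cap probabilities into rigorous inequalities via Lemma~\ref{lemma:gaussianbounds} and the concentration of $\|g\|$.
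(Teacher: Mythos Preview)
Your approach is essentially the paper's: sample $\Theta(d)$ uniform points from each sphere, estimate the hidden direction by the empirical mean of the misclassified queries, perturb in that direction, and derandomize by averaging over the query set. The only real difference is the perturbation: the paper keeps $p(x)$ on the sphere via $\text{argsup}_{x'\in S_1^{d-1},\,\|x-x'\|\le\e}\langle x',v\rangle$, whereas you use the straight-line push $p(x)=x+\e\hat y$, which lands in the thick error region $E_\pm(1)$; both succeed with only a constant angular error in $\hat y$, so your closing remark that ``projecting onto the cap along $S_1^{d-1}$ would require $\hat y$ accurate to angle $O(1/\sqrt d)$'' is misleading (though harmless to your own argument).

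One minor overclaim: vector Bernstein gives $\|\hat m-cy\|\le s/\sqrt{|I|}$ with probability $1-e^{-\Omega(s^2)}$, so with $|I|=\Theta(d)$ and the target $\|\hat m-cy\|\le c/4=\Theta(1/\sqrt d)$ you obtain probability $1-e^{-\Omega(1)}$ (the hidden constant growing with the constant in $N=\Theta(d)$), not $1-e^{-\Omega(d)}$. This still reaches $\ge 0.99$ once that constant is large enough, so the lemma goes through; the paper gets the stronger $1-e^{-\Omega(d)}$ only by modelling $U[S_1^{d-1}]$ as $\mathcal N(0,1/d)^d$ and invoking a $\chi^2$ tail.
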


\begin{proof}
We will first show that there exists a randomized $\adv$ that satisfies the statement of the Lemma. This adversary uses Algorithm~\ref{alg:upperbnd} invoked with $s = \Theta(d)$ as its protocol. Later we will show how to derandomize the protocol. The adversary we design is more produces adversarial examples only on the support of $\mathcal{D}$. This makes the goal of the adversary harder to achieve.

\paragraph{Randomized algorithm.}

\begin{algorithm}[H]
\caption{\textsc{CapAdversaryRandomized}($f, s, \e$) \hfill $\triangleright$ $f$ is the classifier, $s$ is the number of sampled points per sphere
\newline \text{ } \hfill $\triangleright$  $\e$ is the bound on allowed perturbations}
\label{alg:upperbnd}
\begin{algorithmic}[1]
    \STATE $Q^- := \left\{x_1^-, \dots, x_{s}^-\right\} \text{, where } x_i^-\text{'s are i.i.d.} \sim U[S_1^{d-1}]$ \label{line:gen:qenqmin}
    \STATE $Q^+ := \left\{x_1^+, \dots, x_{s}^+\right\} \text{, where } x_i^+\text{'s are i.i.d.} \sim U[S_{1.3}^{d-1}]$ \label{line:genqplus}
    \STATE $R := \left\{x \in Q^- : f(x) = +1\right\} \cup \left\{x \in Q^+ : f(x) = -1\right\}$ \label{line:askqueries}
    \STATE $v := 1/|R| \cdot  \sum_{x \in R} x$
    \STATE $p(x) := \begin{cases} \text{argsup}_{x' \in S_1^{d-1}, \|x - x'\|_2 \leq \e} \ \langle x' - x, v \rangle & \mbox{if } x \in S_1^{d-1} \\ \text{argsup}_{x' \in S_{1.3}^d, \|x - x'\|_2 \leq \e} \ \langle x' - x, v \rangle & \mbox{if $x \in S_{1.3}^{d-1} $} \end{cases} $
    \STATE {\bfseries Return} $p$
\end{algorithmic}
\end{algorithm}

First notice that $\adv$ is non-adaptive. The queries asked by $\adv$ are from $Q^- \cup Q^+$ which were generated (see lines~\ref{line:gen:qenqmin} and \ref{line:genqplus}) before any queries were asked and, hence, answered were received. 
Note further that $\adv$ is $\Theta(d)$ bounded as she asks $2 \cdot s = \Theta(d)$ queries. 

\paragraph{Run time.} We first remark that $\adv$ can be implemented in $O(d^2)$ time as the run time is dominated by asking $\Theta(d)$ queries and each vector is in $\R^d$. Formally, $p$ is not returned explicitly but one can imagine that $\adv$, after preprocessing that takes $O(d^2)$ time, provides oracle access to $p$ where each evaluation takes time $O(d)$. 

Now we prove that $\adv$ succeeds with probability $0.99$ with approximation constant $1/2$. Let $E$ be the hidden spherical cap that contains all errors of $1$-NN and let $u \in S_1^{d-1}$ be its normal vector. 
First assume that $E \subseteq S_1^{d-1}$. We start by lower-bounding $|R|$. 
For every $i \in [s]$ let $Y_i^-$ be a random variable which is equal to $1$ if $x_i^- \in E$ and $0$ otherwise. Then, by the Chernoff bound, we have that for every $\delta < 1$:
\begin{equation}\label{eq:twosided}
\mathbb{P}\left[ \left|\sum_{i = 1}^{s} Y_i^- - \mathbb{E}\left[\sum_{i=1}^{s} Y_i^-\right] \right| > \delta \cdot \mathbb{E}\left[\sum_{i=1}^{s} Y_i^-\right] \right] 
\leq
2e^{-\frac{\delta^2}{3} \mathbb{E}\left[\sum_{i=1}^{s} Y_i^-\right]}, 
\end{equation}
Noticing that $\mathbb{E}\left[\sum_{i=1}^{s} Y_i^-\right] = s \cdot 0.01$ if we set $\delta = 1/2$ we get that:
\[
\mathbb{P}\left[ \left|\sum_{i = 1}^{s} Y_i^- - s/100 \right| > s/200 \right] 
\leq
2e^{-\frac{\delta^2}{3} \cdot s/100} = 2e^{- s/1200}.
\]
So with probability at least $1 - 2e^{-s/1200}$ we have that:
\begin{equation}\label{eq:numberinElwrbnb}
|R| \geq s/200.
\end{equation}

Now assume $R = \{z_1, \dots, z_k\}$ and observe that for every $z \in R$ we have $\langle z, u \rangle \geq \tau(0.01)$ and note that $z_i$'s are i.i.d. uniformly distributed on $\text{cap}(u,1,\tau(0.01))$. We will model $U[S_1^{d-1}]$ as $\mathcal{N}(0,1/d)^d$. Then we have that:
\begin{align}
\langle u, v \rangle
&= 
\frac{1}{k} 
\left\langle u, \sum_{i = 1}^k z_i \right\rangle \nonumber \\
&= \frac{1}{k} \sum_{i = 1}^k \left\langle u,  z_i \right\rangle \nonumber \\
&\geq \tau(0.01) && \text{as $z_i \in R$} \nonumber \\
&\geq 2.2 / \sqrt{d} &&\text{by Lemma~\ref{lemma:gaussianbounds}} \label{eq:gooddirectioncorrelation}
\end{align}
Moreover if $\Pi$ is the orthogonal projection onto $u^{\perp}$ then $
\Pi(k \cdot v) \sim \mathcal{N}(0,k/d)^{d-1}$ \text{ and}
$
\Pi(v) \sim \mathcal{N}(0,1/(d k))^{d-1}$ thus:
\[
\|\Pi(v)\|_2^2 \sim \frac{1}{d k} \cdot \chi^2(d-1)
\]
So, using standard tail bounds for $\chi^2$ distribution, we get that for all $t \in (0,1)$:
\begin{equation}\label{eq:tailchisquared}
\mathbb{P} \left[\left| \frac{d k}{d-1} \cdot 
\|\Pi(v)\|_2^2 - 1 \right| \geq t \right] \leq 2e^{-(d-1)t^2/8}   
\end{equation}
Moreover observe:
\begin{align}
\left\langle u, \frac{v}{\|v\|_2} \right\rangle
&= \frac{\langle u, v \rangle}{\sqrt{\langle u, v \rangle^2 + \|\Pi(v)\|_2^2}} \nonumber \\
&= \frac{1}{\sqrt{1 + \|\Pi(v)\|_2^2 / \langle u, v \rangle^2}} \nonumber \\
&\geq \frac{1}{\sqrt{1 + \frac{d}{2.2^2} \cdot\|\Pi(v)\|_2^2  }}  &&\text{by~\eqref{eq:gooddirectioncorrelation}} \label{eq:dotprodlwrbnd}
\end{align}

Observe that if $\sphericalangle(u, v/\|v\|_2) = 0$ then $\mu(p^{-1}(\text{cap}(u,1,\tau(0.01)))) \geq 1/5$, as the preimage is exactly $(\text{cap}(u,1,\tau(0.01)) + B_e )\cap S_1^{d-1}$. Moreover $\mu(p^{-1}(\text{cap}(u,1,\tau(0.01))))$ is a continuous function of $\sphericalangle(u, v/\|v\|_2)$. Observe that in a coordinate system where the first basis vector is $v/\|v\|_2$ we have $p(\mu|_{S_1^{d-1}}) \approx (\mathcal{N}(\tau(0.01),1/d), \mathcal{N}(0,1/d), \dots, \mathcal{N}(0,1/d))$. Assume $\sphericalangle(u, v/\|v\|_2) = \alpha$. We bound:
\begin{align*}
&\mu(p^{-1}(\text{cap}(u,1,\tau(0.01)))) \\
&= \int_{-\infty}^{+\infty} \int_{\frac{\tau(0.01) - x_1 \cos(\alpha)}{\sin(\alpha)}}^{+\infty} d/2\pi \cdot e^{-\frac{d}{2}((x_1 - \tau(0.01))^2 + x_2^2)} \ dx_2 \ dx_1\\
&\geq \int_{-\infty}^{+\infty} \int_{\frac{2.4/\sqrt{d} - x_1 \cos(\alpha)}{\sin(\alpha)}}^{+\infty} d/2\pi \cdot e^{-\frac{d}{2}((x_1 - 2.2/\sqrt{d})^2 + x_2^2)} \ dx_2 \ dx_1 && \text{by Lemma~\ref{lemma:gaussianbounds}}\\
&= \int_{-\infty}^{+\infty} \int_{\frac{2.4 - x_1' \cos(\alpha)}{\sin(\alpha)}}^{+\infty} 1/2\pi \cdot e^{-\frac{1}{2}((x_1' - 2.2)^2 + x_2'^2)} \ dx_2' \ dx_1' && \text{$x_1' = x_1 \cdot \sqrt{d}, x_2' = x_2 \cdot \sqrt{d} $}\\ 
\end{align*}
This means that there exists $\alpha \in (0,\pi/2]$ (independent of $d$) such that for all $v$ such that $\sphericalangle(u, v/\|v\|_2) \leq \alpha$ we have $\mu(p^{-1}(\text{cap}(u,1,\tau(0.01)))) \geq 1/6$. Thus by \eqref{eq:dotprodlwrbnd} we get that there exists $\xi > 0$ such that if $\| \Pi(v) \|_2^2 \leq \xi/d$ then $\sphericalangle(u, v/\|v\|_2) \leq \alpha$ and in turn $\mu(p^{-1}(\text{cap}(u,1,\tau(0.01)))) \geq 1/6$.

Setting $k := \frac{2d}{\xi}, t:= 1/2$ in \eqref{eq:tailchisquared} we get that with probability at least $1 - e^{-(d-1)/32}$ we have: 
\[
\|\Pi(v)\|_2^2 \leq \xi/d \text{,}
\]
which in turn means that with probability at least $1 - e^{-(d-1)/32}$:
\begin{equation}\label{eq:recoveryguarantee} 
\mu(p^{-1}(\text{cap}(u,1,\tau(0.01)))) \geq 1/6 \text{.}
\end{equation}
Now combining \eqref{eq:numberinElwrbnb}, \eqref{eq:recoveryguarantee} and the union bound we get that if we set $s := \frac{400d}{\xi}$ then with probability at least $1 - 2e^{-s/200} - e^{-(d-1)/32} = 1 - 2e^{-2d/\xi}-e^{-(d-1)/32}$ we have:
\begin{equation}\label{eq:finalguaranteerand} 
\mu(p^{-1}(\text{cap}(u,1,\tau(0.01)))) \geq 1/6  \text{.}
\end{equation}
This probability is bigger than $0.99$ if $d$ is bigger than an absolute constant that depends on $\xi$. Observing that $\mu(E + B_\e) \geq 1/5$ we conclude that if $E \subseteq S_1^{d-1}$ then if $s = \Theta(d)$ then with probability $0.99$ $\adv$ succeeds on $\text{Cap}$ with approximation constant at least $1/2$. To finish the proof one notices that the case $E \subseteq S_{1.3}^{d-1}$ is analogous. The final constant hidden under $\Theta(d)$ for the number of samples is a maximum of constants for $S_1^{d-1}$ and $S_{1.3}^{d-1}$.

\paragraph{Deterministic algorithm.}
We know show how to derandomize Algorithm~\ref{alg:upperbnd} to design an adversary $\adv_{\text{det}}$. We observe that in Algorithm~\ref{alg:upperbnd} randomness was used only to generate query points $Q^-, Q^+$. Instead of generating the query points randomly we use fixed sets. We define the deterministic adversary, $\adv_{\text{det}}$, as:
\[
\adv_{\text{det}} := \textsc{CapAdversaryDeterministic}(\cdot, Q^-, Q^+) \text{,}
\]
for fixed (for a given $d$) sets $Q^-,Q^+$ that we define next. 

\begin{algorithm}[H]
\caption{\textsc{CapAdversaryDeterministic}($f, Q^-, Q^+, \e$) \newline \text{ } \hfill $\triangleright$ $f$ is the classifier, $Q^-,Q^+$ are query points on $S_1^{d-1}, S_{1.3}^{d-1}$ respectively \newline \text{ } \hfill $\triangleright$  $\e$ is the bound on allowed perturbations}
\label{alg:upperbnddet}
\begin{algorithmic}[1]
    \STATE $R := \left\{x \in Q^- : f(x) = +1\right\} \cup \left\{x \in Q^+ : f(x) = -1\right\}$
    \STATE $v := 1/|R| \cdot  \sum_{x \in R} x$
    \STATE $p(x) := \begin{cases} \text{argsup}_{x' \in S_1^{d-1}, \|x - x'\|_2 \leq \e} \ \langle x' - x, v \rangle & \mbox{if } x \in S_1^{d-1} \\ \text{argsup}_{x' \in S_{1.3}^d, \|x - x'\|_2 \leq \e} \ \langle x' - x, v \rangle & \mbox{if $x \in S_{1.3}^{d-1} $} \end{cases} $
    \STATE {\bfseries Return} $p$
\end{algorithmic}
\end{algorithm}

For $u \in S_1^{d-1}$ let: 
\[
f_u(x) := \begin{cases} -1 & \mbox{if } x \in S_1^{d-1} \setminus \text{cap}(u,1,\tau(0.01) \\ +1 & \mbox{otherwise} \end{cases} \text{.}
\]
We say that an adversary succeeds on $f_u$ if she, run for $f_u$, returns $p$ such that $\mu(p^{-1}(\text{cap}(u,1,\tau(0.01)))) \geq 1/8$. From \eqref{eq:finalguaranteerand} we know that for every $d \in \N_+$, for every $u \in S_1^{d-1}$:
\[
\mathbb{P}_{x_1^-, \dots, x_{400d/\xi}^- \sim U \left[S_1^{d-1} \right]} \left[\adv(f_u,400d/\xi,\e) \text{ succeeds} \right] \geq 1 - 2e^{-2d/\xi} - e^{-(d-1)/32}
\]
Thus we get that for every $d \in \N_+$ that:
\[
\mathbb{P}_{u, x_1^-, \dots, x_{400d/\xi}^- \sim U \left[S_1^{d-1} \right]} \left[\adv(f_u,400d/\xi,\e) \text{ succeeds} \right] \geq 1 - 2e^{-2d/\xi} - e^{-(d-1)/32}
\]
And finally, this means that for every $d \in \N_+$ there exists $Q^-_d \subseteq S_1^{d-1}, |Q^-_d| = 400d/\xi$ such that:
\[
\mathbb{P}_{u \sim U \left[ S_1^{d-1} \right]} \left[\adv_{\text{det}}(f_u,Q^-_d,\emptyset,\e) \text{ succeeds} \right] \geq 1 - 2e^{-2d/\xi} - e^{-(d-1)/32}   
\]
Thus, for $d$ bigger than an absolute constant we get that conditioned on $E \subseteq S_1^{d-1}$ $\adv_{\text{det}}$ run with $Q^- = Q^-_d$ succeeds with probability at least $0.99$ and asks $|Q^-_d| = \Theta(d)$ queries. Analogous argument shows that for every $d \in \N_+$ there exists $Q^+_d \subseteq S_{1.3}^{d-1}, |Q^+_d| = \Theta(d)$ such that the following holds. For every $d$ bigger than an absolute constant conditioned on $E \subseteq S_{1.3}^{d-1}$ $\adv_{\text{det}}$ run with $Q^+ = Q^+_d$ succeeds with probability at least $0.99$. Combining these two results we get that $\adv_{\text{det}}$ satisfies statement of the lemma.
\end{proof}

\begin{remark}
As we have seen in the proof of Lemma~\ref{lem:upperbnd} it was more natural to design an adversary that was randomized. We believe that allowing the adversary to use randomness would not change the results in a fundamental way.
\end{remark}

\lwrbndforcap*


\begin{proof}
To simplify computations we will sometimes approximate the uniform distribution on $S_1^{d-1}$ as a $d$-dimensional normal distribution: $\mathcal{N} \left(0,\frac{1}{d} \right)$. This change is valid as the norm of $\mathcal{N} \left(0,\frac{1}{d} \right)^d$ is closely concentrated around $1$.

\paragraph{Lower-bounding $QC$.} 
$\mathcal{A}$ succeeds on $\text{Cap}(0.01)$ with probability at least $1/3$ this means that if succeeds with probability at least $1/3$ on either $\text{Cap}(0.01)$ conditioned on the error set intersecting $S_1^{d-1}$ or $\text{Cap}(0.01)$ conditioned on the error set intersecting $S_{1.3}^{d-1}$. We first prove the result in the first case.

To use Theorem~\ref{thm:reduction} we think that there is an algorithm $ALG$ for which the distribution of errors coincides with $\text{Cap}(0.01)$ conditioned on the error set intersecting $S_1^{d-1}$. Let's call this distribution $Cap'(0.01)$. 
Note that by definition $AR(ALG(S),\e) = 1/2$. Thus we analyze:
\begin{align}
&\sup_{p : \text{ $\e$-perturbation}} \mathbb{P}_{S \sim \mathcal{D}^m} \left[ \mu(p^{-1}(E(ALG(S)))) \geq (1- \delta) \cdot AR(ALG(S), \e) \right] \nonumber \\
&= \sup_{p : \text{ $\e$-perturbation}} \mathbb{P}_{E \sim \text{Cap}'(0.01)} \left[ \mu(p^{-1}(E)) \geq \frac{1-\delta}{2} \right]   \label{eq:optimizationnew} \text{,} 
\end{align}

for a constant $\delta$ that will be fixed later. Let $p$ be an $\e$-perturbation and $y \in S_1^{d-1}$ be such that $\mu(p^{-1}(\text{cap}(y, 1.15, \tau(0.01)) \setminus B_{1.15/1.3})) \geq \frac{1-\delta}{2}$. We will show that for every $x \in S_1^{d-1}$ if $\sphericalangle(y,x) \in \left[\frac{49\pi}{100},\frac{51\pi}{100} \right]$ then $\mu(p^{-1}(\text{cap}(x, 1.15, \tau(0.01)) \setminus B_{1.15/1.3})) < \frac{1-\delta}{2}$. This will conclude the proof as then:  
\begin{align*}
\mathbb{P}_{E \sim \text{Cap}'(0.01)} \left[ \mu(p^{-1}(E)) \geq 
\frac{1-\delta}{2} \right] 
&\leq 2 \cdot \mu \left(\text{cap} \left(\cdot,1,\arccos \left(\frac{49 \pi}{100} \right) \right) \right) \\
&\leq 2^{-\Omega(d)} &&\text{By Lemma~\ref{lemma:gaussianbounds}}
\end{align*}
combined with \eqref{eq:optimizationnew} and Theorem~\ref{thm:reduction} gives the result.

Now let $x \in S_1^{d-1}$ be such that $\sphericalangle(y,x) \in \left[\frac{49\pi}{100},\frac{51\pi}{100} \right]$. To simplify notation let $C_x := \text{cap}(x, 1.15, \tau(0.01)) \setminus B_{1.15/1.3}, C_y := \text{cap}(y, 1.15, \tau(0.01)) \setminus B_{1.15/1.3}$. Now define: 
\[
I := \left\{z \in S_{1}^{d-1} \ \middle| \ d(z,C_y) \leq \e \wedge d(z,C_x) \leq \e \wedge d(z,C_x \cap C_y) > \e \right\} \text{,}
\]
where $d$ denotes the $\ell_2$ distance between sets. By Lemma~\ref{lemma:gaussianbounds} we have:
\begin{equation}\label{eq:thresholdbounds}
2.2/\sqrt{d} \leq \tau(0.01) \leq 2.4/\sqrt{d}
\end{equation}
Now observe that:
\begin{align} 
I 
&\supseteq \left\{ z \in S_1^{d-1} \ \middle| \ \langle z,y \rangle \geq 0 \wedge \langle z,x \rangle \geq 0 \wedge \left\langle z, \frac{x+y}{\|x+y\|_2} \right\rangle  < \frac{2.2}{\sqrt{d} \cdot \cos(\sphericalangle(y,x)/2)} - \frac{2.4}{\sqrt{d}} \right\} \nonumber \\
&\supseteq \left\{ z \in S_1^{d-1} \ \middle| \ \langle z,y \rangle \geq 0 \wedge \langle z,x \rangle \geq 0 \wedge \left\langle z, \frac{x+y}{\|x+y\|_2} \right\rangle  < \frac{1}{20\sqrt{d}} \right\} =: \widehat{I} \label{eq:Isupersetihat}
\end{align}
where in the first transition we used \eqref{eq:thresholdbounds} 
and in the second transition we used that $\sphericalangle(y,x) \in \left[\frac{49\pi}{100},\frac{51\pi}{100} \right]$. Note that $\mu \left(\widehat{I} \right)$ is minimized for $\sphericalangle(y,x) =\frac{51\pi}{100}$. Thus:
\begin{align}
&\mu \left(\widehat{I} \right) \nonumber \\
&\geq \int_{0}^{\infty} \int_{\tan(\pi/100) \cdot x_1}^{\infty} d/2\pi \cdot e^{-\frac{d}{2}(x_1^2 + x_2^2)} \cdot \mathds{1}\left[x_1 \cos \left(\frac{51\pi}{200}\right) + x_2 \sin\left(\frac{51\pi}{200}\right) < \frac{1}{20\sqrt{d}}\right] \ dx_2 \ dx_1 \nonumber \\
&= \int_{0}^{\infty} \int_{\tan(\pi/100) \cdot x_1}^{\infty} 1/2\pi \cdot e^{-\frac{1}{2}(x_1^2 + x_2^2)} \cdot \mathds{1}\left[x_1 \cos \left(\frac{51\pi}{200}\right) + x_2 \sin\left(\frac{51\pi}{200}\right) < \frac{1}{20}\right] \ dx_2 \ dx_1 \text{,} \label{eq:ihatlwrbnd}
\end{align}
where the first equality comes from integration by substitution. The integral from \eqref{eq:ihatlwrbnd} is positive, which means that there exists $\delta > 0$ such that $\mu(\widehat{I}) > \delta$. Combining that with \eqref{eq:Isupersetihat} we get that $\mu(I) > \delta$. Observe that by definition of $I$ for every $ z \in I$ we have that at most one of $p(z) \in C_x$, $p(z) \in C_y$ can be true. Thus, using the fact that $\mu(C_x + B_\e) = \mu(C_y + B_\e) = 1/2$, we get that:
\begin{equation}\label{eq:minimumsmall}
\min(\mu(p^{-1}(C_x)), \mu(p^{-1}(C_y))) < 1/2 - \delta/2 \text{.}
\end{equation}
This ends the proof as by assumption we know that $\mu(p^{-1}(C_y)) \geq 1/2 - \delta/2$, so by \eqref{eq:minimumsmall} we get that $\mu(p^{-1}(C_x)) < 1/2 - \delta/2$. The proof for the other case is analogous.
\end{proof}

Note that Lemma~\ref{lem:lwrbndforcap} is equivalent to the statement of Conjecture~\ref{con:onecap} for $k = 1$. 

\begin{conjecture}[\textbf{$\text{Cap}$ conjecture}]\label{con:onecap}
For every $k \in [d]$ if a $q$-bounded adversary $\adv$ succeeds on $\text{Cap}(0.01/k)$ with approximation constant  $ \geq \frac{1}{2k}$, error probability $ \leq 1 - \frac{1}{3k}$ for $\e$ such that $ \text{cap}(\cdot,1,\tau(0.01/k)) + B_{\e} = \text{cap}(\cdot,1,0) $. Then 
\[q \geq \Theta(d) \text{.}\]
\end{conjecture}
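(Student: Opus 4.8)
The plan is to reduce Conjecture~\ref{con:onecap} to a high‑dimensional geometric estimate by means of Theorem~\ref{thm:reduction}, and then to try to push the mechanism behind the proof of Lemma~\ref{lem:lwrbndforcap} (which is precisely the case $k=1$) from ``one good direction'' to ``few good directions''. Fix $k\in[d]$ and a $q$‑bounded adversary $\adv$ with approximation constant $\ge\frac1{2k}$ and error probability $\le 1-\frac1{3k}$ for the stated $\e$. As in the proof of Lemma~\ref{lem:lwrbndforcap}, treat $\text{Cap}(0.01/k)$ as the error‑set distribution of a learning algorithm and apply Theorem~\ref{thm:reduction}; by the remark following it the reduction holds for every approximation constant (here $\frac1{2k}$) and for $\kappa=1-\frac1{3k}$, giving
\[
q\ \ge\ \log\!\left(\frac{1/(3k)}{\displaystyle\sup_{p\,:\,\text{$\e$-perturbation}}\ \mathbb{P}_{E\sim\text{Cap}(0.01/k)}\!\left[\mu\big(p^{-1}(E)\big)\ \ge\ \tfrac1{2k}\,AR(E,\e)\right]}\right).
\]
Since $\e$ is chosen exactly so that a cap of $\mu$‑mass $0.01/k$ dilates to a hemisphere, one checks $AR(E,\e)=\Theta(1)$ for both shapes $E_-(k)$ and $E_+(k)$; hence it suffices to prove that for every $\e$‑perturbation $p$ the set of good directions
\[
G_p\ :=\ \big\{\,y\in S_1^{d-1}\ :\ \mu\big(p^{-1}(C_y)\big)\ \ge\ c/k\,\big\}\qquad(C_y=\text{the rotation of }E_-(k)\text{ or }E_+(k)\text{ to normal }y)
\]
satisfies $\mu(G_p)\le 2^{-\Omega(d)}$ for some absolute constant $c>0$: since $k\le d$, this exponential saving dominates the polynomial loss $1/(3k)$ in the numerator and yields $q\ge\Theta(d)$.

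For $k=1$ this is exactly what Lemma~\ref{lem:lwrbndforcap} establishes: if $\sphericalangle(y,y')\approx\pi/2$ then the set of points lying within $\e$ of both $C_y$ and $C_{y'}$ but not within $\e$ of their intersection has $\mu$‑mass at least a constant $\delta>0$, which forces $\mu(p^{-1}C_y)+\mu(p^{-1}C_{y'})\le\mu(C_y+B_\e)+\mu(C_{y'}+B_\e)-\delta$; since the $k=1$ threshold $\tfrac12(1-\lambda)$ exceeds half of this quantity as soon as $\lambda<\delta$, $G_p$ cannot contain two directions at angle in $[0.49\pi,0.51\pi]$, so $G_p$ is contained in two caps of angular radius $<0.49\pi$ and $\mu(G_p)\le 2^{-\Omega(d)}$. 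For general $k$ the same ``ambiguity set'' computation still gives $\mu(p^{-1}C_y)+\mu(p^{-1}C_{y'})\le\mu(C_y+B_\e)+\mu(C_{y'}+B_\e)-\delta$ whenever $\sphericalangle(y,y')\approx\pi/2$, but now the threshold $c/k$ lies far below half of it, so two good directions at angle $\approx\pi/2$ are no longer excluded. The intended replacement is a dispersion bound: $G_p$ should contain only $\mathrm{poly}(d)$ ``essentially distinct'' good directions. One ingredient is available — the thin caps $C_{y_1},\dots,C_{y_N}$ are nearly disjoint for spread‑out $y_i$'s and their $p$‑preimages are genuinely disjoint, and more quantitatively, if $y_1,\dots,y_N$ are pairwise within $O(\tau(0.01/k)^2)$ of orthogonal then Cauchy--Schwarz applied to $\#\{i:\langle p(z),y_i\rangle\ge\tau(0.01/k)\}$ yields $\sum_i\mu(p^{-1}C_{y_i})\le O(\tau(0.01/k)^{-2})=O(d/\log k)$, so such a family inside $G_p$ has size $O(kd/\log k)=\mathrm{poly}(d)$. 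One would then want a covering/concentration step: any subset of $S_1^{d-1}$ of mass $>2^{-\Omega(d)}$ contains a super‑polynomially large family of this kind, contradicting the previous bound and forcing $\mu(G_p)\le 2^{-\Omega(d)}$.

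The hard part — and the reason the statement is only a conjecture — is making this last step hold uniformly over all $\e$‑perturbations $p$. Two things defeat the easy tools. Because the threshold $\Theta(1/k)$ is only a constant factor above the mean $\mathbb{E}_{y}[\mu(p^{-1}C_y)]=0.01/k$, Markov and higher‑moment bounds on $\mu(G_p)$ deliver at best a constant, never $2^{-\Omega(d)}$, so a genuine high‑dimensional geometric input is needed. And an adversarial $p$ can be very badly behaved: the relevant $\e=\Theta(\sqrt{\log k}/\sqrt d)$ is large enough that a band of width $2\e$ around a great circle already carries $\mu$‑mass $1-o(1)$, so $p$ can ``fold'' almost the entire sphere onto a very thin set, and the pushforward $p_\#\mu$ can be hugely concentrated in a handful of directions. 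Controlling $G_p$ then amounts to proving that $p_\#\mu$ cannot be simultaneously concentrated in super‑polynomially many directions — a $W_\infty$‑stable isoperimetric‑type statement for the uniform measure on $S_1^{d-1}$ that we do not currently know how to prove; only the case $k=1$, i.e.\ Lemma~\ref{lem:lwrbndforcap}, is presently available.
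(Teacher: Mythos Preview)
The statement is a \emph{conjecture}: the paper does not supply a proof, and explicitly remarks only that Lemma~\ref{lem:lwrbndforcap} is the special case $k=1$. So there is no ``paper's own proof'' to compare your proposal against.

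Your write-up is honest about this: you set up the reduction via Theorem~\ref{thm:reduction} correctly, you recover the $k=1$ mechanism from Lemma~\ref{lem:lwrbndforcap} (the ambiguity-set argument forcing $G_p$ into two caps of angular radius $<0.49\pi$), and then you state clearly where the argument breaks for general $k$ --- namely that the threshold $\Theta(1/k)$ is only a constant factor above the mean, so moment methods give nothing, and that an adversarial $p$ with $\e=\Theta(\sqrt{\log k/d})$ can fold most of the sphere onto a thin set. This diagnosis is sound and matches why the paper leaves the statement as a conjecture. Your proposed dispersion route (bounding the number of near-orthogonal good directions via Cauchy--Schwarz on $\#\{i:\langle p(z),y_i\rangle\ge\tau\}$, then appealing to a covering argument) is a plausible strategy, but as you yourself say, the covering/concentration step --- that any $G_p$ of mass $>2^{-\Omega(d)}$ must contain a super-polynomial near-orthogonal family --- is exactly the missing $W_\infty$-stable isoperimetric input, and you do not supply it. So what you have is a correct reduction and a reasonable plan, not a proof; this is the same state of affairs as in the paper.
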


\newpage
\section{Figures}
\label{sec:figures}
In Figure~\ref{fig:decisionboundaryknnapp}, similar to Figure~\ref{fig:decisionboundaryknn}, we present visualizations of decision boundaries for $1$-NN. Each subfigure represents a random decision boundary for a different sample $S \sim \mathcal{D}^m$. The aim of these visualizations is to give an intuition for why Theorem~\ref{thm:knn} is true. 
\begin{figure}[H]
  \centering
  \includegraphics[width=0.85\textwidth]{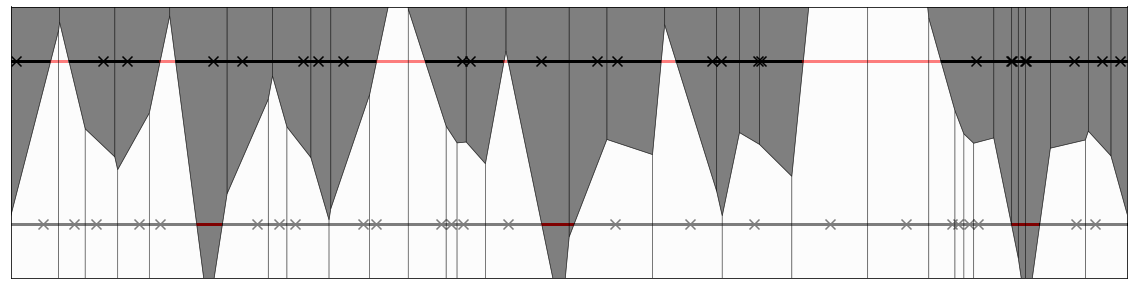}
  \includegraphics[width=0.85\textwidth]{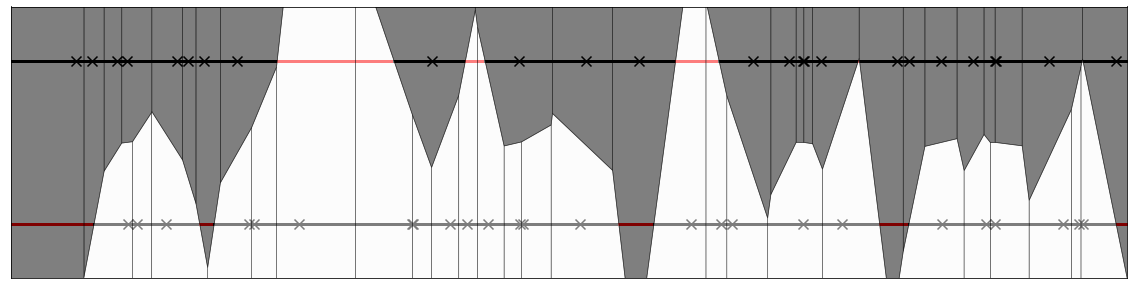}
  \includegraphics[width=0.85\textwidth]{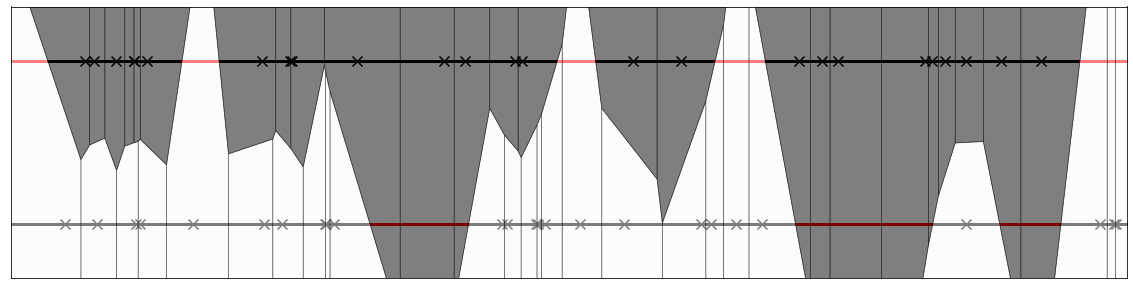}
  \includegraphics[width=0.85\textwidth]{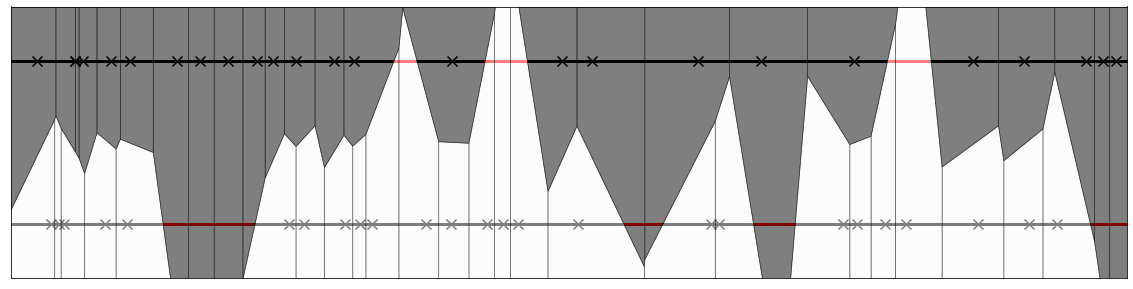}
  \includegraphics[width=0.85\textwidth]{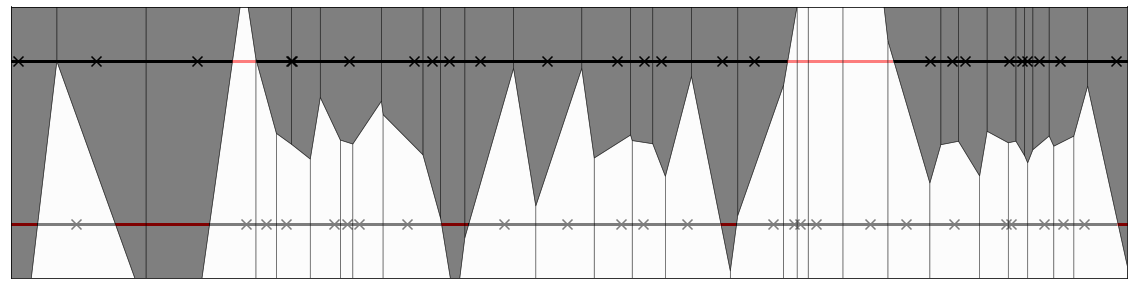}
  \caption{Random decision boundaries of $1$-NN for $T_{\text{intervals}}$.}
  \label{fig:decisionboundaryknnapp}
\end{figure}
\end{appendix}

\end{document}